\newtheorem{theorem}{Theorem}
\newtheorem{proposition}{Proposition}
\newtheorem*{theorem*}{Theorem}
\newtheorem*{proposition*}{Proposition}
\newtheorem{lemma}{Lemma}
\pgfplotsset{compat=1.16}
\definecolor{light_gray}{HTML}{f0f0f0}
\definecolor{mid_gray}{HTML}{d9d9d9}
\definecolor{light_blue}{HTML}{306EFF}
\definecolor{light_orange}{HTML}{F87217}
\definecolor{neur_purpl}{HTML}{C099F0}
\definecolor{outt_purpl}{HTML}{5E4592}
\definecolor{neur_green}{HTML}{A8D8C2}
\definecolor{outt_green}{HTML}{278270}
\title{On the convergence of group-sparse autoencoders}
\date{}
\DeclareMathOperator*{\argmin}{\arg\min}
\author{Emmanouil Theodosis\\
	School of Engineering and Applied Sciences\\
	Harvard University\\
	Cambridge, MA 02138\\
	\texttt{etheodosis@gharvard.edu}\\
	\And
    Bahareh Tolooshams\\
    School of Engineering and Applied Sciences\\
    Harvard University\\
    Cambridge, MA 02138\\
    \And
    Pranay Tankala\\
    School of Engineering and Applied Sciences\\
    Harvard University\\
    Cambridge, MA 02138\\
    \And
    Abiy Tasissa\\
    Department of Mathematics\\
    Tufts University\\
    Medford, MA 02155\\
    \And
    Demba Ba\\
    School of Engineering and Applied Sciences\\
    Harvard University\\
    Cambridge, MA 02138\\
}
\begin{document}
\maketitle
\begin{abstract}
	Recent approaches in the theoretical analysis of model-based deep learning architectures have studied the convergence of gradient descent in shallow ReLU networks that arise from generative models whose hidden layers are sparse. Motivated by the success of architectures that impose structured forms of sparsity, we introduce and study a group-sparse autoencoder that accounts for a variety of generative models, and utilizes a  group-sparse ReLU activation function to force the non-zero units at a given layer to occur in blocks. For clustering models, inputs that result in the same group of active units belong to the same cluster. We proceed to analyze the gradient dynamics of a shallow instance of the proposed autoencoder, trained with data adhering to a group-sparse generative model. In this setting, we theoretically prove the convergence of the network parameters to a neighborhood of the generating matrix. We validate our model through numerical analysis and highlight the superior performance of networks with a group-sparse ReLU compared to networks that utilize traditional ReLUs, both in sparse coding and in parameter recovery tasks. We also provide real data experiments to corroborate the simulated results, and emphasize the clustering capabilities of structured sparsity models.
\end{abstract}

\keywords{Deep learning \and gradient dynamics \and autoencoders \and proximal operators \and convergence}

\section{Introduction}
\label{sec:intro}

Model-based learning approaches \citep{BJP+17,SE19,SWE+20,TTT+20} address one of the fundamental problems of deep learning; that is, current high-performing architectures lack explainability. The model-based learning paradigm addresses this by invoking domain knowledge in order to constrain the neural architectures, thus making them amenable to interpretation. Within this context, \emph{unfolded networks} \citep{HRW14,TDB18,MLE20}, popularized by the seminal work of \citet{GL10}, unroll the steps of iterative optimization algorithms to form a neural network. Compared to their unconstrained counterparts, these structured architectures, which combine ideas from the signal processing and deep learning communities, enjoy a de facto reduction in the number of trainable parameters while maintaining competitive performance \citep{SE19, TST+20}.

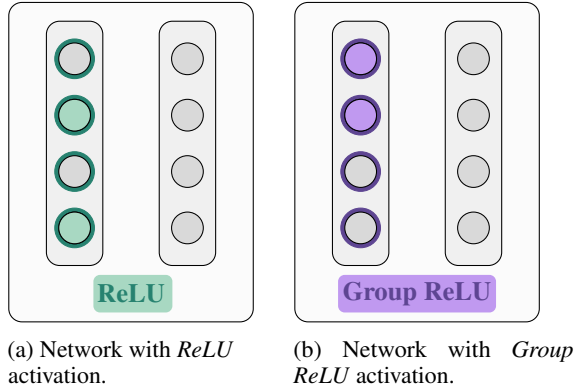
\begin{figure}[t]
    \centering
    \begin{subfigure}[t]{0.225\textwidth}
        \begin{tikzpicture}
            \draw[fill=light_gray, fill opacity=0.25, rounded corners=5pt] (-0.5, -0.75) rectangle (2.75, 3.5);

            \draw[fill=light_gray, rounded corners=5pt] (0, 0) rectangle (0.75, 3.25);
            \filldraw[outt_green] (0.375, 2.75) circle (7.5pt);
            \draw[fill=mid_gray] (0.375, 2.75) circle (6pt);

            \filldraw[outt_green] (0.375, 2) circle (7.5pt);
            \draw[fill=mid_gray] (0.375, 2) circle (6pt);
            \draw[fill=neur_green] (0.375, 2) circle (6pt);

            \filldraw[outt_green] (0.375, 1.25) circle (7.5pt);
            \draw[fill=mid_gray] (0.375, 1.25) circle (6pt);

            \filldraw[outt_green] (0.375, 0.5) circle (7.5pt);
            \draw[fill=mid_gray] (0.375, 0.5) circle (6pt);
            \draw[fill=neur_green] (0.375, 0.5) circle (6pt);

            \draw[fill=light_gray, rounded corners=5pt] (1.5, 0) rectangle (2.25, 3.25);

            \draw[fill=mid_gray] (1.875, 2.75) circle (6pt);
            \draw[fill=mid_gray] (1.875, 2) circle (6pt);
            \draw[fill=mid_gray] (1.875, 1.25) circle (6pt);
            \draw[fill=mid_gray] (1.875, 0.5) circle (6pt);

            \draw[draw=none, fill=neur_green, rounded corners=2.5pt] (0.625, -0.625) rectangle (1.675, -0.125);
            \node[color=outt_green] at (1.125, -0.375) {\textbf{ReLU}};
        \end{tikzpicture}
        \caption{Network with \emph{ReLU}\\ activation.}
    \end{subfigure}
    \begin{subfigure}[t]{0.225\textwidth}
        \begin{tikzpicture}
            \draw[fill=light_gray, fill opacity=0.25, rounded corners=5pt] (-0.5, -0.75) rectangle (2.75, 3.5);

            \draw[fill=light_gray, rounded corners=5pt] (0, 0) rectangle (0.75, 3.25);

            \filldraw[outt_purpl] (0.375, 2.75) circle (7.5pt);
            \draw[fill=mid_gray] (0.375, 2.75) circle (6pt);
            \draw[fill=neur_purpl] (0.375, 2.75) circle (6pt);

            \filldraw[outt_purpl] (0.375, 2) circle (7.5pt);
            \draw[fill=mid_gray] (0.375, 2) circle (6pt);
            \draw[fill=neur_purpl] (0.375, 2) circle (6pt);

            \filldraw[outt_purpl] (0.375, 1.25) circle (7.5pt);
            \draw[fill=mid_gray] (0.375, 1.25) circle (6pt);

            \filldraw[outt_purpl] (0.375, 0.5) circle (7.5pt);
            \draw[fill=mid_gray] (0.375, 0.5) circle (6pt);

            \draw[fill=light_gray, rounded corners=5pt] (1.5, 0) rectangle (2.25, 3.25);

            \draw[fill=mid_gray] (1.875, 2.75) circle (6pt);
            \draw[fill=mid_gray] (1.875, 2) circle (6pt);
            \draw[fill=mid_gray] (1.875, 1.25) circle (6pt);
            \draw[fill=mid_gray] (1.875, 0.5) circle (6pt);

            \draw[draw=none, fill=neur_purpl, rounded corners=2.5pt] (0.075, -0.625) rectangle (2.175, -0.125);
            \node[color=outt_purpl] at (1.125, -0.375) {\textbf{Group ReLU}};
        \end{tikzpicture}
        \caption{Network with \emph{Group ReLU} activation.}
    \end{subfigure}
	\caption{In sparsity-promoting models (\textbf{left}), arbitrary neurons can activate. In contrast, in group-sparse networks (\textbf{right}), the activated neurons are structured.}
	\label{fig:one}
\end{figure}

LISTA \citep{GL10}, which enforces sparsity on the units of deep layers, is based on the unfolding of the Iterative Shrinkage Thresholding Algorithm (ISTA), a sparse coding optimization algorithm. Sparsity-focused generative models \citep{T96} are most frequently employed due to their experimentally and theoretically proven generalization power \citep{MBP+09,MG13}. In addition, because they can significantly reduce the number of nonzero coefficients---units active at a given layer---sparse models have also been used to speed up inference in deep neural networks \citep{LWF+15,SCY+20}. Recent research has deviated from the traditional sparse coding model; certain works reconsidered the sparsity-promoting minimization \citep{HT18}, where others focused on exploring different generative models \citep{SCH+17,NUD17,AJ18}. Within the latter class, works studying \emph{group sparsity} \citep{YL06,EKB10} have been rather prolific. In addition to minimizing the number of non-zero coefficients, group sparsity forces them to occur in blocks (see \cref{fig:one}). As we argue in the sequel, we can interpret inputs that share active groups as belonging to the same class or cluster. The groupings manifest themselves either as a direct arrangement of the hidden units of neural networks into blocks \citep{WWW+16,YH17}, or as a clustering of data that, a priori, share similar characteristics, such as patches of natural images \citep{LPM20}.  Enforcing group structure has proved practical in applications and outperforms approaches based on the traditional notion of sparsity.

Despite the success of model-based unfolded architectures, their theoretical analysis is still nascent. The theoretical convergence of sparsity-based generative models has been studied in a \emph{supervised} setting, where data and their sparse representations are readily available \citep{CLW+18,LCW+19}. The majority of recent work, motivated by the general framework for the convergence of \emph{unsupervised} autoencoders introduced by \citet{AGM+15}, provide guarantees under which gradient descent will converge. For example, \citet{RMB+18} theoretically prove that the gradient of the loss function vanishes around critical points, and introduce a proxy gradient to approximate the true expectation. Most recently, \citet{NWH19} showed that a multitude of shallow models (derived from sparse coding) conform to the framework and converge to the generative model when trained with gradient descent. However, all of the existing literature is limited to models that rely on traditional notions of sparsity and, thus, does not account for generative models with structured notions of sparsity, such as ones with group-structured activations.

Our work poses the following question: given data generated according to a group-sparse model, \emph{can a shallow, group-sparse autoencoder recover the generating matrix, as well as nonzero blocks and their activations?} We answer this question in the affirmative, and our overall contributions are summarized as follows:

\textbf{Introduction of group-sparse autoencoders.} Motivated by the connection between clustering and group-sparsity, we introduce a \emph{group-sparse} autoencoder architecture, that we subsequently analyze theoretically and experimentally.

\textbf{Convergence of gradient descent.} Under mild assumptions introduced in \cref{sec:conve}, we prove that shallow group-sparse autoencoders, when trained with gradient descent, converge to a \emph{neighborhood} of the generative model.

\textbf{Recovering cluster membership.} In \cref{sec:conve} we show that the encoder's output demonstrably identifies nonzero blocks and their activations. Through the connection we establish between group-sparsity and clustering (\cref{sec:group}), this result suggests that, under mild conditions, we can recover cluster membership in unions of subspaces models. Finally, we showcase the clustering structure of group-sparsity in our experimental section.

\textbf{Group-sparse ReLU outperforms ReLU.} Through our experimental validation in \cref{sec:exper}, we demonstrate that a network with group-sparse activations shows superior performance compared to networks that promote traditional notions of sparsity. In particular, in both simulated and real data experiments, the group-sparse autoencoder outperforms a sparse autoencoder, both in recovering generative models and in learning interpretable dictionaries.

In what follows we denote matrices with capital bold-faced letters, vectors with lowercase bold-faced letters, and scalars with lowercase letters. $\bm{A}$ is a matrix, $\bm{A}_{S}$ is a sub-matrix of $\bm{A}$ indexed by $S$, $\bm{a}_i$ is the $i$-th column of $\bm{A}$, and $a_{ij}$ is the element in its $i$-th row and $j$-th column. $\lVert\bm{x}\rVert_p$ denotes the $\ell_p$ norm of $\bm{x}$ and $\lVert\bm{A}\rVert_2, \lVert\bm{A}\rVert_F$ denote the spectral and Frobenius norms, respectively, of $\bm{A}$, while $\sigma_1(\bm{A})$ denotes its maximum singular value. Finally, $[N] = \{1, \ldots, N\}$.

\section{Group-sparsity in dictionary learning}
\label{sec:group}
In model-based approaches the observed data $\{\bm{y}_i\}_{i=1}^{N} \in \mathcal{Y}$\footnote{We intentionally do not write $\{(\bm{x}_i, \bm{y}_i)\}_{i=1}^N$, as the setting we are considering is \emph{strictly} unsupervised.} are assumed to adhere to a generative model. Formally, we assume that the data satisfy
\begin{equation}
    \bm{y}_i = f_{\theta^\ast}(\bm{x}_i^{\ast}),
\end{equation}
where $\bm{x}^{\ast}_i \in \mathcal{X}$ is a latent vector and $f_{\theta}$, parametrized by $\theta$, comes from a function class $\mathcal{F}$ that describes the relation between the data $\bm{y}_i$ and the latent variables $\bm{x}^{\ast}_i$. Most frequent are models of \emph{linear} relations, where the function $f_{\theta}$ is of the form $f_\theta \colon \bm{x} \mapsto \bm{A}\bm{x}$, parametrized by $\theta = \{\bm{A}\}$.

\subsection{Group-sparse generative model}
Consider a generative model where each observation\footnote{For the rest of the text, we drop the index $i$ to reduce clutter.} $\bm{y}$ belongs to the union of one, or more, subspaces \citep{GN03}. In this general group-sparse model the observed data satisfy
\begin{equation}
    \label{eq:group_model}
    \bm{y} = \bm{A}^{\ast}\bm{x}^{\ast} = \sum_{g \in S}\bm{A}^{\ast}_g\bm{x}^{\ast}_g,
\end{equation}
where $S \subset [\Gamma]$ denotes the \emph{group support} (i.e.\ which of the $\Gamma$ groups are active), and the latent vector has the form $\bm{x}^{\ast} = [\bm{x}^{\ast}_1, \bm{x}^{\ast}_2, \ldots, \bm{x}^{\ast}_\Gamma]^T$. Gaussian mixture models, sparse models, and nonnegative sparse models \citep{NWH19} can readily be derived as special cases of the highly-expressive generative model from \eqref{eq:group_model}. The group-sparse prior assumes that the latent representation $\bm{x}$ is sparse, and that in nonzero entries occur in blocks (groups). 
The model also implies a decomposition of $\bm{A}^{\ast}$ into sub-matrices $\bm{A}^{\ast}_1, \bm{A}^{\ast}_2, \ldots, \bm{A}^{\ast}_\Gamma$ such that $\bm{A}^{\ast} = [\bm{A}^{\ast}_1 \bm{A}^{\ast}_2 \ldots \bm{A}^{\ast}_\Gamma]$, where we assume that each group $\bm{A}^{\ast}_g$ has exactly $d$ elements. Without additional structure, the generative model may not yield a unique solution; for example, \citet{EKB10} impose orthonormality on $\bm{A}_g^{\ast}$ to ensure uniqueness.

An analogue to the \emph{coherence} of a dictionary in sparse models (defined as $\mu = \max_{i \neq j} \lvert\bm{a}^{\ast T}_i \bm{a}^{\ast}_j\rvert$; the inner-product with the largest magnitude in $\bm{A}^{\ast}$) is the \emph{block coherence} of $\bm{A}^{\ast}$
\begin{equation}
    \mu_B = \max_{g\neq h} \frac{1}{d} \lVert \bm{A}^{\ast T}_g \bm{A}^{\ast}_h\rVert_2.
\end{equation}
Intuitively, coherence metrics give a sense of how correlated the different columns, or groups, of $\bm{A}^\ast$ are and directly affect the ability to recover latent vectors. Assuming normalized groups, as we will in this work, it holds that $0 \leq \mu_B \leq \mu \leq 1$.

\subsection{Group-sparse dictionary learning}
Assuming a linear underlying generative model, dictionary learning \citep{AEB06,MBP12} sets out to learn a dictionary $\bm{A}$ such that every vector $\bm{y}$ in a data set adopts a sparse representation as a linear combination of the columns of $\bm{A}$ using a vector $\bm{x}$. In group-sparse settings, given the dictionary $\bm{A}$, \emph{group-sparse coding} lets us find $\bm{x}$ as the solution to the optimization problem
\begin{equation}
    \label{eq:l0opt}
    \min_{\bm{x} \in \mathcal{X}}\quad \lVert\bm{x}\rVert_{\textrm{$\ell_0$/$\ell_2$}}, \qquad \text{ s.t. } \bm{y} = \bm{A} \bm{x}.
\end{equation}
The $\ell_0$/$\ell_2$, expressed as the $\ell_0$ pseudo-norm of the vector of $\ell_2$ norms $[\lVert\bm{x}_1\rVert_2, \lVert\bm{x}_2\rVert_2, \ldots, \lVert\bm{x}_\Gamma\rVert_2]^T$, norm minimizes the number of active groups. The combinatorial nature of $\ell_0$ pseudo-norm makes this optimization intractable in practice. A popular approach utilizes the $\ell_1$ norm instead, as a tractable convex relaxation of the optimization of \eqref{eq:l0opt}, yielding
\begin{equation}
    \label{eq:l1opt}
    \min_{\bm{x} \in \mathcal{X}} \quad \lVert\bm{x}\rVert_{\textrm{$\ell_1$/$\ell_2$}}, \qquad \text{s.t. } \bm{y} = \bm{A} \bm{x},
\end{equation}
where $\lVert\bm{x}\rVert_{\textrm{$\ell_1$/$\ell_2$}} = \sum_{g \in S}\lVert\bm{x}_g\rVert_2$. Both the optimizations of \eqref{eq:l0opt} and \eqref{eq:l1opt} require the recovery of latent codes $\bm{x}$ that lead to an exact reconstruction of the data $\bm{y}$. The following \emph{unconstrained} optimization problem enables a trade-off between exact recovery and the group-sparsity of the latent codes
\begin{equation}
    \label{eq:unconst}
    \min_{\bm{x} \in \mathcal{X}}\quad \frac{1}{2}\lVert\bm{y} - \bm{A}\bm{x}\rVert_2^2 + \lambda \sum_{g \in S}\lVert\bm{x}_g\rVert_2.
\end{equation}
Optimization objectives of the form $\frac{1}{2}\lVert\bm{y} - \bm{A}\bm{x}\rVert_2^2 + \lambda \Omega(\bm{x})$ have been studied extensively in the literature and can be directly solved via the theory of proximal operators \citep{BJM+11,PB13}. Pertinent to the current discussion, the proximal operator promoting group-sparse structures can be derived as
\begin{equation}
    \label{eq:prox}
    \sigma_{\lambda}(\bm{x}_g) = \left(1 - \frac{\lambda}{\lVert\bm{x}_g\rVert_2}\right)_{+}\bm{x}_g,
\end{equation}
where $(\cdot)_{+} = \max(\cdot, 0)$. Note that this proximal operator bears a striking similarity to $\operatorname{ReLU}(x) = \max(x, 0)$. Indeed, we can consider \eqref{eq:prox} as a generalization of ReLU (informally termed ``Group ReLU''), where the thresholding is applied in a structured way, instead of an element-wise fashion. Dictionary learning can then be performed by solving
\begin{equation}
    \label{eq:dict}
    (\widehat{\bm{A}}, \widehat{\bm{x}}) = \argmin_{\bm{A} \in \mathcal{A}, \bm{x} \in \mathcal{X}} \frac{1}{2}\lVert\bm{y} - \bm{A}\bm{x}\rVert_2^2 + \lambda \sum_{g \in S}\lVert\bm{x}_g\rVert_2,
\end{equation}
a nonconvex optimization problem. A popular approach,  termed \emph{alternating minimization} \citep{T91, AAJ+16}, cycles between group-sparse coding and dictionary update steps.  Formally, the group-sparse coding step considers the dictionary $\widehat{\bm{A}}^k$ fixed and solves
\begin{equation}
    \label{eq:dict_gs}
    \widehat{\bm{x}}^{k+1} = \argmin_{\bm{x} \in \mathcal{X}} \quad \frac{1}{2} \lVert\bm{y} - \widehat{\bm{A}}^k\bm{x}\rVert_2^2 +  \lambda \sum_{g \in S}\lVert\bm{x}_g\rVert_2,
\end{equation}
followed by an optimization to find the optimal dictionary $\widehat{\bm{A}}^{k+1}$ given an estimate of the latent code $\widehat{\bm{x}}^{k+1}$
\begin{equation}
    \label{eq:dict_du}
    \widehat{\bm{A}}^{k+1} = \argmin_{\bm{A} \in \mathcal{A}} \quad \frac{1}{2} \lVert\bm{y} - \bm{A}\widehat{\bm{x}}^{k+1}\rVert_2^2,
\end{equation}
where \eqref{eq:dict_gs} and \eqref{eq:dict_du} are performed in an alternating manner until convergence, yielding $(\widehat{\bm{A}}^{\textrm{OPT}}, \widehat{\bm{x}}^{\textrm{OPT}})$.

\section{Group-sparse autoencoders and descent}
\label{sec:conve}
\subsection{Group-sparse autoencoder}
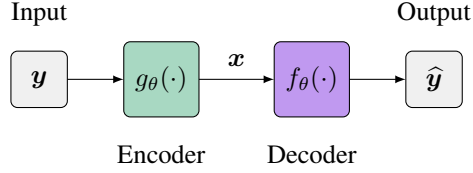
\begin{figure}
    \centering
     \begin{tikzpicture}
            \draw[fill=light_gray, rounded corners=2.5pt] (-1.5, 0.125) rectangle (-0.75, 0.875);
            \node at (-1.125, 0.5) {$\bm{y}$};
            \node at (-1.125, 1.375) {Input};
            \draw[-latex] (-0.75, 0.5) -- (0, 0.5);

            \draw[fill=neur_green, rounded corners=2.5pt] (0, 0) rectangle (1, 1);
            \node at (0.5, 0.5) {$g_{\theta}(\cdot)$};
            \node at (0.5, -0.5) {Encoder};
            \draw[-latex] (1, 0.5) -- (2, 0.5);

            \node at (1.5, 0.75) {$\bm{x}$};

            \draw[fill=neur_purpl, rounded corners=2.5pt] (2, 0) rectangle (3, 1);
            \node at (2.5, 0.5) {$f_{\theta}(\cdot)$};
            \node at (2.5, -0.5) {Decoder};
            \draw[-latex] (3, 0.5) -- (3.75, 0.5);

            \draw[fill=light_gray, rounded corners=2.5pt] (3.75, 0.125) rectangle (4.5, 0.875);
            \node at (4.125, 0.5) {$\widehat{\bm{y}}$};
            \node at (4.125, 1.375) {Output};
        \end{tikzpicture}
    \caption{Diagram of an autoencoder architecture.}
    \label{fig:two}
\end{figure}
The proximal operator of \eqref{eq:prox} implies an iterative optimization algorithm to recover the group-sparse codes $\bm{x}^{\ast}$. We unfold this optimization scheme into an autoencoder architecture \citep{GL10,TST+20}, illustrated in \cref{fig:two}, which implicitly recovers the sparse codes.
Formally, we denote the encoder as a function $g_{\theta}\colon \mathcal{R}^n \to \mathcal{R}^m$ such that
\begin{equation}
    g_{\theta}(\bm{y}) = \sigma_{\lambda}(\bm{A}^{T}\bm{y}) = \bm{x},
\end{equation}
and the corresponding decoder $f_{\theta}\colon \mathcal{R}^m \to \mathcal{R}^n$
\begin{equation}
    f_{\theta}(\bm{x}) = \bm{A}\bm{x} = \widehat{\bm{y}}.
\end{equation}
Notice that we deliberately use the same subscript $\theta$ (here $\theta = \{\bm{A}\}$), as the parametrization is common for the two systems, i.e. the weights are \emph{tied}.
We train the architecture by minimizing the loss function\footnote{Note that the loss function we are minimizing to train the autoencoder is \emph{different} from the minimization objective of \eqref{eq:unconst}, and instead similar to \eqref{eq:dict_du}.} $\mathcal{L}(\theta) = \tfrac{1}{2}\lVert\bm{y} - \widehat{\bm{y}}\rVert_2^2$. We assume that the data are being sampled from the generative model indicated by \eqref{eq:group_model} with $\bm{A} \in \mathcal{R}^{n\times m}$, which implies that $\bm{y} = \bm{A}^{\ast}\bm{x}^{\ast} = f_{\theta^\ast}(\bm{x}^{\ast})$.  Accounting for all training examples $\{\bm{y}_i\}_{i=1}^N$, we end up minimizing the cost
\begin{equation}
\frac{1}{N}\sum_{i=1}^N \mathcal{L}(\theta) = \tfrac{1}{2N}\sum_{i=1}^N\lVert\bm{y}_i - \widehat{\bm{y}}_i\rVert_2^2.
\end{equation}
The dictionary updates are performed by backpropagating through the architecture via gradient descent, yielding updates of the form
\begin{equation}
    \bm{A}^{k+1} = \bm{A}^{k} - \eta\nabla_{\bm{A}^{k}}\mathcal{L}(\bm{A}^{k}).
\end{equation}
In the next section, we will unfold a single iteration of the optimization and study a \emph{shallow} version of this network.

\subsection{Theoretical analysis}
Having introduced the shallow group-sparse autoencoder in the previous subsections, we will now analyze the theoretical properties of the architecture. We will organize our analysis in two parts; first, we will derive conditions that guarantee the recovery of the group support by applying the proximal operator of \eqref{eq:prox}. Then, assuming that the group memberships are correctly recovered, we will argue convergence of gradient descent by proving that the gradient of the loss function is aligned with $\bm{A}_g - \bm{A}_g^{\ast}$ .

\textbf{Assumptions.} We will delineate our critical assumptions here, for completeness; we will try our best to indicate where in the analysis each of the assumptions is invoked, as to also highlight why such a choice was made.\\
1. \emph{Bounded norms:} We assume that each group of the generating code $\bm{x}^\ast$ has bounded norm; i.e. for every $g$ in $S$ it holds that
\begin{equation}
    B_{\min} \leq \lVert\bm{x}_g^{\ast}\rVert_2 \leq B_{\max}.
\end{equation}
Note that this condition is significantly looser than similar conditions relating to sparse coding \citep{AGM+15,NWH19}. Precisely, as we will show experimentally in \cref{sec:exper}, the norm condition is more easily satisfied than a direct bound on the codes of the generative model.\\
2. \emph{Group-sparse support:} Each group $g$ has size $\operatorname{card}(g) = d$ and the number of non-zero groups is at most $\operatorname{card}(S) = \gamma$; the overall number of groups is $\Gamma$. Given that $\bm{A} \in \mathcal{R}^{n \times m}$, this implies that $m = d\cdot \Gamma$. We assume that the support $S$ is uniformly distributed, i.e.\ $\mathbb{P}[g \in S] = p_g = \Theta(\frac{\gamma}{\Gamma})$, $\mathbb{P}[g, h \in S] = p_{gh} = \Theta(\frac{\gamma^2}{\Gamma^2})$, etc.\\
3.\emph{Group-sparse code covariance:} Given two groups in the group support $v, h \in S$, we assume it holds
\begin{equation}
    \mathbb{E}\left[ \bm{x}^{\ast}_v\bm{x}^{\ast T}_h \mid v,h \in S \right] = \begin{cases}
			\bm{I}, & v = h,\\
			\bm{0}, & v \neq h.
		\end{cases}
\end{equation}
4. \emph{Model conditions:} We assume that our initialization, and therefore the subsequent updates, are relatively close to the generating matrix $\bm{A}^{\ast}$; formally, we assume that for every group $g \in S$ it holds that
\begin{equation}
    \lVert\bm{A}_g - \bm{A}^{\ast}_g\rVert_F \leq \delta,
\end{equation}
for some $\delta \in [0, 1]$. We also assume that the norm $\lVert\bm{A}_g^T\bm{A}_g^{\ast}\rVert_F$ is bounded, i.e.
\begin{equation}
    \lVert\bm{A}_g^T\bm{A}_g^{\ast}\rVert_F \leq \zeta,
\end{equation}
by some $\zeta$ that will be clarified later in this section. Finally, we assume that the weight matrix has orthonormal groups, i.e.\ $\bm{A}^{T}_g\bm{A}_g = \bm{I}$, and also $\lVert\bm{A}_g^{\ast}\rVert_2 = 1$. While that assumption might seem restrictive, it is fairly common in the literature of group sparsity \citep{EKB10}.

\textbf{Support recovery.} To show that the support is correctly recovered, we will show that for two groups $g, v \in \Gamma$ with $g \in S$ and $v \not \in S$ it holds that $\lVert\bm{A}^T_g\bm{y}\rVert_2 \geq \lambda$ and $\lVert\bm{A}^T_v\bm{y}\rVert_2 \leq \lambda$, for some threshold $\lambda$. Then the analysis consists of finding a suitable $\lambda$ guaranteeing that applying the proximal operator of \eqref{eq:prox} will attenuate the group $g$, but completely block group $v$. To that end, we decompose $\bm{A}_g^T\bm{y}$ as follows
\begin{equation}
    \bm{A}_g^T\bm{y} = \bm{A}_g^T\bm{A}^{\ast}_g\bm{x}^{\ast}_g + \sum_{h\neq g \in S}\bm{A}_g^T\bm{A}^{\ast}_h\bm{x}^{\ast}_h,
\end{equation}
that is, each group $g \in S$ comprises the contribution stemming from its own generating group and the cross-contribution of the other terms in the support $S$. Similarly, for a group $v \not \in S$, we will only have contributions from the ``cross-terms'', i.e.\ $\bm{A}_v^T\bm{y} = \sum_{h \in S}\bm{A}_v^T\bm{A}^{\ast}_h\bm{x}^{\ast}_h$. Therefore, to achieve support recovery we will show that the term $\bm{A}_g^T\bm{A}^{\ast}_g\bm{x}^{\ast}_g$ is large (in terms of norm) compared to the norm of $\sum_{h \in S}\bm{A}_v^T\bm{A}^{\ast}_h\bm{x}^{\ast}_h$ (with $v \not \in S$). The following two propositions bound the relevant norms.

\begin{proposition}[Group-norm lower bound]
\label{prop:lower}
The norm of the term $\bm{A}_g^T\bm{A}^{\ast}_g\bm{x}^{\ast}_g$ is lower-bounded by
\begin{equation}
    \lVert \bm{A}_g^T\bm{A}^{\ast}_g\bm{x}^{\ast}_g \rVert_2 \geq B_{\min}(1 - \delta).
\end{equation}
\end{proposition}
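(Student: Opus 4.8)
The plan is to isolate the "correct" part of the matrix product $\bm{A}_g^T\bm{A}_g^{\ast}$, which is the identity, and to treat the mismatch $\bm{A}_g^{\ast}-\bm{A}_g$ as a perturbation controlled by the model condition $\lVert\bm{A}_g-\bm{A}_g^{\ast}\rVert_F\le\delta$. Concretely, I would write
\begin{equation}
\bm{A}_g^T\bm{A}_g^{\ast}\bm{x}_g^{\ast}
= \bm{A}_g^T\bm{A}_g\bm{x}_g^{\ast} + \bm{A}_g^T(\bm{A}_g^{\ast}-\bm{A}_g)\bm{x}_g^{\ast}
= \bm{x}_g^{\ast} + \bm{A}_g^T(\bm{A}_g^{\ast}-\bm{A}_g)\bm{x}_g^{\ast},
\end{equation}
using the orthonormality assumption $\bm{A}_g^T\bm{A}_g=\bm{I}$. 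Applying the reverse triangle inequality then gives $\lVert\bm{A}_g^T\bm{A}_g^{\ast}\bm{x}_g^{\ast}\rVert_2 \ge \lVert\bm{x}_g^{\ast}\rVert_2 - \lVert\bm{A}_g^T(\bm{A}_g^{\ast}-\bm{A}_g)\bm{x}_g^{\ast}\rVert_2$.

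The second step is to bound the perturbation term submultiplicatively: $\lVert\bm{A}_g^T(\bm{A}_g^{\ast}-\bm{A}_g)\bm{x}_g^{\ast}\rVert_2 \le \lVert\bm{A}_g\rVert_2\,\lVert\bm{A}_g^{\ast}-\bm{A}_g\rVert_2\,\lVert\bm{x}_g^{\ast}\rVert_2$. Here I would note two facts: first, $\bm{A}_g$ has orthonormal columns, so all its singular values equal $1$ and hence $\lVert\bm{A}_g\rVert_2=\lVert\bm{A}_g^T\rVert_2=1$; second, the spectral norm is dominated by the Frobenius norm, so $\lVert\bm{A}_g^{\ast}-\bm{A}_g\rVert_2 \le \lVert\bm{A}_g^{\ast}-\bm{A}_g\rVert_F \le \delta$ by the model condition. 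Combining, the perturbation term is at most $\delta\lVert\bm{x}_g^{\ast}\rVert_2$.

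Putting the pieces together yields $\lVert\bm{A}_g^T\bm{A}_g^{\ast}\bm{x}_g^{\ast}\rVert_2 \ge (1-\delta)\lVert\bm{x}_g^{\ast}\rVert_2$, and since $\delta\in[0,1]$ the factor $1-\delta$ is nonnegative, so the bound is not vacuous; invoking the lower bound $\lVert\bm{x}_g^{\ast}\rVert_2\ge B_{\min}$ from the bounded-norms assumption finishes the argument. This is essentially a one-shot perturbation estimate, so I do not anticipate a genuine obstacle; the only points that warrant care are (i) correctly justifying $\lVert\bm{A}_g\rVert_2=1$ from $\bm{A}_g^T\bm{A}_g=\bm{I}$ (i.e.\ that $\bm{A}_g$ is a tall matrix with orthonormal columns, not an orthogonal square matrix), and (ii) making sure the Frobenius-to-spectral comparison is stated in the right direction so that the $\delta$-bound on the Frobenius norm transfers to the spectral norm.
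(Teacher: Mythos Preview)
Your proposal is correct and mirrors the paper's proof almost exactly: both add and subtract $\bm{A}_g^T\bm{A}_g=\bm{I}$, apply the reverse triangle inequality, and then bound $\lVert\bm{A}_g^T(\bm{A}_g^{\ast}-\bm{A}_g)\rVert_2\le\lVert\bm{A}_g^T\rVert_2\lVert\bm{A}_g^{\ast}-\bm{A}_g\rVert_2\le\delta$ before invoking $\lVert\bm{x}_g^{\ast}\rVert_2\ge B_{\min}$. If anything, your write-up is slightly more careful in making explicit that $\lVert\bm{A}_g\rVert_2=1$ follows from $\bm{A}_g^T\bm{A}_g=\bm{I}$ and that the Frobenius bound $\lVert\bm{A}_g-\bm{A}_g^{\ast}\rVert_F\le\delta$ transfers to the spectral norm via $\lVert\cdot\rVert_2\le\lVert\cdot\rVert_F$.
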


\begin{proposition}[Cross-term upper bound]
\label{prop:upper}
The norm of the term $\sum_{h \in S}\bm{A}_v^T\bm{A}^{\ast}_h\bm{x}^{\ast}_h$ is upper-bounded by
\begin{equation}
    \lVert \sum_{h \in S}\bm{A}_v^T\bm{A}^{\ast}_h\bm{x}^{\ast}_h \rVert_2 \leq \gamma B_{\max}(\mu_B + \delta).
\end{equation}
\end{proposition}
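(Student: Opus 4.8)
The plan is to bound the norm of the sum by the triangle inequality and then control each summand individually, so that the whole estimate reduces to a single cross-group spectral-norm bound. Since $v \notin S$, every index $h$ appearing in the sum lies in $S$ and is therefore distinct from $v$, and by Assumption 2 there are at most $\operatorname{card}(S) = \gamma$ such indices. Writing $\lVert \sum_{h \in S}\bm{A}_v^T\bm{A}^{\ast}_h\bm{x}^{\ast}_h \rVert_2 \leq \sum_{h \in S}\lVert \bm{A}_v^T\bm{A}^{\ast}_h\bm{x}^{\ast}_h\rVert_2 \leq \sum_{h\in S}\lVert \bm{A}_v^T\bm{A}^{\ast}_h\rVert_2\,\lVert\bm{x}^{\ast}_h\rVert_2$ and invoking the upper bound $\lVert\bm{x}^{\ast}_h\rVert_2 \leq B_{\max}$ from Assumption 1, it suffices to show $\lVert \bm{A}_v^T\bm{A}^{\ast}_h\rVert_2 \leq \mu_B + \delta$ for each fixed pair $v \neq h$; summing the resulting at most $\gamma$ copies of $B_{\max}(\mu_B+\delta)$ then yields the claim.

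For the single-pair bound I would split the learned block around its generating counterpart, $\bm{A}_v = \bm{A}^{\ast}_v + (\bm{A}_v - \bm{A}^{\ast}_v)$, so that $\bm{A}_v^T\bm{A}^{\ast}_h = \bm{A}^{\ast T}_v\bm{A}^{\ast}_h + (\bm{A}_v - \bm{A}^{\ast}_v)^T\bm{A}^{\ast}_h$. The first term is handled directly by the definition of block coherence, which controls $\lVert\bm{A}^{\ast T}_v\bm{A}^{\ast}_h\rVert_2$ for distinct groups $v \neq h$. For the second term, submultiplicativity of the spectral norm together with $\lVert\bm{A}^{\ast}_h\rVert_2 = 1$ (Assumption 4) gives $\lVert(\bm{A}_v - \bm{A}^{\ast}_v)^T\bm{A}^{\ast}_h\rVert_2 \leq \lVert\bm{A}_v - \bm{A}^{\ast}_v\rVert_2 \leq \lVert\bm{A}_v - \bm{A}^{\ast}_v\rVert_F \leq \delta$, where the last step uses the model-closeness hypothesis of Assumption 4. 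The triangle inequality then combines the two pieces into $\lVert \bm{A}_v^T\bm{A}^{\ast}_h\rVert_2 \leq \mu_B + \delta$, completing the reduction.

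I expect this argument to be essentially mechanical. The only points requiring care are (i) the bookkeeping that the summation index set has cardinality at most $\gamma$ and never contains $v$, so that the block-coherence estimate — valid only for distinct groups — legitimately applies to every term, and (ii) the passage from the Frobenius-norm perturbation hypothesis to a spectral-norm estimate, together with the correct use of the normalization $\lVert\bm{A}^{\ast}_h\rVert_2 = 1$ to discard the stray factor. The genuine loss in the proof is the deterministic triangle inequality, which is tight only in the pathological regime where all $\gamma$ active codes and cross-correlations align in the same direction; a sharper bound could in principle be obtained in expectation using the code-covariance structure of Assumption 3, but the crude estimate above is all that is needed to feed into the subsequent support-recovery argument.
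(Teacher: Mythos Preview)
Your proposal is correct and follows essentially the same route as the paper's proof: triangle inequality on the sum, submultiplicativity to strip off $\lVert\bm{x}^{\ast}_h\rVert_2\le B_{\max}$, and the decomposition $\bm{A}_v = \bm{A}^{\ast}_v + (\bm{A}_v-\bm{A}^{\ast}_v)$ followed by block coherence plus $\delta$-closeness. If anything you are slightly more careful than the paper, which does not explicitly note that $v\neq h$ is needed for the block-coherence bound nor that the spectral norm is dominated by the Frobenius norm when invoking $\lVert\bm{A}_v-\bm{A}^{\ast}_v\rVert_F\le\delta$.
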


The proofs for \cref{prop:lower,prop:upper} can be found in Appendix A. Using these results, we can now lower-bound the norm of a group $g \in S$ as
\begin{equation}
    \lVert\bm{A}^T_g\bm{y}\rVert_2 \geq \lVert\bm{A}_g^T\bm{A}^{\ast}_g\bm{x}^{\ast}_g\rVert_2 -   \lVert \sum_{h \neq g \in S}\bm{A}_g^T\bm{A}^{\ast}_h\bm{x}^{\ast}_h \rVert_2 \geq B_{\min}(1-\delta) - \gamma B_{\max}(\mu_B + \delta).
\end{equation}
Similarly, we have $\lVert\bm{A}^T_v\bm{y}\rVert_2 \leq \gamma B_{\max}(\mu_B +\delta)$, for $v \not \in S$. If we choose the threshold $\lambda$ to be in the range $\left[\gamma B_{\max}(\mu_B + \delta), B_{\min}(1 - \delta) - \gamma B_{\max}(\mu_B + \delta) \right]$, then the support is correctly recovered. Connecting this result to the clustering narrative, cluster membership is recovered in a \emph{single} step of our architecture. Formally, we state the following lemma.
\begin{lemma}[Conditions for support recovery]
    \label{lem:recov}
    Assume $\gamma \leq \log n$ and $\max(\mu_B, \delta) = O(\frac{1}{\log n})$. Then the range $\left[\gamma B_{\max}(\mu_B + \delta), B_{\min}(1 - \delta) - \gamma B_{\max}(\mu_B + \delta) \right]$ is non-empty, and any value $\lambda$ within that range will correctly recover the support.
\end{lemma}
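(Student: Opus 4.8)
The plan is to reduce the claim to a single scalar inequality and then read the support‑recovery conclusion directly off the two bounds established just before the lemma. For non‑emptiness of
\begin{equation}
  \bigl[\gamma B_{\max}(\mu_B+\delta),\; B_{\min}(1-\delta) - \gamma B_{\max}(\mu_B+\delta)\bigr],
\end{equation}
I would observe that the interval is nonempty exactly when its left endpoint does not exceed its right endpoint, i.e.\ when
\begin{equation}
  2\gamma B_{\max}(\mu_B + \delta) \leq B_{\min}(1 - \delta),
\end{equation}
so everything comes down to bounding the left‑hand side.

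Next I would substitute the hypotheses. Writing $\max(\mu_B,\delta) \leq c/\log n$ for the absolute constant $c$ hidden in the $O(\cdot)$, and using $\gamma \leq \log n$, we get $\gamma(\mu_B+\delta) \leq 2c$, hence $2\gamma B_{\max}(\mu_B+\delta) \leq 4c\,B_{\max}$. On the other side, since $\delta \leq c/\log n \leq \tfrac12$ for $n$ large (and $\delta \in [0,1]$ in any case), $B_{\min}(1-\delta) \geq B_{\min}/2$. Thus the scalar inequality holds as soon as $4c B_{\max} \leq B_{\min}/2$, i.e.\ once the constant in the $O(1/\log n)$ assumption is taken small enough relative to the ratio $B_{\min}/B_{\max}$ — which is precisely what $\max(\mu_B,\delta)=O(1/\log n)$ buys us. Note also that $\gamma \geq 1$ and $\mu_B+\delta \geq 0$ make the left endpoint nonnegative, and under the same smallness the right endpoint is strictly positive, so every admissible $\lambda$ is positive, as needed for $\sigma_\lambda$ in \eqref{eq:prox} to genuinely threshold.

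For the recovery statement itself, I would invoke the two displayed inequalities immediately preceding the lemma together with \cref{prop:lower,prop:upper}. For $g\in S$ one has $\lVert\bm{A}_g^T\bm{y}\rVert_2 \geq B_{\min}(1-\delta) - \gamma B_{\max}(\mu_B+\delta) \geq \lambda$, so the factor $\bigl(1 - \lambda/\lVert\bm{A}_g^T\bm{y}\rVert_2\bigr)_+$ in \eqref{eq:prox} is nonnegative and group $g$ is retained (strictly positive, hence retained, whenever $\lambda$ is taken strictly below the right endpoint); for $v\notin S$ one has $\lVert\bm{A}_v^T\bm{y}\rVert_2 \leq \gamma B_{\max}(\mu_B+\delta) \leq \lambda$, so that factor is zero and group $v$ is annihilated. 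Hence $\sigma_\lambda(\bm{A}^T\bm{y})$ has group support exactly $S$, and the support — equivalently, cluster membership — is recovered in one step for every $\lambda$ in the stated range.

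The only real obstacle is bookkeeping with the hidden constants: one must make explicit that the $O(1/\log n)$ must be small relative to $B_{\min}/B_{\max}$ (not merely $o(1)$), and be slightly careful that $\gamma(\mu_B+\delta)$ is only $O(1)$ rather than $o(1)$, so that the margin $B_{\min}(1-\delta) - 2\gamma B_{\max}(\mu_B+\delta)$ is governed by a constant gap and not by a vanishing quantity. Everything else is a direct substitution into \cref{prop:lower,prop:upper} and the definition of the group‑soft‑thresholding operator.
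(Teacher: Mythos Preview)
Your proposal is correct and matches the paper's (largely implicit) argument: the paper derives the two bounds from \cref{prop:lower,prop:upper} in the text immediately preceding the lemma and then simply asserts that the range is nonempty under the stated hypotheses, without spelling out the scalar inequality $2\gamma B_{\max}(\mu_B+\delta)\le B_{\min}(1-\delta)$ you wrote down. Your observation that the hidden constant in $O(1/\log n)$ must be small relative to $B_{\min}/B_{\max}$ is a useful clarification that the paper leaves tacit.
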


\textbf{Descent property.}
Having shown that the true groups are immediately recovered after a single application of the proximal operator, we will now proceed in arguing descent. In order to do that, we will first compute the gradient of the loss function with respect to each group $g \in S$, then, to deal with the fact that both the support and the group codes are unknown, take the expectation of the group (which invokes the \emph{infinite data} assumption). Having computed the gradient $\bm{G}_g$ of the loss function, we show that it is sufficiently aligned with $\bm{A}_g - \bm{A}^{\ast}_g$. This, in turn, is enough to guarantee that the error $\lVert\bm{A}^{k}_g - \bm{A}^{\ast}_g\rVert_F$ between the estimated group weights $\bm{A}^{k}_g$ at iteration $k$ and the generating matrix $\bm{A}^{\ast}_g$ is bounded, resulting to the convergence of the weights to a \emph{neighborhood} of $\bm{A}^{\ast}_g$.

The gradient of the loss function $\mathcal{L}(\theta)$ with respect to a group $\bm{A}_g$ for some $g \in \Gamma$ is given by\footnote{The complete derivation of the gradient can be found in Appendix B.}
\begin{equation}
        \nabla_{\bm{A}_g}\mathcal{L}(\theta) = -\left(\bm{y} - \bm{A}\sigma_{\lambda}(\bm{A}^T\bm{y})\right)\sigma_{\lambda}^T(\bm{A}^T_g\bm{y}) - \bm{y}\left(\bm{y} - \bm{A}\sigma_{\lambda}(\bm{A}^T\bm{y})\right)^T\bm{A}_g\operatorname{diag}(\sigma_{\lambda}'(\bm{A}^T_g\bm{y})).
\end{equation}
Considering the terms $\sigma_{\lambda}^T(\bm{A}^T_g\bm{y})$ and $\sigma_{\lambda}'(\bm{A}^T_g\bm{y})$, we can see that they will either be zero (when  $g\not\in S$), or they will be scaled versions of $\bm{A}^T_g\bm{y}$ and $\bm{1}$, respectively (where $\bm{1}$ denotes a vector of ones). Formally, we make the following substitutions
\begin{align}
    \begin{split}
    \label{eq:subs}
        \sigma_{\lambda}(\bm{A}_g^T \bm{y}) &= \mathbbm{1}_{\bm{x}_g \neq 0} \left(1 - \frac{\lambda}{\lVert\bm{A}_g^T\bm{y}\rVert_2}\right)\bm{A}_g^T\bm{y},\\
		\sigma'_{\lambda}(\bm{A}_g^T \bm{y}) &= \mathbbm{1}_{\bm{x}_g \neq 0} \left(1 - \frac{\lambda}{\lVert\bm{A}_g^T\bm{y}\rVert_2}\right) \bm{1}.
    \end{split}
\end{align}
The substitutions of \eqref{eq:subs} result in a different gradient $ \nabla_{\bm{A}_g}\widetilde{\mathcal{L}}(\theta)$ that was shown to be a good approximation of $ \nabla_{\bm{A}_g}\mathcal{L}(\theta)$ \citep{RMB+18}. We now need to deal with the last nonlinear term, $\sigma_{\lambda}(\bm{A}^T\bm{y})$. Since \eqref{eq:prox} attenuates the elements of each group in the support $S$ by a different term depending on the corresponding group norm, let us define $\tau_g = 	\left(1 - \frac{\lambda}{\lVert\bm{A}_{g}^T\bm{y}\rVert_2}\right)$ for every $g \in S$. Then, we can define a vector $\bm{\tau}$ such that
\begin{equation}
    \bm{\tau} = \begin{bmatrix}
		\smash{\underbrace{\begin{matrix}\tau_1 & \ldots & \tau_1\end{matrix}}_{d}} & \tau_2  & \ldots & \smash{\underbrace{\begin{matrix}\tau_{\gamma} & \ldots & \tau_{\gamma}\end{matrix}}_{d}}
		\end{bmatrix}^T.
		\vspace{1em}
\end{equation}
As suggested, the vector $\bm{\tau}$ is scaling every element of the group $g$ with the correct scaling of $\tau_g$. We can then write $\sigma_{\lambda}(\bm{A}^T\bm{y}) = \operatorname{diag}(\bm{\tau})\bm{A}^T_{S}\bm{y}$, and then the approximate gradient $ \nabla_{\bm{A}_g}\widetilde{\mathcal{L}}(\theta)$ becomes
\begin{equation}
    \label{eq:approx_grad}
    \nabla_{\bm{A}_g}\widetilde{\mathcal{L}}(\theta) = - \mathbbm{1}_{\bm{x}_g \neq 0}\cdot\tau_g \Big[\left(\bm{I} - \bm{A}_{S}\operatorname{diag}(\boldsymbol{\tau})\bm{A}_{S}^T\right)\bm{y}\bm{y}^T \bm{y}\bm{y}^T\left(\bm{I} - \bm{A}_{S}\operatorname{diag}(\boldsymbol{\tau})\bm{A}_{S}^T\right)^T\Big]\bm{A}_g.
\end{equation}
At this point, we will take the expectation of \eqref{eq:approx_grad}. The reasoning for this is that the true codes $\bm{x}_g^{\ast}$ (and their supports) are unknown, and thus we have the expected gradient $\bm{G}_g$
\begin{align}
    \begin{split}
        \bm{G}_g &= -\mathbb{E}\Big[ \mathbbm{1}_{\bm{x}^{\ast}_g \neq 0}\cdot\tau_g \left(\bm{I} - \bm{A}_{S}\operatorname{diag}(\boldsymbol{\tau})\bm{A}_{S}^T\right)\bm{y}\bm{y}^T\bm{A}_g\Big] -\mathbb{E}\Big[ \mathbbm{1}_{\bm{x}^{\ast}_g \neq 0}\cdot\tau_g \bm{y}\bm{y}^T\left(\bm{I} - \bm{A}_{S}\operatorname{diag}(\boldsymbol{\tau})\bm{A}_{S}^T\right)^T\bm{A}_g\Big] + \epsilon\\
        &= \bm{G}_g^{(1)} + \bm{G}_g^{(2)} + \epsilon,
    \end{split}
\end{align}
where $\mathbbm{1}_{\bm{x}^{\ast}_g}$ replaced $\mathbbm{1}_{\bm{x}_g}$, as we showed that the support is recovered with a single application of the proximal operator. Regardless, this introduces an error term $\epsilon$, that is, however, bounded (and specifically has a norm of order $O(n^{-w(1)})$ \citep{NWH19}). In order to deal with the unknown support, we will use Adam's Law and compute $\bm{G}^{(i)}_g$ as $\mathbb{E}\left[\bm{G}^{(i)}_{g\mid S}\right]$, where
\begin{equation}
    \bm{G}^{(i)}_{g\mid S} = - \mathbb{E}\left[\mathbbm{1}_{\bm{x}_g^\ast \neq 0} \cdot \tau_g \left(\bm{I} - \bm{A}_{S}\operatorname{diag}(\bm{\tau})\bm{A}_{S}^T\right)\bm{y}\bm{y}^T\bm{A}_{g} \mid S\right].
\end{equation}
Then, noting that $\bm{y}\bm{y}^T = \sum_{h,v \in S}\bm{A}_h^{\ast}\bm{x}_h^\ast \bm{x}_v^{\ast T} \bm{A}_v^{\ast T}$ and invoking the assumption of the code covariances we can finally write $\bm{G}^{(i)}_{g\mid S}$ as
\begin{align}
    \begin{split}
        \bm{G}^{(1)}_{g\mid \Gamma} &= -\tau_g \left(\bm{I} - \bm{A}_{S}\operatorname{diag}(\bm{\tau})\bm{A}^T_{S}\right)\bm{A}^{\ast}_g\bm{A}^{\ast T}_g\bm{A}_g + \bm{P}_1,\\
        \bm{G}^{(2)}_{g\mid S} &= -\tau_g \bm{A}^{\ast}_g\bm{A}^{\ast T}_g\left(\bm{I} - \bm{A}_{S}\operatorname{diag}(\bm{\tau})\bm{A}^T_{S}\right)\bm{A}_g + \bm{P}_2,\\
    \end{split}
\end{align}
where the $\bm{P}_i$ are cross-terms given by
\begin{align}
    \begin{split}
        \bm{P}_1 &= -\tau_g \left(\bm{I} - \bm{A}_{S}\operatorname{diag}(\bm{\tau})\bm{A}_{S}^T\right)\sum_{h\neq g \in S}\bm{A}_h^{\ast}\bm{A}_h^{\ast T}\bm{A}_{g},\\
        \bm{P}_2 &= -\tau_g \sum_{h\neq g \in S}\bm{A}_h^{\ast}\bm{A}_h^{\ast T}\left(\bm{I} - \bm{A}_{S}\operatorname{diag}(\bm{\tau})\bm{A}_{S}^T\right)\bm{A}_{g}.
    \end{split}
\end{align}
Then, noting that $\bm{A}_{S}\operatorname{diag}(\bm{\tau})\bm{A}^T_{S} = \sum_{h \in S}\tau_h\bm{A}_h\bm{A}^T_h$ and skipping intermediate steps we can finally write the expected gradient $\bm{G}_g$ as
\begin{equation}
    \label{eq:expected_gradient}
    \bm{G}_g = \tau_g p_g (2 - \tau_g)(\bm{A}_g - \bm{A}^{\ast}_g)\bm{A}^{\ast T}_g\bm{A}_g + \bm{V} + \epsilon,
\end{equation}
where the matrix $\bm{V}$ is an amalgam of $\mathbb{E}[\bm{P}_1 + \bm{P}_2]$ and lower probability terms. We can then prove the following theorem.
\begin{theorem}[Gradient direction]
    \label{thrm:aligned}
    Consider the expected gradient of \eqref{eq:expected_gradient}. The \emph{inner product} between the $i$-th columns of $\bm{G}_g$ and $\bm{A}_g - \bm{A}^{\ast}_g$ is lower-bounded by
    \begin{align}
        \label{eq:grad_align}
        \begin{split}
        2\langle \bm{g}_{gi}, \bm{a}_i - \bm{a}^{\ast}_i \rangle &\geq \tau_g(2-\tau_g)p_g\alpha_{i}\lVert\bm{a}_i - \bm{a}_i^{\ast}\rVert_2^2 + \frac{1}{\tau_g (2 - \tau_g) p_g \alpha_i}\lVert \bm{g}_{gi}\rVert_2^2 -\frac{1}{\tau_g (2 - \tau_g) p_g \alpha_i}\lVert \bm{v}_i\rVert_2^2\\
		&-O((\mu_B + \delta)^2\tfrac{\gamma^5}{\Gamma^3}),
        \end{split}
    \end{align}
    where $\alpha_i = \bm{a}^{\ast T}_i\bm{a}_i$ and $\lVert\bm{v}_i\rVert_2$ satisfies
    \begin{equation}
        \lVert\bm{v}_i\rVert_2 \leq \tau_g(2-\tau_g)p_g(\omega_{i}\sqrt{d^2 + 1} + \delta)\lVert\bm{A}_g - \bm{A}^{\ast}_g\rVert_F
    \end{equation}
    with $\omega{i} = \max_{j\neq i \in g} \lvert\bm{a}^{\ast T}_j\bm{a}_i\rvert$.
\end{theorem}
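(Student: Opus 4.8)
The plan is to peel the $i$-th column off the expected gradient in \eqref{eq:expected_gradient}, split the dominant term into a piece exactly proportional to $\bm{a}_i - \bm{a}^{\ast}_i$ plus a controllable remainder, and then invoke an elementary polarization identity. Since the $i$-th column of $\bm{A}_g$ is $\bm{a}_i$, the $i$-th column of the dominant term of \eqref{eq:expected_gradient} equals $\tau_g(2-\tau_g)p_g(\bm{A}_g - \bm{A}^{\ast}_g)\bm{A}^{\ast T}_g\bm{a}_i$, and since $\bm{A}^{\ast T}_g\bm{a}_i$ has entries $\bm{a}^{\ast T}_j\bm{a}_i$ for $j\in g$, we get $(\bm{A}_g - \bm{A}^{\ast}_g)\bm{A}^{\ast T}_g\bm{a}_i = \alpha_i(\bm{a}_i - \bm{a}^{\ast}_i) + \sum_{j\neq i\in g}(\bm{a}^{\ast T}_j\bm{a}_i)(\bm{a}_j - \bm{a}^{\ast}_j)$ with $\alpha_i = \bm{a}^{\ast T}_i\bm{a}_i$. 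Writing $\bm{v}_i^{(1)}$ for the $i$-th column of the $\mathbb{E}[\bm{P}_1 + \bm{P}_2]$ part of $\bm{V}$, and $\bm{r}_i$ for the $i$-th column of the remaining ``lower probability'' part of $\bm{V}$ together with that of $\epsilon$, I set $\bm{v}_i = \tau_g(2-\tau_g)p_g\sum_{j\neq i\in g}(\bm{a}^{\ast T}_j\bm{a}_i)(\bm{a}_j-\bm{a}^{\ast}_j) + \bm{v}_i^{(1)}$, so that $\bm{g}_{gi} = \tau_g(2-\tau_g)p_g\alpha_i(\bm{a}_i - \bm{a}^{\ast}_i) + \bm{v}_i + \bm{r}_i$.

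Next I would bound the two error vectors. For the within-group off-diagonal sum, the triangle inequality, $|\bm{a}^{\ast T}_j\bm{a}_i| \le \omega_i$, and Cauchy--Schwarz give $\lVert\sum_{j\neq i}(\bm{a}^{\ast T}_j\bm{a}_i)(\bm{a}_j-\bm{a}^{\ast}_j)\rVert_2 \le \omega_i\sum_{j\neq i}\lVert\bm{a}_j - \bm{a}^{\ast}_j\rVert_2 \le \omega_i\sqrt{d}\,\lVert\bm{A}_g - \bm{A}^{\ast}_g\rVert_F$; folding in $\bm{v}_i^{(1)}$, bounded via the model condition $\lVert\bm{A}_g-\bm{A}^{\ast}_g\rVert_F \le \delta$ and the group-orthonormality (this is what contributes the $+\delta$), upgrades the $\sqrt{d}$ to $\sqrt{d^2+1}$ and yields $\lVert\bm{v}_i\rVert_2 \le \tau_g(2-\tau_g)p_g(\omega_i\sqrt{d^2+1} + \delta)\lVert\bm{A}_g - \bm{A}^{\ast}_g\rVert_F$. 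For $\bm{r}_i$, the probability of two or more groups being simultaneously active is $\Theta(\gamma^2/\Gamma^2)$ by the group-sparse support assumption and each such summand carries an extra $(\mu_B + \delta)$ factor from block incoherence and the model conditions, while $\lVert\epsilon\rVert$ is of order $n^{-\omega(1)}$; tallying these against the $\Theta(\gamma/\Gamma)$ scaling of $p_g$ gives $\tfrac{1}{\tau_g(2-\tau_g)p_g\alpha_i}\lVert\bm{r}_i\rVert_2\bigl(2\lVert\bm{v}_i\rVert_2 + \lVert\bm{r}_i\rVert_2\bigr) = O\bigl((\mu_B+\delta)^2\gamma^5/\Gamma^3\bigr)$.

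Finally, set $c := \tau_g(2-\tau_g)p_g\alpha_i$, which is strictly positive because $\tau_g\in(0,1)$ once the support is recovered, $p_g>0$, and $\alpha_i = 1 - \tfrac12\lVert\bm{a}_i-\bm{a}^{\ast}_i\rVert_2^2 \ge 1 - \delta^2/2 > 0$ under group-orthonormality. With $\bm{w} := \bm{a}_i - \bm{a}^{\ast}_i$ and the identity $2\langle c\bm{w}+\bm{e}, \bm{w}\rangle = c\lVert\bm{w}\rVert_2^2 + \tfrac1c\lVert c\bm{w}+\bm{e}\rVert_2^2 - \tfrac1c\lVert\bm{e}\rVert_2^2$ applied with $\bm{e} = \bm{v}_i + \bm{r}_i$ and $c\bm{w}+\bm{e} = \bm{g}_{gi}$, I obtain $2\langle\bm{g}_{gi},\bm{w}\rangle = c\lVert\bm{w}\rVert_2^2 + \tfrac1c\lVert\bm{g}_{gi}\rVert_2^2 - \tfrac1c\lVert\bm{v}_i+\bm{r}_i\rVert_2^2$; expanding $\lVert\bm{v}_i+\bm{r}_i\rVert_2^2 \le \lVert\bm{v}_i\rVert_2^2 + \lVert\bm{r}_i\rVert_2(2\lVert\bm{v}_i\rVert_2 + \lVert\bm{r}_i\rVert_2)$ and inserting the bound from the previous paragraph gives exactly \eqref{eq:grad_align}.

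The main obstacle is the bookkeeping around $\bm{V}$: one must cleanly separate the part of this amalgam that behaves like the ``$\mathbb{E}[\bm{P}_1+\bm{P}_2]$'' contribution absorbed into $\bm{v}_i$ from the genuinely lower-order part, and then check that the latter, after division by $c$ and pairing with $\lVert\bm{v}_i\rVert_2$, collapses to $O((\mu_B+\delta)^2\gamma^5/\Gamma^3)$ — this is where the $\Theta(\gamma/\Gamma)$ and $\Theta(\gamma^2/\Gamma^2)$ support probabilities, the block-incoherence bound $\mu_B$, and the model conditions all have to be tracked simultaneously. The column extraction and the polarization identity are routine once that split is pinned down.
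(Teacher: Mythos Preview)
Your polarization identity and the overall shape of the argument are correct, but the decomposition of the $i$-th column of $\bm{G}_g$ misses a term that is \emph{not} of lower order, and as a consequence your attribution of the $+\delta$ in the bound on $\lVert\bm{v}_i\rVert_2$ is wrong.

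Concretely, when the paper derives \eqref{eq:expected_gradient} in Appendix~C, the expected gradient is actually
\[
\bm{G}_g \;=\; \tau_g(2-\tau_g)p_g(\bm{A}_g-\bm{A}_g^{\ast})\bm{A}_g^{\ast T}\bm{A}_g
\;+\;\underbrace{\tau_g p_g\,\bm{A}_g\bigl[\tau_g\bm{A}_g^T\bm{A}_g^{\ast}-(2-\tau_g)\bm{I}\bigr]\bm{A}_g^{\ast T}\bm{A}_g}_{\text{``middle'' term}}
\;+\;\tilde{\bm{\beta}},
\]
so the $\bm{V}$ of the main text is \emph{not} just $\mathbb{E}[\bm{P}_1+\bm{P}_2]$ plus lower-probability pieces: it also contains this middle term, which carries the same $p_g$ prefactor as the dominant one. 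In the paper's proof, all of this (the off-diagonal within-group sum, the middle term, and $\tilde{\bm{\beta}}_i$) is packed into $\bm{v}_i$; there is no separate $\bm{r}_i$. The $+\delta$ in the statement comes precisely from bounding $\lVert\tau_g\bm{A}_g^T\bm{A}_g^{\ast}-(2-\tau_g)\bm{I}\rVert_F$ by $(2-\tau_g)^{1/2}\delta$, and this is exactly where the assumption $\lVert\bm{A}_g^T\bm{A}_g^{\ast}\rVert_F^2\le\zeta$ (stated among the model conditions and quantified in Theorem~2) is used. Your $\bm{v}_i^{(1)}=\mathbb{E}[\bm{P}_1+\bm{P}_2]$, by contrast, scales with $p_{gh}=\Theta(\gamma^2/\Gamma^2)$ and in the paper's bookkeeping lands in the $O\bigl((\mu_B+\delta)\gamma^3/\Gamma^2\bigr)$ part of $\tilde{\bm{\beta}}_i$, which after squaring and dividing by $c$ produces the $O((\mu_B+\delta)^2\gamma^5/\Gamma^3)$ error term---it cannot supply a contribution of the form $\tau_g(2-\tau_g)p_g\,\delta\,\lVert\bm{A}_g-\bm{A}_g^{\ast}\rVert_F$.

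A smaller point: the factor $\sqrt{d^2+1}$ in the paper arises purely from bounding $\sum_{j\neq i}\lVert\bm{a}_j-\bm{a}_j^{\ast}\rVert_2$ by $\sqrt{d^2+1}\,\lVert\bm{A}_g-\bm{A}_g^{\ast}\rVert_F$ (via $(\sum_j\lVert\cdot\rVert)^2\le\lVert\bm{A}_g-\bm{A}_g^{\ast}\rVert_F^2+d^2\max_j\lVert\cdot\rVert^2$); it is not an ``upgrade'' caused by folding in $\bm{v}_i^{(1)}$. Your Cauchy--Schwarz bound $\sqrt{d}$ is actually sharper for that piece, but it does not compensate for the omitted middle term.
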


Intuitively, \cref{thrm:aligned} states that the gradient $\bm{g}_{gi}$ ``points'' at the same direction as $\bm{a}_i - \bm{a}^{\ast}_i$, and thus moving along the opposite direction will get us closer to the generative model of $\bm{A}_g$. We are finally able to state our main result.\footnote{The proof of both \cref{thrm:aligned,thrm:main_res} are given in Appendix C.}
\begin{theorem}[Convergence to a neighborhood]
    \label{thrm:main_res}
    Suppose that the learning rate $\eta$ is upper bounded by $\frac{1}{\tau_g(2-\tau_g)p_g\alpha_{\max}}$, the norm $\lVert\bm{A}_g^T\bm{A}_g^{\ast}\rVert_F^2$ is upper bounded by $\frac{1}{\tau_g^2}[3(2-\tau_g)(\tau_g -\frac{2}{3})n + (1-\tau_g)(2-\tau_g)\delta^2]$, and that $\bm{G}_g$ is ``aligned'' with $\bm{A}_g-\bm{A}_g^{\ast}$, i.e.
    \begin{equation*}
        2\langle \bm{g}_{gi}, \bm{a}_i - \bm{a}^{\ast}_i \rangle \geq \kappa \lVert\bm{a}_i - \bm{a}^{\ast}_i\rVert_2^2 + \nu \lVert \bm{g}_i\rVert_2^2 - \xi \lVert \bm{v}_i\rVert_2^2 - \varepsilon,
    \end{equation*}
    where $\kappa, \nu, \xi,$ and $\varepsilon$ are given by \cref{thrm:aligned}. Then it follows that
    \begin{equation*}
        \lVert\bm{A}^{k+1}_g - \bm{A}^{\ast}_g\rVert_F^2 \leq (1-\rho)\lVert\bm{A}^{k}_g - \bm{A}^{\ast}_g\rVert_F^2 + \eta d\varepsilon,
    \end{equation*}
    where $\rho = \eta\tau_g(2-\tau_g)p_g(\alpha_{\min}-\tfrac{2d(\omega_{\max}\sqrt{d^2 + 1} +\delta)^2}{\alpha_{\min}})$. If we further consider the assumptions of \cref{lem:recov} then
    \begin{equation*}
        \lVert\bm{A}^{k+1}_g - \bm{A}^{\ast}_g\rVert_F^2 \leq (1-\rho)\lVert\bm{A}^{k}_g - \bm{A}^{\ast}_g\rVert_F^2 + O(d\tfrac{\log^2n}{\Gamma^2}),
    \end{equation*}
	where $\omega_{\max} = \max_{i} \omega_i$, $\alpha_{\min} = \min_i \alpha_i$, and $\alpha_{\max} = \max_i \alpha_i$.
\end{theorem}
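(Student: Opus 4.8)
The plan is to run a one-step descent (``contraction'') estimate for gradient descent, carried out group-by-group and, within each active group $g\in S$, column-by-column. Starting from the update $\bm{A}^{k+1}_g = \bm{A}^{k}_g - \eta\bm{G}_g$, I would expand the squared Frobenius error as
\[
\lVert\bm{A}^{k+1}_g - \bm{A}^{\ast}_g\rVert_F^2 = \sum_{i\in g}\Big(\lVert\bm{a}_i - \bm{a}^{\ast}_i\rVert_2^2 - 2\eta\langle\bm{g}_{gi},\bm{a}_i - \bm{a}^{\ast}_i\rangle + \eta^2\lVert\bm{g}_{gi}\rVert_2^2\Big),
\]
where $\bm{g}_{gi},\bm{a}_i,\bm{a}^{\ast}_i$ are the $i$-th columns of $\bm{G}_g,\bm{A}^{k}_g,\bm{A}^{\ast}_g$. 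Into each summand I would substitute the alignment bound of \cref{thrm:aligned} with the identifications $\kappa = \tau_g(2-\tau_g)p_g\alpha_i$, $\nu = \xi = [\tau_g(2-\tau_g)p_g\alpha_i]^{-1}$ and $\varepsilon = O((\mu_B+\delta)^2 \gamma^5/\Gamma^3)$, yielding for each column
\[
\lVert\bm{a}^{k+1}_i - \bm{a}^{\ast}_i\rVert_2^2 \leq (1-\eta\kappa)\lVert\bm{a}_i - \bm{a}^{\ast}_i\rVert_2^2 - \eta(\nu-\eta)\lVert\bm{g}_{gi}\rVert_2^2 + \eta\xi\lVert\bm{v}_i\rVert_2^2 + \eta\varepsilon.
\]

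Next I would use the hypotheses to clean this up. The learning-rate bound $\eta \leq [\tau_g(2-\tau_g)p_g\alpha_{\max}]^{-1}$ makes $\nu-\eta\geq 0$ uniformly in $i$, so the $\lVert\bm{g}_{gi}\rVert_2^2$ term is nonpositive and can be dropped; the hypothesis on $\lVert\bm{A}_g^T\bm{A}_g^{\ast}\rVert_F^2$ is what keeps the residual and cross-terms collected in $\bm{V}$ (hence $\bm{v}_i$) small enough for \cref{thrm:aligned} to be usable here. I would then plug in $\lVert\bm{v}_i\rVert_2\leq\tau_g(2-\tau_g)p_g(\omega_i\sqrt{d^2+1}+\delta)\lVert\bm{A}_g-\bm{A}^{\ast}_g\rVert_F$, square it and multiply by $\xi$ so the $\tau_g(2-\tau_g)p_g$ factors partially cancel, sum over the $d$ columns of group $g$, and bound $1-\eta\kappa_i\leq 1-\eta\tau_g(2-\tau_g)p_g\alpha_{\min}$ and $(\omega_i\sqrt{d^2+1}+\delta)^2/\alpha_i\leq(\omega_{\max}\sqrt{d^2+1}+\delta)^2/\alpha_{\min}$. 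Collecting the two resulting multiples of $\lVert\bm{A}_g-\bm{A}^{\ast}_g\rVert_F^2$ gives $\lVert\bm{A}^{k+1}_g - \bm{A}^{\ast}_g\rVert_F^2 \leq (1-\rho)\lVert\bm{A}^{k}_g - \bm{A}^{\ast}_g\rVert_F^2 + \eta d\varepsilon$ with $\rho$ as stated (and $1-\rho\in[0,1)$ because $\eta\tau_g(2-\tau_g)p_g\alpha_{\min}\leq\alpha_{\min}/\alpha_{\max}\leq 1$). For the second, asymptotic bound I would specialize to the \cref{lem:recov} regime: $\gamma\leq\log n$ and $\max(\mu_B,\delta)=O(1/\log n)$ give $\varepsilon=O(\log^3 n/\Gamma^3)$, and since $p_g=\Theta(\gamma/\Gamma)$ the largest admissible step is $\eta=\Theta(\Gamma/\gamma)$ up to the $\Theta(1)$ factors $\tau_g(2-\tau_g)$ and $\alpha_{\max}$, so $\eta d\varepsilon = O\big(d(\mu_B+\delta)^2\gamma^4/\Gamma^2\big) = O(d\log^2 n/\Gamma^2)$.

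The main obstacle is not the telescoping algebra but certifying that $\rho$ is a genuine \emph{positive} contraction rate, i.e. that $\alpha_{\min} > 2d(\omega_{\max}\sqrt{d^2+1}+\delta)^2/\alpha_{\min}$. This forces two sub-arguments: (i) $\alpha_i=\bm{a}^{\ast T}_i\bm{a}_i$ is close to $1$, which I would get from $\lVert\bm{a}_i-\bm{a}^{\ast}_i\rVert_2\leq\lVert\bm{A}_g-\bm{A}^{\ast}_g\rVert_F\leq\delta$ together with the orthonormality $\bm{A}_g^T\bm{A}_g=\bm{I}$ and $\lVert\bm{A}^{\ast}_g\rVert_2=1$ (so $\alpha_i\geq 1-\delta^2/2$); and (ii) $\omega_{\max}=\max_{j\neq i\in g}\lvert\bm{a}^{\ast T}_j\bm{a}_i\rvert$ is small, which follows by writing $\bm{a}^{\ast T}_j\bm{a}_i=\bm{a}_j^T\bm{a}_i+(\bm{a}^{\ast}_j-\bm{a}_j)^T\bm{a}_i=(\bm{a}^{\ast}_j-\bm{a}_j)^T\bm{a}_i=O(\delta)$ again using within-group orthonormality. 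Consequently contractivity ultimately rests on an implicit smallness requirement on $\delta$ (roughly $\delta=O(d^{-3/2})$) that must be folded back into the model-conditions assumption; a secondary nuisance is carrying the $\tau_g$-dependent constants correctly through the $\lVert\bm{A}_g^T\bm{A}_g^{\ast}\rVert_F^2$ hypothesis so that all sign conditions remain consistent for $\tau_g\in(0,1)$.
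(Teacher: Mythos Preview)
Your proposal is correct and follows essentially the same route as the paper: expand $\lVert\bm{A}^{k+1}_g-\bm{A}^{\ast}_g\rVert_F^2$ column-by-column, substitute the alignment inequality of \cref{thrm:aligned}, use the learning-rate hypothesis to discard the $-\eta(\nu-\eta)\lVert\bm{g}_{gi}\rVert_2^2$ term, insert the bound on $\lVert\bm{v}_i\rVert_2$, sum over the $d$ columns, and finally specialize to the \cref{lem:recov} regime together with $\eta=\Theta(1/p_g)=\Theta(\Gamma/\gamma)$ to reduce $\eta d\varepsilon$ to $O(d\log^2 n/\Gamma^2)$. Your additional paragraph on the positivity of $\rho$ (bounding $\alpha_{\min}$ from below via $\delta$-closeness and $\omega_{\max}$ from above via within-group orthonormality) goes beyond what the paper actually verifies in its proof; the paper simply records $\rho$ as stated without checking the sign condition, so your extra care there is welcome rather than superfluous.
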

This concludes the proof of descent; we showed that, when training with gradient descent, the weights of a shallow group-sparse autoencoder will, in Frobenius norm, get closer to the weights of the generating model. However, because of the error term $\varepsilon$, the convergence isn't to the exact weights; it is rather to a \emph{neighborhood} of the generative model.

\textbf{Comparison with prior work.} Existing literature has analyzed the convergence of gradient descent assuming a sparse generative model \citep{AGM+15,RMB+18,NWH19}. As our study assumes a group-sparse model, it is natural to ask how these assumptions and analyses differ. We identify two main differences in favor of the group-sparse approach:\\
\textbf{(a)} Works using traditional notions of sparsity \citep{AGM+15,RMB+18,NWH19} require bounds on the true codes of the generative model. Instead, our formulation requires a bound on the \emph{norms} of the true codes; a significantly looser assumption. As a result, a group-sparse autoencoder is able to recover the support of the true codes $\bm{x}^{\ast}$ with a significantly higher success rate (\cref{sec:exper}).\\
\textbf{(b)} Sparse approaches \citep{NWH19} require the magnitude of the bias to \emph{diminish} at every iteration. In stark contrast, our analysis makes no such assumption; we are able to satisfy both support recovery and convergence using a \emph{constant} choice of bias.

\begin{figure*}[ht]
    \centering
	\begin{subfigure}[t]{0.24\textwidth}
	    \centering
	    \includegraphics[width=\textwidth]{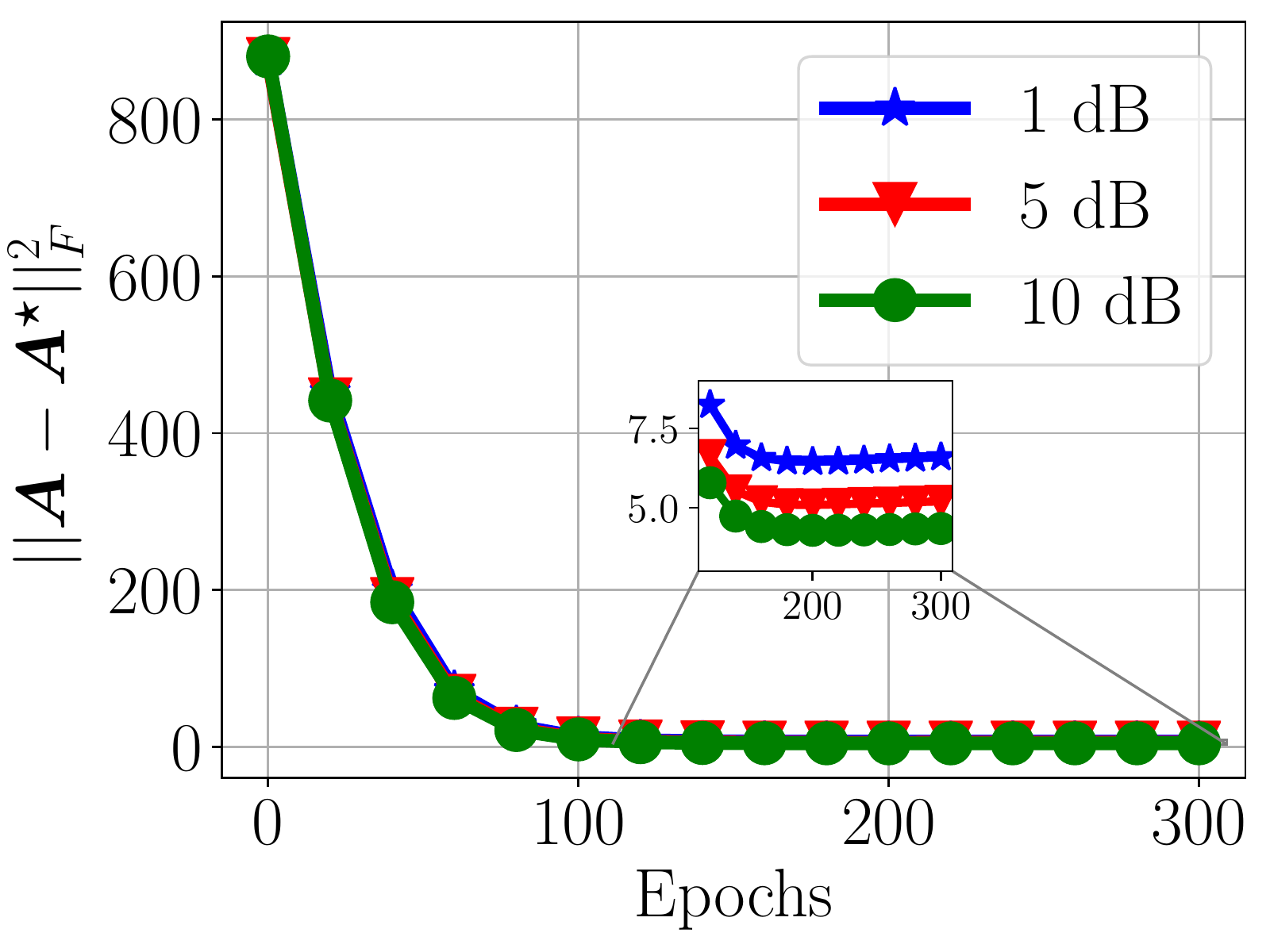}
	    \caption{Recovery of the generating matrix (in terms of the Frobenius norm) for various noise levels.}
	    \label{fig:threea}
	\end{subfigure}\hfill
	\begin{subfigure}[t]{0.24\textwidth}
	    \centering
	    \includegraphics[width=\textwidth]{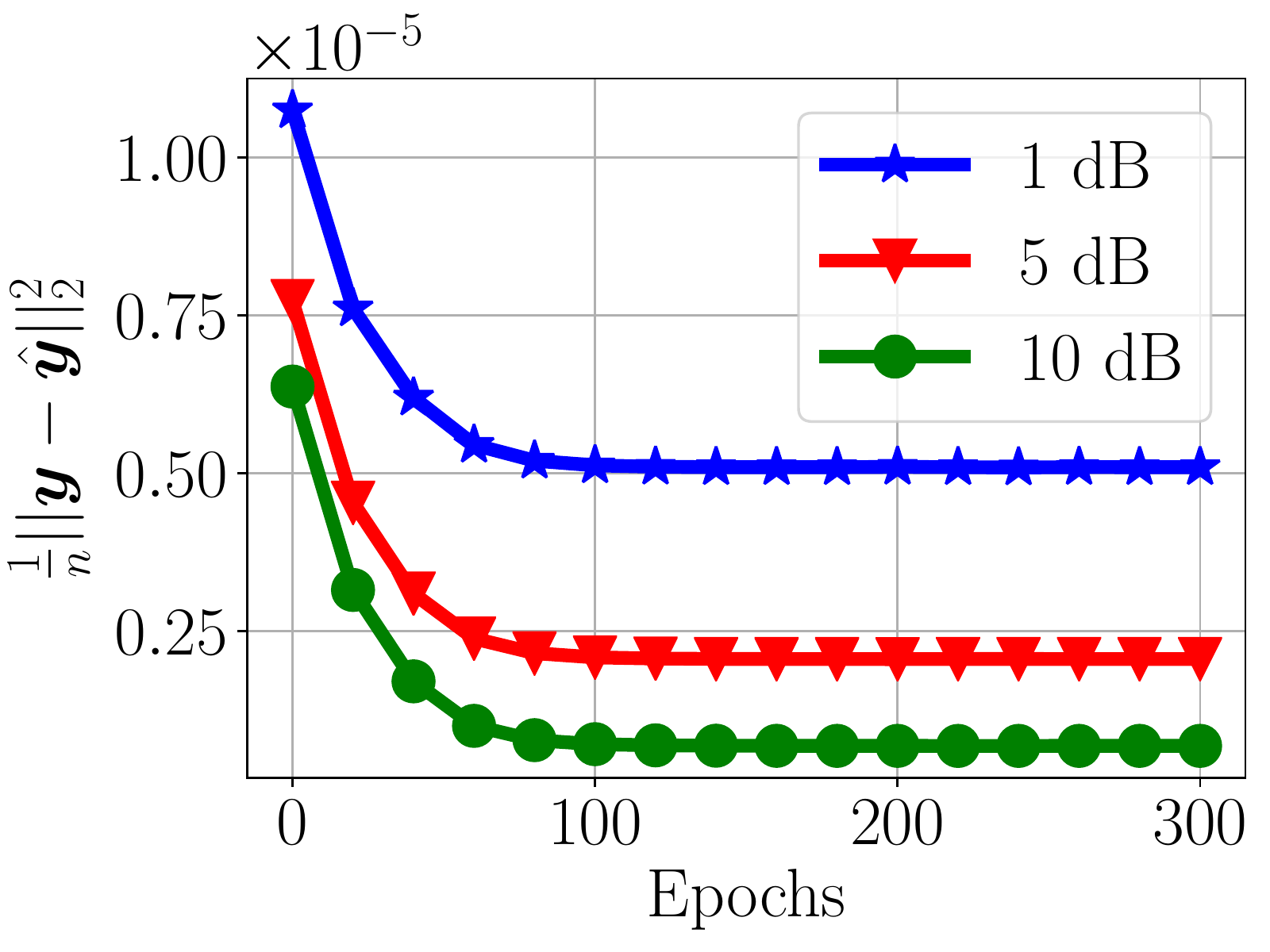}
	    \caption{Training loss of the group-sparse autoencoder across epochs for different noise levels.}
	    \label{fig:threeb}
	\end{subfigure}\hfill
	\begin{subfigure}[t]{0.24\textwidth}
	    \centering
	    \includegraphics[width=\textwidth]{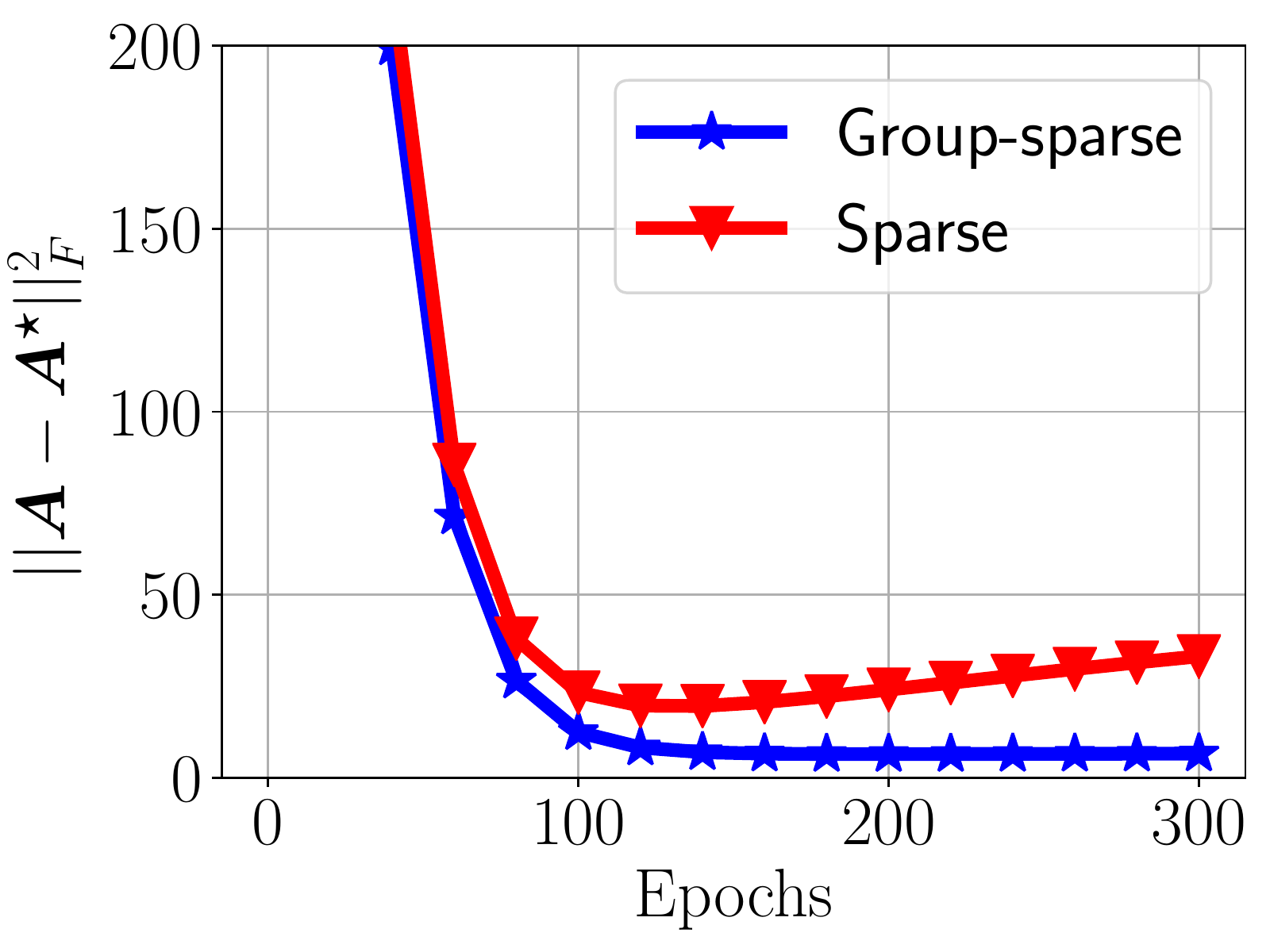}
	    \caption{Comparing a group-sparse and a sparse network in the recovery of the generating matrix.}
	    \label{fig:threec}
	\end{subfigure}\hfill
	\begin{subfigure}[t]{0.24\textwidth}
        \centering
	    \includegraphics[width=\textwidth]{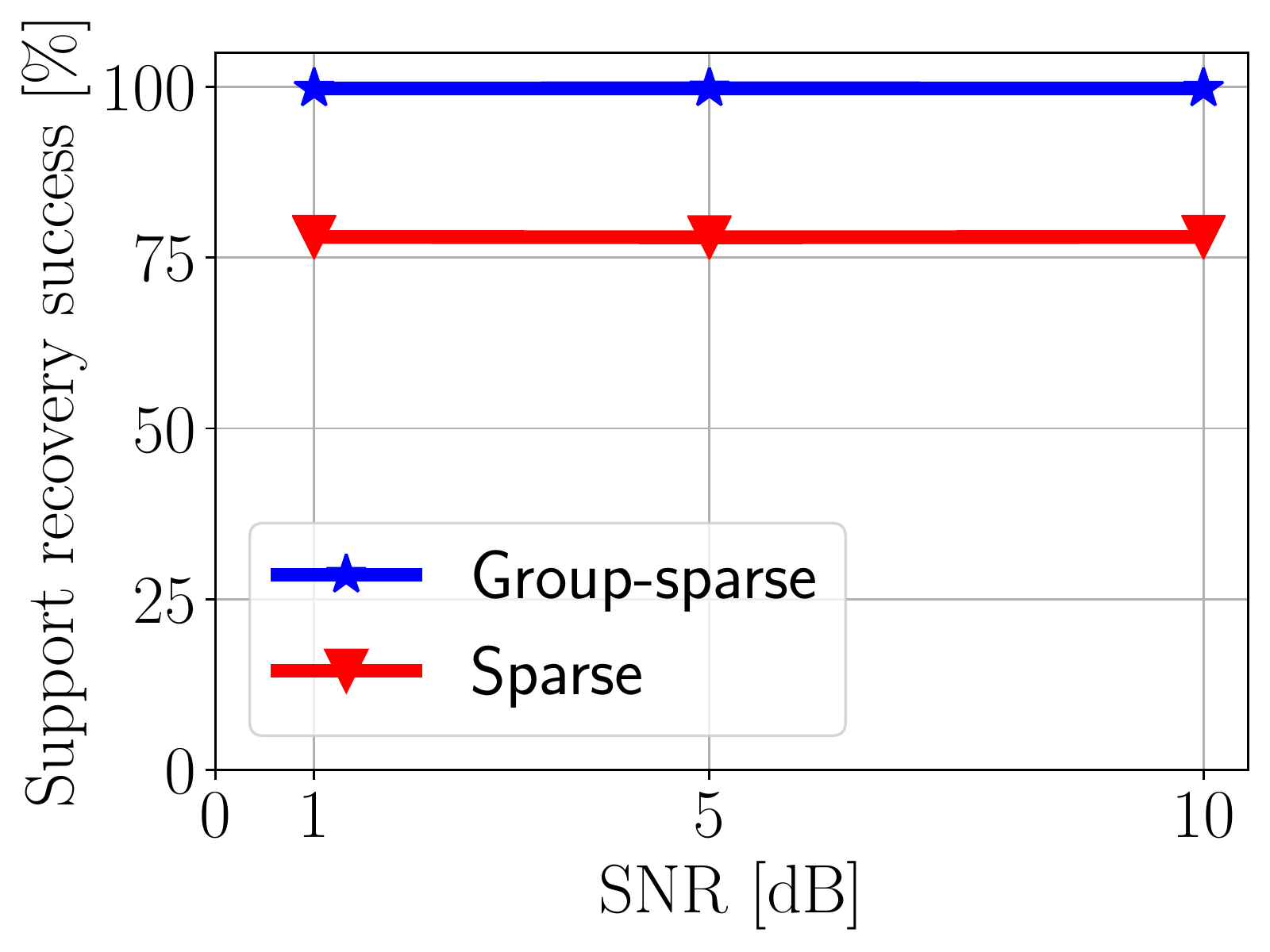}
	    \caption{Success rate in recovering the correct support for the group-sparse and sparse architectures.}
	    \label{fig:threed}
	\end{subfigure}
	\caption{Numerical simulations highlighting the superior performance of the proposed model versus a sparse autoencoder.}
\end{figure*}

\section{Experimental validation}
\label{sec:exper}
We support our theoretical results via numerical simulations (\cref{sec:sim_exp}) and evaluate the performance of group-sparse learning on a clustering task (\cref{sec:real_exp}). We highlight the superior performance of group-sparse coding compared to traditional sparse coding, and the natural clustering induced by the group-sparse autoencoder in real-data experiments.

\subsection{Simulation experiments}\label{sec:sim_exp}
In this section we show that training our proposed architecture recovers the parameters of the generative model and showcase superior performance to a traditional sparse autoencoder, both in recovering the generating dictionary, and in the ability of the encoder to identify nonzero blocks of units and their activations. 

\textbf{Data generation.} We generated a set of data points $\{\bm{y}_i\}_{i=1}^N \in \mathcal{R}^{950}$ consisting of $N = 10{,}000$ examples following the noisy group-sparse generative model $\bm{y}_i = \bm{A}^{\ast}\bm{x}^{\ast}_i + \bm{z}_i$, where the data are corrupted by strong additive zero-mean Gaussian noise with signal-to-noise-ratio (SNR) equal to $1$, $5$, and $10$ dB. We let $\bm{A}^{\ast} \in \mathcal{R}^{n \times m}$ to comprise $\Gamma = 500$ groups, each of size $d = 2$ (i.e. $m = 1000$). We sampled the entries of each column $\bm{a}^{\ast}_i$ from a zero-mean Gaussian distribution and normalized each column.  The codes $\bm{x}^{\ast}_i \in \mathbb{R}^{m}$ contains $3$ active groups chosen uniformly at random from the set $[\Gamma]$ (i.e. the code is $6$-sparse). We sampled the code entries of active groups according to the $\mathcal{N}(0,1)$ distribution; after normalizing the vector of coefficients $\bm{x}_g$ in an active group, we scaled them with a factor $c \sim \text{Uniform}(4,5)$.

\textbf{Training.} We initialized the weights of a group-sparse autoencoder with an extreme perturbation of the generating dictionary (i.e. $\bm{A} = \bm{A^{\ast}} + \bm{B}$ with $\bm{B} \sim \mathcal{N}(\bm{0}, \sigma^2_B \bm{I})$), where the average correlation between the columns of $\bm{A}^{\ast}$ and $\bm{A}$ is approximately $0.15$. We chose this initialization procedure over a random initialization to make sure the weights are far from $\bm{A^{\ast}}$, but also to minimize the possibility of column permutations (so that we can compare $\bm{A}$ to $\bm{A}^{\ast}$ without solving the combinatorial problem of matching columns) \citep{AAJ+16, TDB20}. We trained the architecture for $300$ epochs with full-batch gradient descent using the Adam optimizer with a learning rate $\eta = 10^{-3}$ and bias $\lambda = 2$. During training, we did not constrain the weights to have unit norm. We measured the distance between the network weights and the generating ones at each iteration using the Frobenius norm $\lVert\bm{A} - \bm{A}^{\ast} \rVert_F^2$ of the difference between the normalized weights.

\textbf{Results.} \cref{fig:threea} supports our theory by demonstrating the convergence of $\bm{A} \rightarrow \bm{A}^{\ast}$ and how the error $\lVert \bm{A} - \bm{A}^{\ast} \rVert_F^2$ decreases as a function of epochs for various SNRs. This figure extends our theory and highlights the robustness of the network to extreme noise. \cref{fig:threeb} shows the reconstruction loss as a function of epochs for various SNRs.

We compared the group-sparse autoencoder with a sparse autoencoder for all SNRs. For a SNR of $1$ dB, \cref{fig:threec} shows that by taking into account the structured sparsity of the data, we estimate the generating dictionary better; the group-sparse network convergences to a closer neighbourhood of the dictionary than the sparse one. We attribute this superiority to the better sparse coding performance (i.e. better recovery of the support at the encoder) of the group-sparse autoencoder compared to that of the sparse network. Indeed, based on our theory, to ensure the correct recovery of the support, the group-sparse analysis only requires a bound on the \emph{norm} of each group. In contrast, the sparse autoencoder of \citet{NWH19}, enforces conditions on the energy of individual entries of the code. \cref{fig:threed} examines the codes at the output of the encoder; we observe that unlike the sparse network, the group-sparse autoencoder has successfully recovered the correct support.

\begin{figure*}[ht!]
    \centering
	\begin{subfigure}[t]{0.24\textwidth}
        \includegraphics[width=\textwidth]{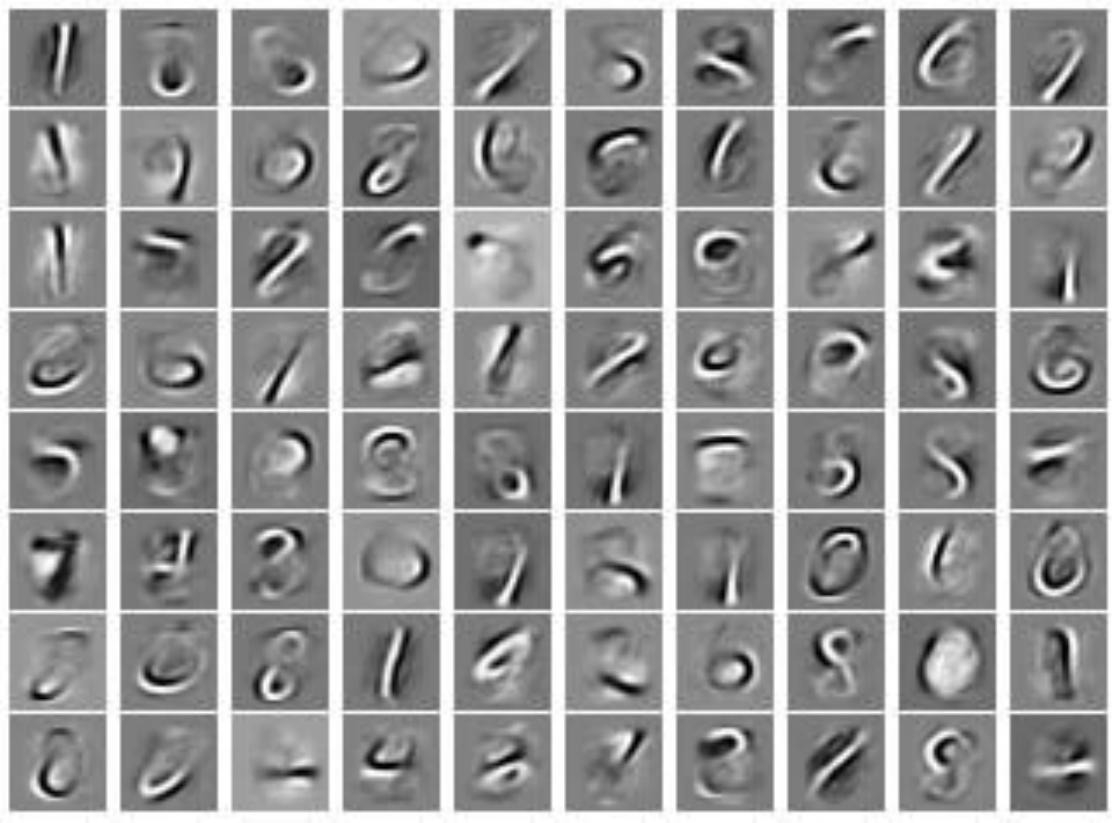}
	    \caption{Learned representations using a sparse autoencoder.}
	    \label{fig:mnist_dictionariesa}
	\end{subfigure}\hfill
	\begin{subfigure}[t]{0.24\textwidth}
        \includegraphics[width=\textwidth]{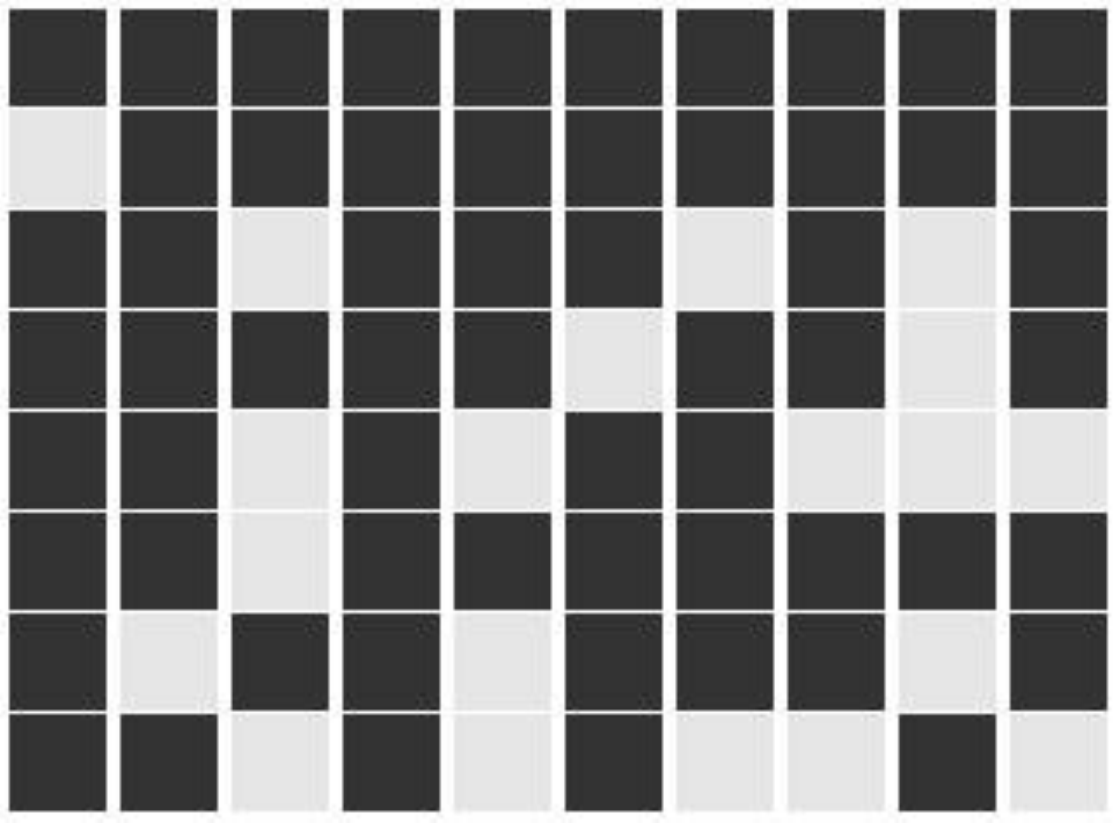}
	    \caption{Activation pattern of a code for the sparse architecture.}
	    \label{fig:mnist_dictionariesb}
	\end{subfigure}\hfill
	\begin{subfigure}[t]{0.24\textwidth}
        \includegraphics[width=\textwidth]{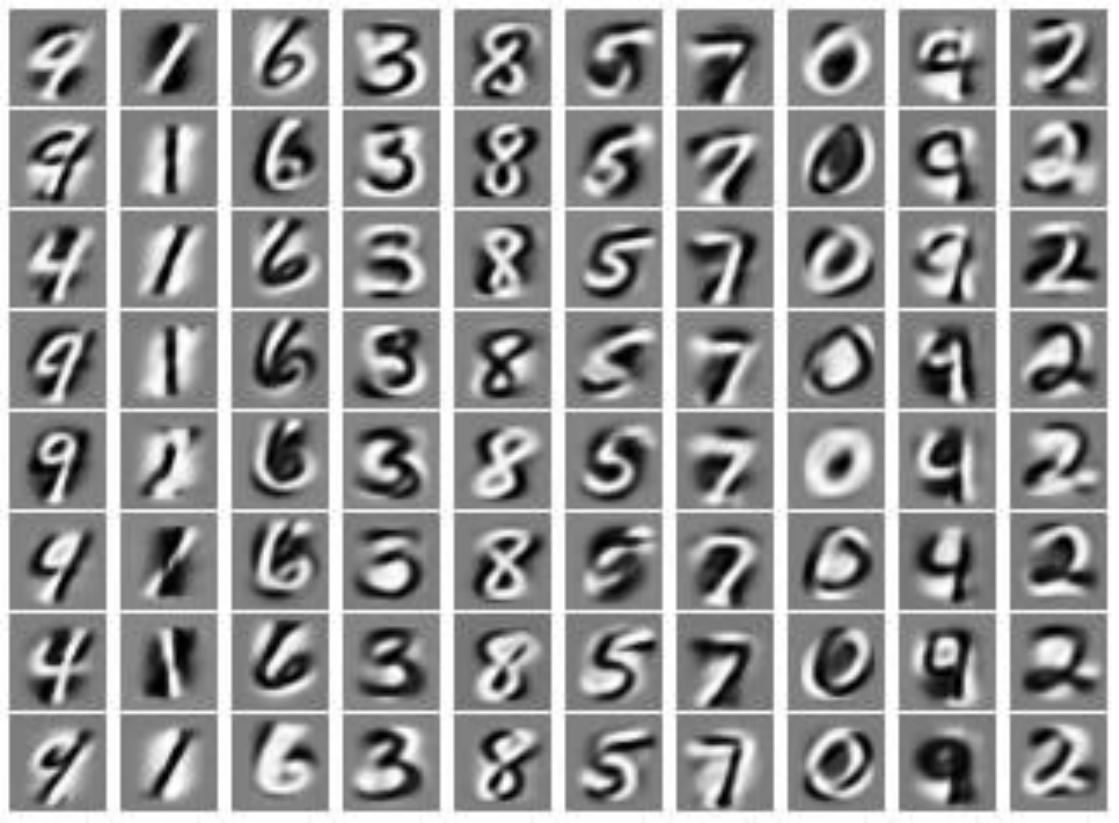}
	    \caption{Learned representations using a group-sparse autoencoder.}
	    \label{fig:mnist_dictionariesc}
	\end{subfigure}\hfill
	\begin{subfigure}[t]{0.24\textwidth}
        \includegraphics[width=\textwidth]{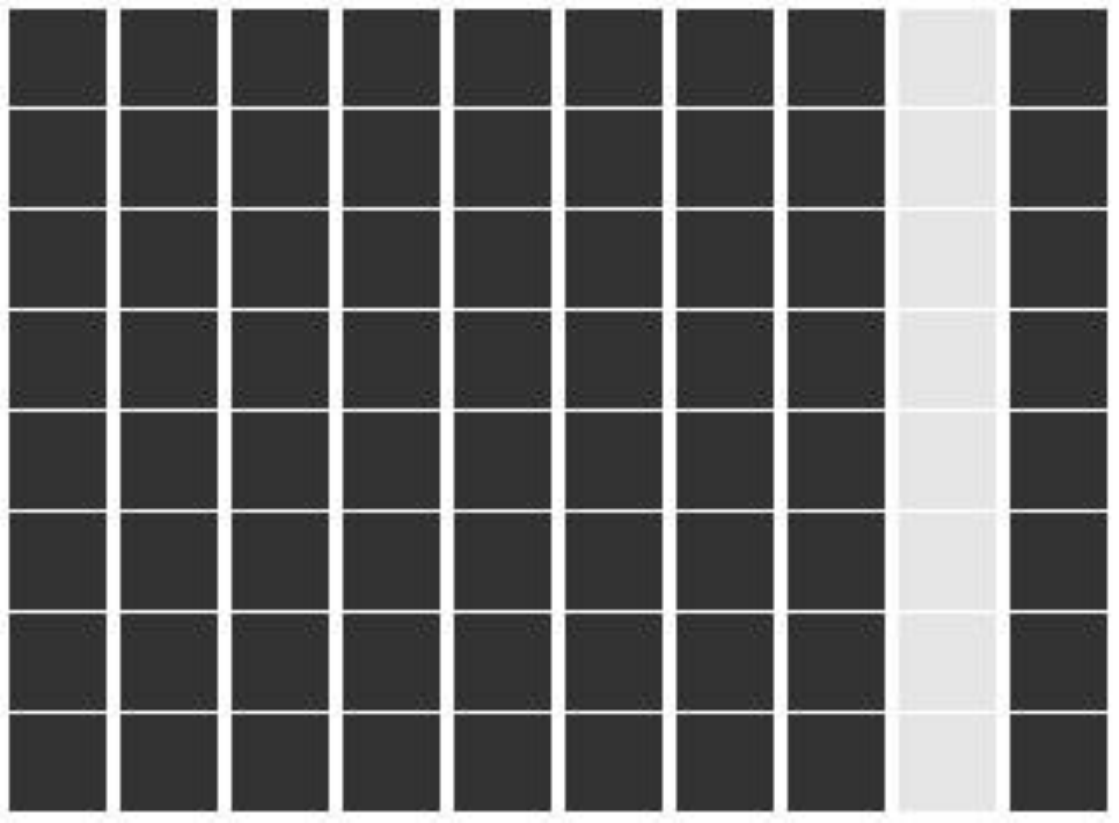}
	    \caption{Activation pattern of a code for the group-sparse architecture.}
	    \label{fig:mnist_dictionariesd}
	\end{subfigure}
	\caption{Representations and code activation patterns learned by the group-sparse and sparse autoencoders.}
	\label{fig:mnist_dictionaries}
\end{figure*}

\begin{figure*}[ht!]
    \centering
	\begin{subfigure}[t]{0.32\textwidth}
	    \centering
        \includegraphics[width=0.9\textwidth]{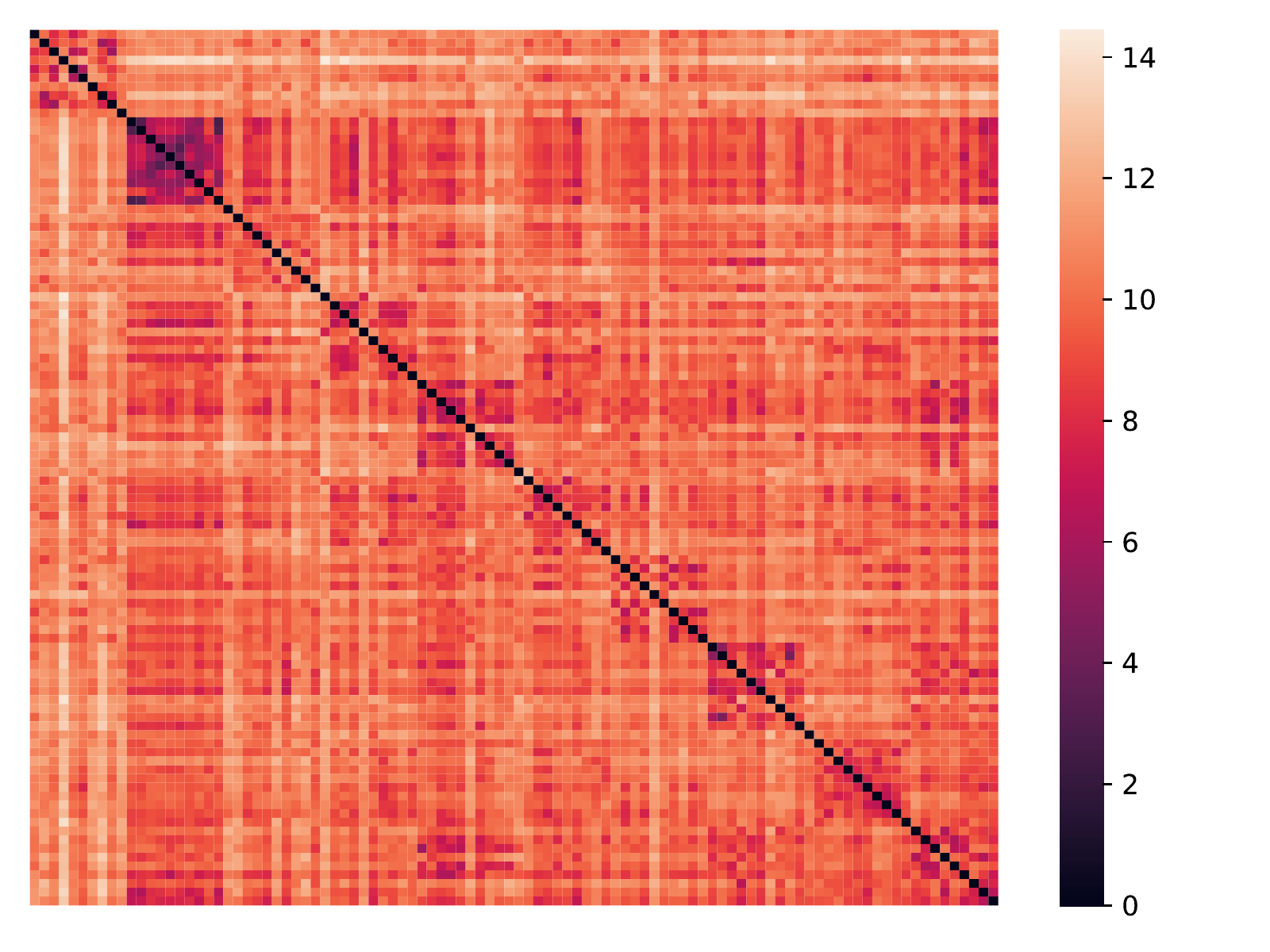}
	    \caption{Similarity matrix of MNIST images when representing the data using their raw format (pixel basis).}
	\end{subfigure}\hfill
	\begin{subfigure}[t]{0.32\textwidth}
	    \centering
        \includegraphics[width=0.9\textwidth]{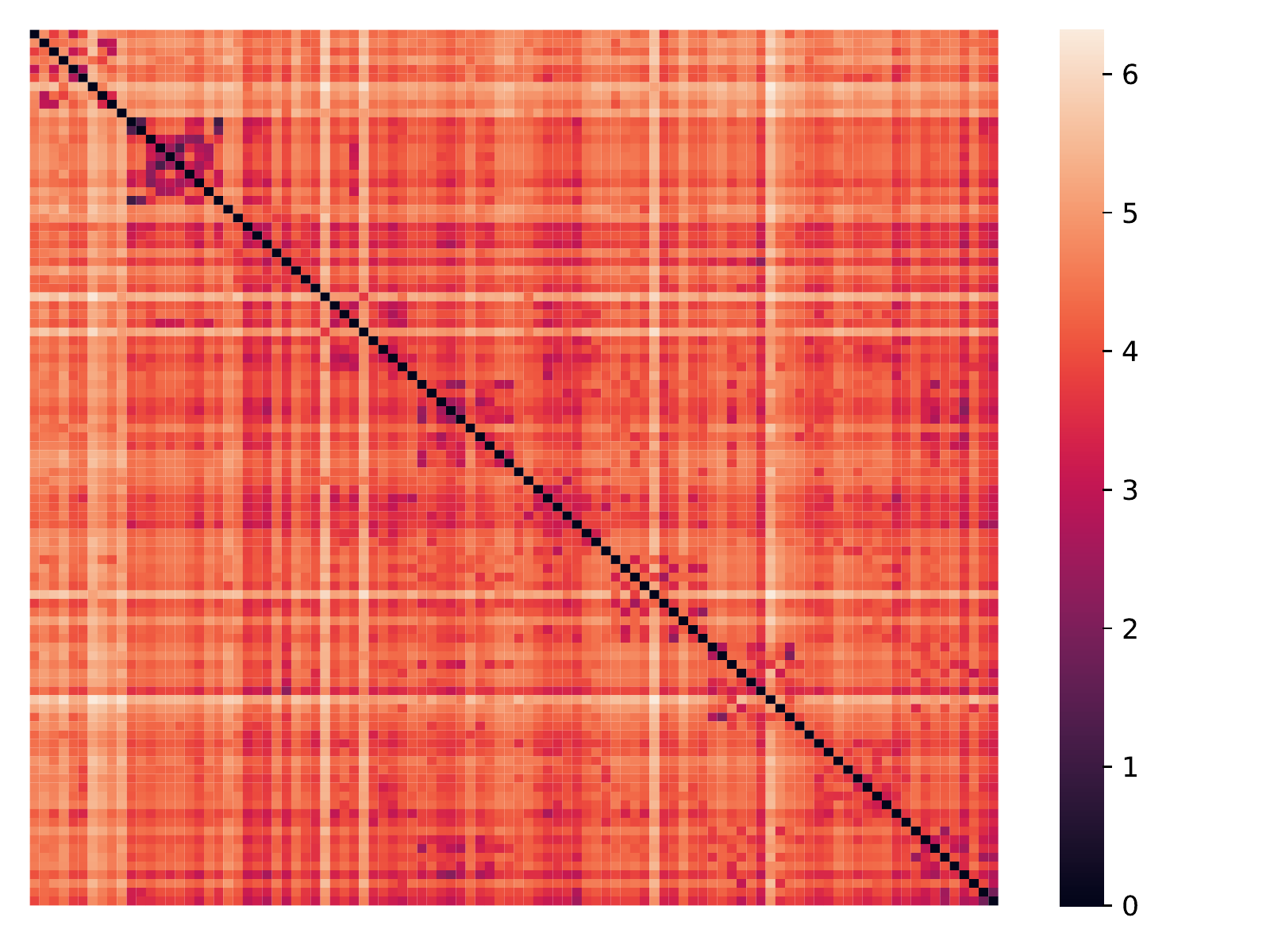}
	    \caption{Similarity matrix of MNIST images on the latent representation learned by a sparse autoencoder.}
	\end{subfigure}\hfill
	\begin{subfigure}[t]{0.32\textwidth}
	    \centering
        \includegraphics[width=0.9\textwidth]{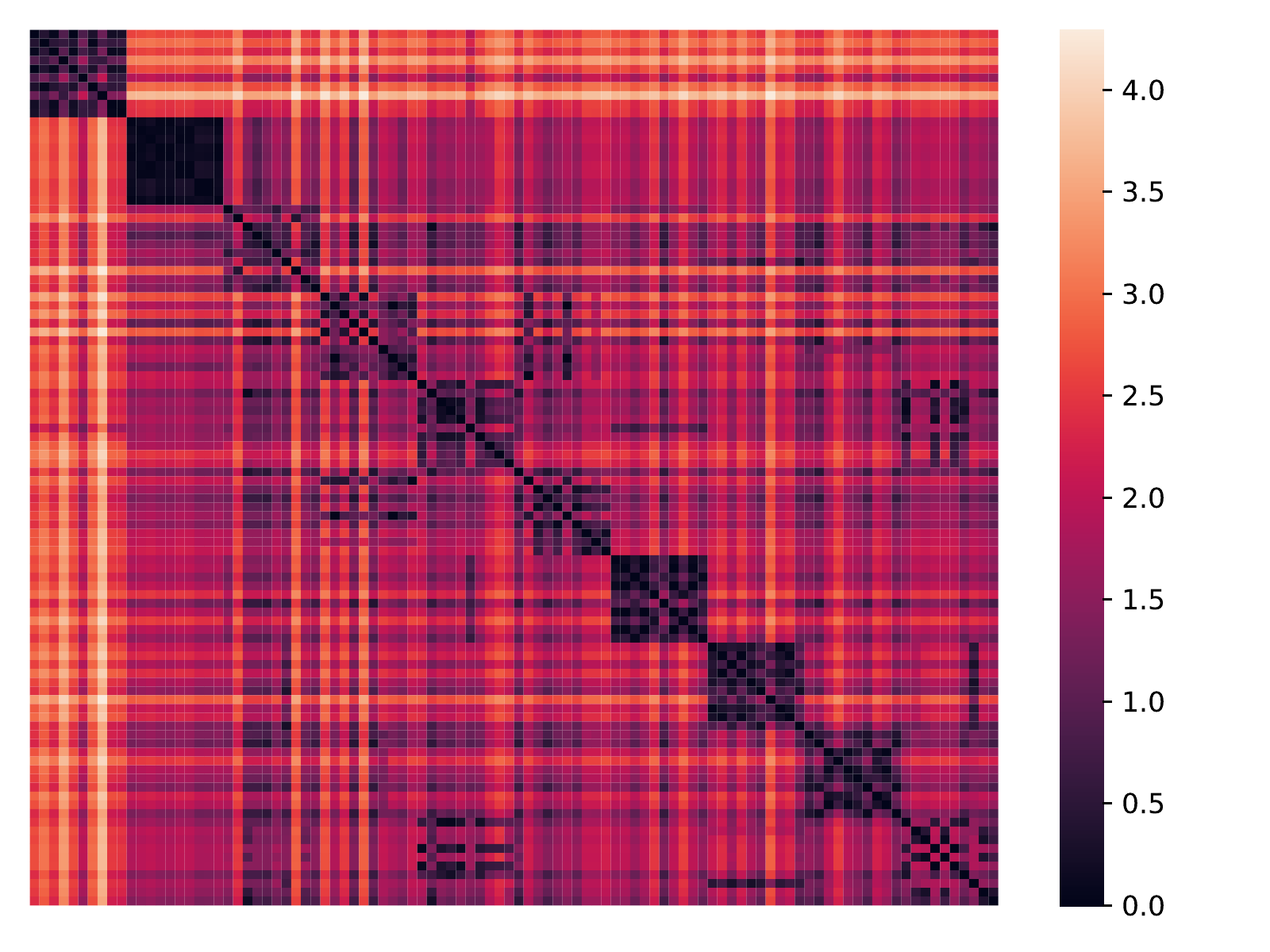}
	    \caption{Similarity matrix of MNIST images on the latent representation learned by a group-sparse autoencoder}
	\end{subfigure}
	\caption{Pairwise distances between the representations of MNIST test images for different latent representations.}
	\label{fig:distance_matrices}
\end{figure*}
\subsection{Real-data experiments}\label{sec:real_exp}

In this section, we demonstrate the clustering capability of our proposed group-sparse autoencoder on the MNIST Harwritten Digit dataset.

\textbf{Training.} We train our architecture on $60{,}000$ grayscale images of size $28 \times 28$ from the MNIST dataset. We set the number of groups to $\Gamma = 10$ each of size $s = 16$ and we unfold our encoder for $15$ iterations. We set the group-sparsity-inducing bias to $\lambda = 0.2$ and train the network for $300$ epochs with the Adam optimizer using a learning rate $\eta = 10^{-3}$. Using the same approach, settings, and a sparsity-inducing bias of $\lambda = 0.03$, we train a sparse autoencoder \citep{TDB18, NWH19} with $\bm{A} \in \mathcal{R}^{784 \times 160}$ as baseline.

\textbf{Results.} We observe that the weights learned by the group-sparse network (\cref{fig:mnist_dictionariesc}) are more interpretable than  those learned by a sparse network (\cref{fig:mnist_dictionariesa}). The sparse dictionaries of \cref{fig:mnist_dictionariesa} resemble the individual pen strokes of handwritten digits and do not reflect the underlying class membership. On the other hand, the group-sparse model is able to learn higher level features, shown in \cref{fig:mnist_dictionariesc}, that naturally resemble members of the cluster and reflect the ten distinct digit classes of the dataset.

\cref{fig:mnist_dictionariesb,fig:mnist_dictionariesd} shows examples of the sparse supports recovered using a sparse and a group-sparse architecture, respectively. We clearly observe that while the sparse support is spread throughout the coding vector, in stark contrast the group-sparse one is heavily structured. This meaningful structured connectivity (i.e., grouping of the dictionary atoms) is especially noteworthy considering that the network was trained \emph{without} the use of the ground truth class label information.

\begin{table}[t]
    \centering
    \begin{tabular}{cccc}
	\toprule
        & $k$-means & Spectral clustering\\
        \midrule
        Raw representation & $54.46$ & $63.04$\\
        Sparse codes & $54.14$ & $55.49$\\
        Sparse codes\textsubscript{$\bm{+}$} & $60.94$ & $56.98$\\
        Group-sparse codes & $\mathbf{74.5}$ & $\mathbf{79.4}$\\
        Group-sparse codes\textsubscript{$\bm{+}$} & $\mathbf{80.09}$ & $\mathbf{80.27}$\\
        \bottomrule
    \end{tabular}
    \caption{Test accuracy on the MNIST dataset ($N = 10{,}000$) for all $10$ classes. The subscript $(\cdot)_{\bm{+}}$ indicates that the coefficients were made nonnegative and scaled to sum to $1$.}
    \label{tab:clustering_accuracy}
\end{table}

\cref{fig:distance_matrices} shows pairwise distances between $100$ MNIST test images of each of the $10$ digit classes with respect to the pixel basis, the sparse dictionary of \cref{fig:mnist_dictionariesa}, and the group-sparse dictionary of \cref{fig:mnist_dictionariesc}. We report that the similarity structure of the group-sparse dictionary lends itself most readily to standard similarity-based clustering algorithms.
The reported performance gains attained by using group-sparsity in application domains such as clustering highlight the importance more structured notions of sparsity.

Finally, in \cref{tab:clustering_accuracy} we evaluate the efficacy of the dictionaries in \cref{fig:mnist_dictionariesa,fig:mnist_dictionariesc} when used as a pre-processing step for clustering. In particular, we performed clustering using $k$-means with $10$ centroids and spectral clustering on the similarity graph of $1{,}000$ nearest neighbors. The input to these algorithms is the representation of the digits in the pixel basis (first row), the latent representations learned with a sparse autoencoder (second and third rows), or the group-sparse autoencoder (fourth and fifth rows). We observed that performing clustering on the group-structured representations dramatically improved performance over baselines, and over a sparse approach. This further solidifies our findings that group-sparse architectures perform a natural grouping of the unlabeled data.

\section{Conclusions}
\label{sec:concl}
In this work we studied the gradient dynamics of a group-sparse, shallow autoencoder. Motivated by the connection between group-sparsity and cluster membership, we first introduced the group-sparse generative family and highlighted its expressivity. We then proceeded into proving, under mild conditions, that training the architecture with gradient descent will result into the convergence of the weight matrix to a neighborhood of the true weights. Finally, we provided numerical and real data experiments to support the developed theory.

\textbf{Acknowledgement}\\
This work is supported by the National Science Foundation under Cooperative Agreement PHY-2019786 (The NSF AI Institute for Artificial Intelligence and Fundamental Interactions, http://iaifi.org/).

\printbibliography
\newpage
\section*{Appendix A: Group-norm bounds}
In this section we will provide proofs for \cref{prop:lower,prop:upper}.
\begin{proposition*}[Group-norm lower bound]
The norm of the term $\bm{A}_g^T\bm{A}^{\ast}_g\bm{x}^{\ast}_g$ is lower-bounded by
\begin{equation}
	\lVert \bm{A}_g^T\bm{A}^{\ast}_g\bm{x}^{\ast}_g \rVert_2 \geq B_{\min}(1 - \delta).
\end{equation}
\end{proposition*}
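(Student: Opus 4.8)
The plan is a standard first‑order perturbation argument: peel off the generating group $\bm{A}^{\ast}_g$ as a perturbation of the estimate $\bm{A}_g$, exploit the orthonormality $\bm{A}_g^T\bm{A}_g = \bm{I}$ to extract an identity term, and control the remainder by the spectral norm of the perturbation. Concretely, set $\bm{E}_g = \bm{A}_g - \bm{A}^{\ast}_g$, so that by the model condition $\lVert\bm{E}_g\rVert_2 \leq \lVert\bm{E}_g\rVert_F \leq \delta$. Writing $\bm{A}^{\ast}_g = \bm{A}_g - \bm{E}_g$ and using $\bm{A}_g^T\bm{A}_g = \bm{I}$ gives $\bm{A}_g^T\bm{A}^{\ast}_g = \bm{I} - \bm{A}_g^T\bm{E}_g$, hence
\begin{equation*}
    \bm{A}_g^T\bm{A}^{\ast}_g\bm{x}^{\ast}_g = \bm{x}^{\ast}_g - \bm{A}_g^T\bm{E}_g\bm{x}^{\ast}_g.
\end{equation*}

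Next I would apply the reverse triangle inequality to get $\lVert \bm{A}_g^T\bm{A}^{\ast}_g\bm{x}^{\ast}_g\rVert_2 \geq \lVert\bm{x}^{\ast}_g\rVert_2 - \lVert\bm{A}_g^T\bm{E}_g\bm{x}^{\ast}_g\rVert_2$, and then bound the cross term by submultiplicativity of the spectral norm: $\lVert\bm{A}_g^T\bm{E}_g\bm{x}^{\ast}_g\rVert_2 \leq \lVert\bm{A}_g\rVert_2\lVert\bm{E}_g\rVert_2\lVert\bm{x}^{\ast}_g\rVert_2 \leq \delta\lVert\bm{x}^{\ast}_g\rVert_2$, where $\lVert\bm{A}_g\rVert_2 = 1$ because $\bm{A}_g$ has orthonormal columns. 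Combining, $\lVert \bm{A}_g^T\bm{A}^{\ast}_g\bm{x}^{\ast}_g\rVert_2 \geq (1-\delta)\lVert\bm{x}^{\ast}_g\rVert_2$. Since $\delta \in [0,1]$ the factor $(1-\delta)$ is nonnegative, so the lower bound $\lVert\bm{x}^{\ast}_g\rVert_2 \geq B_{\min}$ from the bounded‑norms assumption yields $\lVert \bm{A}_g^T\bm{A}^{\ast}_g\bm{x}^{\ast}_g\rVert_2 \geq B_{\min}(1-\delta)$, as claimed.

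There is no real obstacle here — the argument is a one‑line perturbation estimate. The only point requiring mild care is the direction of the monotonicity step: one must verify that $(1-\delta)\geq 0$ before replacing $\lVert\bm{x}^{\ast}_g\rVert_2$ by its lower bound $B_{\min}$, which is exactly why the assumption $\delta\in[0,1]$ is imposed. (The same $\lVert\cdot\rVert_2\le\lVert\cdot\rVert_F$ comparison and the $\lVert\bm{A}_g\rVert_2=1$ fact are used again in the companion upper bound of \cref{prop:upper}, so it is natural to isolate them once.)
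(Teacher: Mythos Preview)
Your proof is correct and essentially identical to the paper's: both add and subtract the identity via $\bm{A}_g^T\bm{A}_g=\bm{I}$, apply the reverse triangle inequality, bound the perturbation $\bm{A}_g^T(\bm{A}_g^{\ast}-\bm{A}_g)$ by submultiplicativity together with $\lVert\bm{A}_g\rVert_2=1$ and $\lVert\bm{A}_g-\bm{A}_g^{\ast}\rVert_2\le\delta$, and then invoke $\lVert\bm{x}_g^{\ast}\rVert_2\ge B_{\min}$. Your write-up is in fact slightly more careful in making explicit the $\lVert\cdot\rVert_2\le\lVert\cdot\rVert_F$ comparison and the need for $1-\delta\ge0$ before applying the lower bound on $\lVert\bm{x}_g^{\ast}\rVert_2$.
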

\begin{proof}
	We have
	\begin{align}
		\begin{split}
		\lVert \bm{A}_g^T\bm{A}^{\ast}_g\bm{x}^{\ast}_g \rVert_2 &= \lVert (\bm{A}_g^T\bm{A}^{\ast}_g -\bm{I})\bm{x}^{\ast}_g + \bm{x}^{\ast}_g \rVert_2\\
		&\geq \lVert\bm{x}^{\ast}_g \rVert_2 - \lVert(\bm{A}_g^T\bm{A}^{\ast}_g -\bm{I})\bm{x}^{\ast}_g\rVert_2\\
		&\geq \lVert\bm{x}^{\ast}_g \rVert_2 - \lVert\bm{A}_g^T\bm{A}^{\ast}_g -\bm{I}\rVert_2\lVert\bm{x}^{\ast}_g\rVert_2\\
		&\geq (1 - \lVert\bm{A}_g^T\bm{A}^{\ast}_g -\bm{I}\rVert_2)\lVert\bm{x}^{\ast}_g\rVert_2.
		\end{split}
	\end{align}
	For $\lVert\bm{A}_g^T\bm{A}^{\ast}_g -\bm{I}\rVert_2$ it holds that
	\begin{align}
		\begin{split}
		\lVert\bm{A}_g^T\bm{A}^{\ast}_g -\bm{I}\rVert_2 &= \lVert\bm{A}_g^T(\bm{A}^{\ast}_g -\bm{A}_g)\rVert_2\\
		&\leq \lVert\bm{A}_g^T\rVert_2\lVert\bm{A}^{\ast}_g -\bm{A}_g\rVert_2\\
		&\leq \delta.
		\end{split}
	\end{align}
	Therefore, we finally get
	\begin{align}
		\begin{split}
		\lVert \bm{A}_g^T\bm{A}^{\ast}_g\bm{x}^{\ast}_g \rVert_2 &\geq (1 - \lVert\bm{A}_g^T\bm{A}^{\ast}_g -\bm{I}\rVert_2)\lVert\bm{x}^{\ast}_g\rVert_2\\
		&\geq (1-\delta)\lVert\bm{x}^{\ast}_g\rVert_2\\
		&\geq (1-\delta) B_{\min}.
		\end{split}
	\end{align}
\end{proof}

\begin{proposition*}[Cross-term upper bound]
The norm of the term $\sum_{h \in S}\bm{A}_v^T\bm{A}^{\ast}_h\bm{x}^{\ast}_h$ is upper-bounded by
\begin{equation}
	\lVert \sum_{h \in S}\bm{A}_v^T\bm{A}^{\ast}_h\bm{x}^{\ast}_h \rVert_2 \leq \gamma B_{\max}(\mu_B + \delta).
\end{equation}
\end{proposition*}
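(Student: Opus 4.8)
The plan is to reduce to a single-group estimate via the triangle inequality over the (at most $\gamma$) active groups, and then control each term by submultiplicativity together with a first-factor perturbation split. Since $\operatorname{card}(S)\le\gamma$,
\begin{equation*}
\Bigl\lVert \sum_{h \in S}\bm{A}_v^T\bm{A}^{\ast}_h\bm{x}^{\ast}_h \Bigr\rVert_2 \le \sum_{h \in S}\lVert \bm{A}_v^T\bm{A}^{\ast}_h\bm{x}^{\ast}_h\rVert_2 \le \gamma\, \max_{h \in S}\lVert \bm{A}_v^T\bm{A}^{\ast}_h\bm{x}^{\ast}_h\rVert_2 ,
\end{equation*}
so it suffices to show $\lVert \bm{A}_v^T\bm{A}^{\ast}_h\bm{x}^{\ast}_h\rVert_2 \le B_{\max}(\mu_B+\delta)$ for each fixed $h\in S$.

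Fix such an $h$ (so $h\ne v$, since $v\notin S$). By submultiplicativity and the bounded-norm assumption, $\lVert \bm{A}_v^T\bm{A}^{\ast}_h\bm{x}^{\ast}_h\rVert_2 \le \lVert\bm{A}_v^T\bm{A}_h^\ast\rVert_2\,\lVert\bm{x}_h^\ast\rVert_2 \le B_{\max}\lVert\bm{A}_v^T\bm{A}_h^\ast\rVert_2$, and it remains to bound the operator norm $\lVert\bm{A}_v^T\bm{A}_h^\ast\rVert_2$. Mirroring the algebraic manipulation in the proof of \cref{prop:lower}, I would split on the \emph{first} factor,
\begin{equation*}
\bm{A}_v^T\bm{A}_h^\ast = \bm{A}_v^{\ast T}\bm{A}_h^\ast + (\bm{A}_v - \bm{A}_v^\ast)^T\bm{A}_h^\ast ,
\end{equation*}
so that the coherence term involves only $\bm{A}^\ast$. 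The first term is the cross-Gram of two distinct generating blocks, so the definition of block coherence gives $\lVert\bm{A}_v^{\ast T}\bm{A}_h^\ast\rVert_2 \le \mu_B$ (more precisely $d\mu_B$, the block-size factor $d$ being absorbed into the statement's constants). The second term is bounded by $\lVert\bm{A}_v - \bm{A}_v^\ast\rVert_2\,\lVert\bm{A}_h^\ast\rVert_2 \le \lVert\bm{A}_v - \bm{A}_v^\ast\rVert_F \le \delta$, using that the spectral norm is dominated by the Frobenius norm, that $\lVert\bm{A}_h^\ast\rVert_2 = 1$, and the model condition $\lVert\bm{A}_g - \bm{A}_g^\ast\rVert_F\le\delta$ invoked for $g=v$ (understood to hold for every group, not only those in $S$). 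Combining, $\lVert\bm{A}_v^T\bm{A}_h^\ast\rVert_2 \le \mu_B + \delta$; hence each summand is at most $B_{\max}(\mu_B+\delta)$, and multiplying by $\gamma$ yields the claim.

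This argument needs no probabilistic input — the crude bound $\operatorname{card}(S)\le\gamma$ does all the work, and the distribution of the support only matters later, when this $\gamma$ is weighed against $p_g=\Theta(\gamma/\Gamma)$ in the descent analysis. The only delicate points are bookkeeping ones: one must split on the first factor of $\bm{A}_v^T\bm{A}_h^\ast$ (splitting on the second factor would leave the learned cross-Gram $\bm{A}_v^T\bm{A}_h$, which is not directly controlled by $\mu_B$), and one must settle the role of the block size $d$ in the coherence bound, with the stated inequality $\gamma B_{\max}(\mu_B+\delta)$ corresponding to treating $d$ as a constant. Everything else is a routine chain of triangle and submultiplicativity inequalities, exactly parallel to \cref{prop:lower}.
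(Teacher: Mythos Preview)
Your proposal is correct and follows essentially the same route as the paper: triangle inequality over the at most $\gamma$ active groups, submultiplicativity to peel off $\lVert\bm{x}_h^\ast\rVert_2\le B_{\max}$, and the first-factor split $\bm{A}_v^T\bm{A}_h^\ast=\bm{A}_v^{\ast T}\bm{A}_h^\ast+(\bm{A}_v-\bm{A}_v^\ast)^T\bm{A}_h^\ast$ so that block coherence controls one piece and the $\delta$-closeness the other. Your side remarks are apt: the paper also silently writes $\lVert\bm{A}_v^{\ast T}\bm{A}_h^\ast\rVert_2\le\mu_B$ (dropping the $d$ from the definition $\mu_B=\tfrac{1}{d}\max_{g\ne h}\lVert\bm{A}_g^{\ast T}\bm{A}_h^\ast\rVert_2$), and it tacitly applies the $\delta$-bound to the group $v\notin S$ even though the assumption is stated for $g\in S$.
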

\begin{proof}
	We have that
	\begin{align}
		\begin{split}
		\lVert \sum_{h \in S}\bm{A}_v^T\bm{A}^{\ast}_h\bm{x}^{\ast}_h \rVert_2 &\leq \sum_{h \in S}\lVert\bm{A}_v^T\bm{A}^{\ast}_h\bm{x}^{\ast}_h \rVert_2\\
		&\leq \sum_{h \in S}\lVert(\bm{A}_v^T - \bm{A}^{\ast T}_v + \bm{A}^{\ast T}_v)\bm{A}^{\ast}_h\rVert_2\lVert\bm{x}^{\ast}_h \rVert_2\\
		&\leq \sum_{h \in S}\lVert\bm{A}_v^T - \bm{A}^{\ast T}_v\rVert_2\lVert\bm{A}^{\ast}_h\rVert_2\lVert\bm{x}^{\ast}_h \rVert_2 + \sum_{h \in S}\lVert\bm{A}^{\ast T}_v\bm{A}^{\ast}_h\rVert_2\lVert\bm{x}^{\ast}_h \rVert_2\\
		&\leq \sum_{h \in S}\delta B_{\max} + \sum_{h \in S}\mu_B B_{\max}\\
		&\leq \gamma B_{\max}(\mu_B + \delta).
		\end{split}
	\end{align}
\end{proof}

\section*{Appendix B: Gradient computation}
In this section we show that the gradient with respect to $\bm{A}_g$ of the loss function
\begin{equation}
	\mathcal{L}(\bm{A}) = \tfrac{1}{2}\lVert\bm{y} - \bm{A}\sigma_\lambda(\bm{A}^T\bm{y})\rVert_2^2,
\end{equation}
is given by
\begin{equation}
	\nabla_{\bm{A}_g}\mathcal{L}(\bm{A}) = -\left(\bm{y} - \bm{A}\sigma_{\lambda}(\bm{A}^T\bm{y})\right)\sigma_{\lambda}^T(\bm{A}^T_g\bm{y}) - \bm{y}\left(\bm{y} - \bm{A}\sigma_{\lambda}(\bm{A}^T\bm{y})\right)^T\bm{A}_g\operatorname{diag}(\sigma_{\lambda}'(\bm{A}^T_g\bm{y})).
\end{equation}
We will first compute the gradient of $\mathcal{L}(\bm{A})$ with respect to a column $\bm{a}_i$, and then combine the results into a matrix. We have
\begin{align}
	\begin{split}
	\nabla_{\bm{a}_i}\mathcal{L}(\bm{A}) &= \nabla_{\bm{a}_i} \tfrac{1}{2}(\bm{y} - \bm{A}\sigma_{\lambda}(\bm{A}^T\bm{y}))^T(\bm{y} - \bm{A}\sigma_{\lambda}(\bm{A}^T\bm{y}))\\
	&= \tfrac{1}{2} \nabla_{\bm{a}_i} (\bm{y} - \bm{A}\sigma_{\lambda}(\bm{A}^T\bm{y})) \cdot (\bm{y} - \bm{A}\sigma_{\lambda}(\bm{A}^T\bm{y})) + \tfrac{1}{2} \nabla_{\bm{a}_i} (\bm{y} - \bm{A}\sigma_{\lambda}(\bm{A}^T\bm{y})) \cdot (\bm{y} - \bm{A}\sigma_{\lambda}(\bm{A}^T\bm{y}))\\
	&= -\nabla_{\bm{a}_i} \bm{A}\sigma_{\lambda}(\bm{A}^T\bm{y}) \cdot (\bm{y} - \bm{A}\sigma_{\lambda}(\bm{A}^T\bm{y}))\\
	&= -\nabla_{\bm{a}_i} \sum_i \bm{a}_i\sigma_{\lambda}(\bm{a}_i^T\bm{y}) \cdot (\bm{y} - \bm{A}\sigma_{\lambda}(\bm{A}^T\bm{y}))\\
	&=  -\nabla_{\bm{a}_i} \bm{a}_i\sigma_{\lambda}(\bm{a}_i^T\bm{y}) \cdot (\bm{y} - \bm{A}\sigma_{\lambda}(\bm{A}^T\bm{y})).
	\end{split}
\end{align}
For the gradient $\nabla_{\bm{a}_i} \bm{a}_i\sigma_{\lambda}(\bm{a}_i^T\bm{y})$ we have that
\begin{align}
	\begin{split}
	\nabla_{\bm{a}_i} \bm{a}_i\sigma_{\lambda}(\bm{a}_i^T\bm{y}) &= \nabla_{\bm{a}_i} \begin{bmatrix}
		a_{1i}\sigma_{\lambda}(\bm{a}_{i}^T\bm{y})\\
		\vdots\\
		a_{ni}\sigma_{\lambda}(\bm{a}_{i}^T\bm{y})
\end{bmatrix} =
\begin{bmatrix}
	\nabla_{a_{1i}} a_{1i}\sigma_{\lambda}(\bm{a}_{i}^T\bm{y}) & \dots & \nabla_{a_{1i}} a_{ni}\sigma_{\lambda}(\bm{a}_{i}^T\bm{y})\\
	\vdots & \ddots & \vdots\\
	\nabla_{a_{ni}} a_{1i}\sigma_{\lambda}(\bm{a}_{i}^T\bm{y}) & \dots & \nabla_{a_{ni}} a_{ni}\sigma_{\lambda}(\bm{a}_{i}^T\bm{y})
\end{bmatrix}\\
&= \begin{bmatrix}
	\sigma_{\lambda}(\bm{a}_{i}^T\bm{y}) + a_{1i}\sigma'_{\lambda}(\bm{a}_{i}^T\bm{y})y_1 & a_{2i}\sigma'_{\lambda}(\bm{a}_{i}^T\bm{y})y_1 & \dots & a_{ni}\sigma'_{\lambda}(\bm{a}_{i}^T\bm{y})y_1\\
	a_{1i}\sigma'_{\lambda}(\bm{a}_{i}^T\bm{y})y_2 & \sigma_{\lambda}(\bm{a}_{i}^T\bm{y}) +  a_{2i}\sigma'_{\lambda}(\bm{a}_{i}^T\bm{y})y_2 & \dots & a_{ni}\sigma'_{\lambda}(\bm{a}_{i}^T\bm{y})y_2\\
	\vdots & \vdots & \ddots & \vdots\\
	a_{1i}\sigma'_{\lambda}(\bm{a}_{i}^T\bm{y})y_n & a_{2i}\sigma'_{\lambda}(\bm{a}_{i}^T\bm{y})y_n & \dots & \sigma_{\lambda}(\bm{a}_{i}^T\bm{y}) + a_{ni}\sigma'_{\lambda}(\bm{a}_{i}^T\bm{y})y_n
\end{bmatrix}\\
&=\sigma_{\lambda}(\bm{a}_{i}^T\bm{y}) \bm{I} + \begin{bmatrix}
	a_{1i}\sigma'_{\lambda}(\bm{a}_{i}^T\bm{y})y_1 & a_{2i}\sigma'_{\lambda}(\bm{a}_{i}^T\bm{y})y_1 & \dots & a_{ni}\sigma'_{\lambda}(\bm{a}_{i}^T\bm{y})y_1\\
	a_{1i}\sigma'_{\lambda}(\bm{a}_{i}^T\bm{y})y_2 & a_{2i}\sigma'_{\lambda}(\bm{a}_{i}^T\bm{y})y_2 & \dots & a_{ni}\sigma'_{\lambda}(\bm{a}_{i}^T\bm{y})y_2\\
	\vdots & \vdots & \ddots & \vdots\\
	a_{1i}\sigma'_{\lambda}(\bm{a}_{i}^T\bm{y})y_n & a_{2i}\sigma'_{\lambda}(\bm{a}_{i}^T\bm{y})y_n & \dots & a_{ni}\sigma'_{\lambda}(\bm{a}_{i}^T\bm{y})y_n
\end{bmatrix}\\
&=\sigma_{\lambda}(\bm{a}_{i}^T\bm{y}) \bm{I} + \sigma'_{\lambda}(\bm{a}_{i}^T\bm{y})\bm{y}\bm{a}_i^T
\end{split}
\end{align}
Therefore the gradient with respect to the column $\bm{a}_i$ becomes
\begin{align}
	\begin{split}
	\nabla_{\bm{a}_i}\mathcal{L}(\bm{A}) &= -(\sigma_{\lambda}(\bm{a}_{i}^T\bm{y}) \bm{I} + \sigma'_{\lambda}(\bm{a}_{i}^T\bm{y})\bm{y}\bm{a}_i^T) (\bm{y} - \bm{A}\sigma_{\lambda}(\bm{A}^T\bm{y}))\\
	&= -\underbrace{\sigma_{\lambda}(\bm{a}_{i}^T\bm{y})(\bm{y} - \bm{A}\sigma_{\lambda}(\bm{A}^T\bm{y}))}_{\text{decoder}} - \underbrace{\sigma'_{\lambda}(\bm{a}_{i}^T\bm{y})\bm{y}\bm{a}_i^T (\bm{y} - \bm{A}\sigma_{\lambda}(\bm{A}^T\bm{y}))}_{\text{encoder}}.
	\end{split}
\end{align}
In order to put everything in a single matrix we will deal with the terms corresponding to the encoder and the decoder separately. For the decoder, we have
\begin{align}
	\begin{split}
	\nabla_{\bm{a}_i}\mathcal{L}(\bm{A})_{\text{dec}} &= -\sigma_{\lambda}(\bm{a}_{i}^T\bm{y})(\bm{y} - \bm{A}\sigma_{\lambda}(\bm{A}^T\bm{y}))\\
	\Rightarrow \nabla_{\bm{A}_g}\mathcal{L}(\bm{A})_{\text{dec}} &= \begin{bmatrix}
		 -\sigma_{\lambda}(\bm{a}_{g_1}^T\bm{y})(\bm{y} - \bm{A}\sigma_{\lambda}(\bm{A}^T\bm{y})) & \dots & -\sigma_{\lambda}(\bm{a}_{g_d}^T\bm{y})(\bm{y} - \bm{A}\sigma_{\lambda}(\bm{A}^T\bm{y}))
	\end{bmatrix}\\
	&= -(\bm{y} - \bm{A}\sigma_{\lambda}(\bm{A}^T\bm{y}))\sigma_{\lambda}^T(\bm{A}_{g}^T\bm{y}),
	\end{split}
\end{align}
where $g_i$ denotes the $i$-th column of $\bm{A}_g$. For the encoder, noting that $\bm{a}_i^T (\bm{y} - \bm{A}\sigma_{\lambda}(\bm{A}^T\bm{y}))$ is a scalar, we have
\begin{align}
	\begin{split}
	\nabla_{\bm{a}_i}\mathcal{L}(\bm{A})_{\text{enc}} &= -\sigma'_{\lambda}(\bm{a}_{i}^T\bm{y})\bm{y}\bm{a}_i^T (\bm{y} - \bm{A}\sigma_{\lambda}(\bm{A}^T\bm{y}))\\
	&= -\sigma'_{\lambda}(\bm{a}_{i}^T\bm{y})\bm{y}(\bm{y} - \bm{A}\sigma_{\lambda}(\bm{A}^T\bm{y}))^T\bm{a}_i\\
	\Rightarrow \nabla_{\bm{A}_g}\mathcal{L}(\bm{A})_{\text{enc}} &= \begin{bmatrix}
		-\sigma'_{\lambda}(\bm{a}_{g_1}^T\bm{y})\bm{y}(\bm{y} - \bm{A}\sigma_{\lambda}(\bm{A}^T\bm{y}))^T\bm{a}_{g_1} & \dots & -\sigma'_{\lambda}(\bm{a}_{g_d}^T\bm{y})\bm{y}(\bm{y} - \bm{A}\sigma_{\lambda}(\bm{A}^T\bm{y}))^T\bm{a}_{g_d}
	\end{bmatrix}\\
	&= -\bm{y}(\bm{y} - \bm{A}\sigma_{\lambda}(\bm{A}^T\bm{y}))^T\bm{A}_{g}\operatorname{diag}(\sigma'_{\lambda}(\bm{A}_{g}^T\bm{y})).
	\end{split}
\end{align}
Finally, the gradient of $\mathcal{L}(\bm{A})$ with respect to $\bm{A}_g$ is given by
\begin{equation}
	\label{eq:gradient}
	\nabla_{\bm{A}_g}\mathcal{L}(\bm{A}) = -\left(\bm{y} - \bm{A}\sigma_{\lambda}(\bm{A}^T\bm{y})\right)\sigma_{\lambda}^T(\bm{A}^T_g\bm{y}) - \bm{y}\left(\bm{y} - \bm{A}\sigma_{\lambda}(\bm{A}^T\bm{y})\right)^T\bm{A}_g\operatorname{diag}(\sigma_{\lambda}'(\bm{A}^T_g\bm{y})).
\end{equation}

\section*{Appendix C: Convergence of gradient descent}
\subsection*{Expected gradient}
In this section we will prove the convergence of gradient descent under our assumptions, and also provide proofs for \cref{thrm:aligned,thrm:main_res}. Note that we can write
\begin{align}
	\begin{split}
	\sigma_{\lambda}(\bm{A}_g^T \bm{y}) &= \mathbbm{1}_{\bm{x}_g \neq 0} \left(1 - \frac{\lambda}{\lVert\bm{A}_g^T\bm{y}\rVert_2}\right)\bm{A}_g^T\bm{y},\\
	\sigma'_{\lambda}(\bm{A}_g^T \bm{y}) &= \mathbbm{1}_{\bm{x}_g \neq 0} \left(1 - \frac{\lambda}{\lVert\bm{A}_g^T\bm{y}\rVert_2}\right) \bm{1}.
	\end{split}
\end{align}
These substitutions will lead to an approximate gradient $\nabla_{\bm{A}_g}\tilde{\mathcal{L}}(\bm{A})$ that is a good approximation of $\nabla_{\bm{A}_g}\mathcal{L}(\bm{A})$ \citep{RMB+18}
\begin{align}
	\begin{split}
	\nabla_{\bm{A}_g} \widetilde{\mathcal{L}}(\bm{A}) &= -\mathbbm{1}_{\bm{x}_g \neq 0} \left(1 - \frac{\lambda}{\lVert\bm{A}_g^T\bm{y}\rVert_2}\right)(\bm{y} - \bm{A}\sigma_{\lambda}(\bm{A}^T\bm{y}))\bm{y}^T\bm{A}_g - \mathbbm{1}_{\bm{x}_g \neq 0} \left(1 - \frac{\lambda}{\lVert\bm{A}_g^T\bm{y}\rVert_2}\right)\bm{y}(\bm{y} - \bm{A}\sigma_{\lambda}(\bm{A}^T\bm{y}))^T\bm{A}_g\\
	&= -\mathbbm{1}_{\bm{x}_g \neq 0} \left(1 - \frac{\lambda}{\lVert\bm{A}_g^T\bm{y}\rVert_2}\right)[(\bm{y} - \bm{A}\sigma_{\lambda}(\bm{A}^T\bm{y}))\bm{y}^T\bm{A}_g + \bm{y}(\bm{y} - \bm{A}\sigma_{\lambda}(\bm{A}^T\bm{y}))^T\bm{A}_g]\\
	&= -\mathbbm{1}_{\bm{x}_g \neq 0} \left(1 - \frac{\lambda}{\lVert\bm{A}_g^T\bm{y}\rVert_2}\right)[(\bm{y} - \bm{A}\sigma_{\lambda}(\bm{A}^T\bm{y}))\bm{y}^T + \bm{y}(\bm{y} - \bm{A}\sigma_{\lambda}(\bm{A}^T\bm{y}))^T]\bm{A}_g.
	\end{split}
\end{align}
Let us define $\tau_g = \left(1 - \frac{\lambda}{\lVert\bm{A}_{g}^T\bm{y}\rVert_2}\right)$ for every $g \in S$. Then, we can define a vector $\bm{\tau}$ such that
\begin{equation}
	\bm{\tau} = \begin{bmatrix}
		\smash{\underbrace{\begin{matrix}\tau_1 & \ldots & \tau_1\end{matrix}}_{d}} & \tau_2  & \ldots & \smash{\underbrace{\begin{matrix}\tau_{\gamma} & \ldots & \tau_{\gamma}\end{matrix}}_{d}}
		\end{bmatrix}^T.
		\vspace{1em}
\end{equation}
We can then write $\sigma_{\lambda}(\bm{A}^T\bm{y}) = \operatorname{diag}(\bm{\tau})\bm{A}^T_{S}\bm{y}$, and then the approximate gradient $ \nabla_{\bm{A}_g}\widetilde{\mathcal{L}}(\bm{A})$ becomes
\begin{equation}
	\nabla_{\bm{A}_g}\widetilde{\mathcal{L}}(\bm{A}) = -\mathbbm{1}_{\bm{x}_g \neq 0} \tau_g [(\bm{I} - \bm{A}_{S}\operatorname{diag}(\boldsymbol{\tau})\bm{A}_{S}^T)\bm{y}\bm{y}^T + \bm{y}\bm{y}^T(\bm{I} - \bm{A}_{S}\operatorname{diag}(\boldsymbol{\tau})\bm{A}_{S}^T)^T]\bm{A}_g.
\end{equation}
We will now take the expectation of the gradient. We have
\begin{align}
	\begin{split}
	\bm{G}_g &= -\mathbbm{1}_{\bm{x}_g \neq 0} \tau_g [(\bm{I} - \bm{A}_{S}\operatorname{diag}(\boldsymbol{\tau})\bm{A}_{S}^T)\bm{y}\bm{y}^T + \bm{y}\bm{y}^T(\bm{I} - \bm{A}_{S}\operatorname{diag}(\boldsymbol{\tau})\bm{A}_{S}^T)^T]\bm{A}_g\\
	&= -\mathbbm{1}_{\bm{x}_g^{\ast} \neq 0} \tau_g [(\bm{I} - \bm{A}_{S}\operatorname{diag}(\boldsymbol{\tau})\bm{A}_{S}^T)\bm{y}\bm{y}^T + \bm{y}\bm{y}^T(\bm{I} - \bm{A}_{S}\operatorname{diag}(\boldsymbol{\tau})\bm{A}_{S}^T)^T]\bm{A}_g + \epsilon\\
	&= \bm{G}_g^{(1)} + \bm{G}_g^{(2)} + \bm{\epsilon},
	\end{split}
\end{align}
where the term $\bm{\epsilon}$ was introduced because we changed the indicator of $\mathbbm{1}_{\bm{x}_g \neq 0}$ to $\mathbbm{1}_{\bm{x}_g^{\ast} \neq 0}$. This was done as we already proved that the support is correctly recovered afterr we apply the proximal operator; nonetheless, this introduced an error term, that has, however, bounded norm \citep{NWH19}.

Because of the unknown support, we will use Adam's law to compute the expected gradient as $\bm{G}_g^{(i)} = \mathbb{E}[\bm{G}_{g\mid S}^{(i)}]$. Noting that $\bm{y}\bm{y}^T = \sum_{h,v \in S}\bm{A}_h^{\ast}\bm{x}_h^{\ast}\bm{x}_v^{\ast T}\bm{A}_v^{\ast T}$, we have
\begin{align}
	\begin{split}
	\bm{G}_{g\mid S}^{(1)} &= -\mathbb{E}[\mathbbm{1}_{x_g^\ast \neq 0} \tau_g (\bm{I} - \bm{A}_{S}\operatorname{diag}(\bm{\tau})\bm{A}_{S}^T)\sum_{h, v \in S}\bm{A}_h^{\ast}\bm{x}_h^\ast \bm{x}_v^{\ast T} \bm{A}_v^{\ast T}\bm{A}_{g} \mid S]\\
	&= -\tau_g (\bm{I} - \bm{A}_{S}\operatorname{diag}(\bm{\tau})\bm{A}_{S}^T)\sum_{h, v \in S}\bm{A}_h^{\ast}\mathbb{E}[\mathbbm{1}_{x_g^\ast \neq 0}\bm{x}_h^\ast \bm{x}_v^{\ast T} \mid S]\bm{A}_v^{\ast T}\bm{A}_{g}\\
	&= -\tau_g (\bm{I} - \bm{A}_{S}\operatorname{diag}(\bm{\tau})\bm{A}_{S}^T)\sum_{h \in S}\bm{A}_h^{\ast}\bm{A}_h^{\ast T}\bm{A}_{g}\\
	&= -\tau_g (\bm{I} - \bm{A}_{S}\operatorname{diag}(\bm{\tau})\bm{A}_{S}^T)\bm{A}_g^{\ast}\bm{A}_g^{\ast T}\bm{A}_{g} + \bm{P}_1,
	\end{split}
\end{align}
where $\bm{P}_1 = -\tau_g (\bm{I} - \bm{A}_{S}\operatorname{diag}(\bm{\tau})\bm{A}_{S}^T)\sum_{h \neq g\in S}\bm{A}_h^{\ast}\bm{A}_h^{\ast T}\bm{A}_{g}$. Similarly, we can show that
\begin{equation}
	\bm{G}_{g\mid S}^{(2)} = -\tau_g \bm{A}_g^{\ast}\bm{A}_g^{\ast T}(\bm{I} - \bm{A}_{S}\operatorname{diag}(\bm{\tau})\bm{A}_{S}^T)\bm{A}_{g} + \bm{P}_2,
\end{equation}
with $\bm{P}_2 = -\tau_g \sum_{h \neq g\in S}\bm{A}_h^{\ast}\bm{A}_h^{\ast T}(\bm{I} - \bm{A}_{S}\operatorname{diag}(\bm{\tau})\bm{A}_{S}^T)\bm{A}_{g}$. Then, letting $\bm{\beta} = \mathbb{E}[\bm{P}_1 + \bm{P}_2] + \bm{\epsilon}$, the expected gradient $\bm{G}_g$ becomes
\begin{align}
	\begin{split}
	\bm{G}_g &= \mathbb{E}[\bm{G}_{g\mid S}^{(1)} + \bm{G}_{g\mid S}^{(2)}] +\bm{\beta}\\
	&= -\tau_g \mathbb{E}[(\bm{I} - \sum_{h\in S} \tau_h \bm{A}_h \bm{A}_h^T)\bm{A}_g^{\ast}\bm{A}_g^{\ast T}\bm{A}_{g} + \bm{A}_g^{\ast}\bm{A}_g^{\ast T}(\bm{I} - \sum_{h\in S} \tau_h \bm{A}_h \bm{A}_h^T)\bm{A}_{g}] + \bm{\beta}\\
	&= -2\tau_g p_g\bm{A}_g^{\ast}\bm{A}_g^{\ast T}\bm{A}_{g} + \tau_g\mathbb{E}[\sum_{h\in S} \tau_h \bm{A}_h \bm{A}_h^T\bm{A}_g^{\ast}\bm{A}_g^{\ast T}\bm{A}_{g} + \bm{A}_g^{\ast}\bm{A}_g^{\ast T}\sum_{h\in S} \tau_h \bm{A}_h \bm{A}_h^T\bm{A}_{g}] + \bm{\beta}\\
	&= -2\tau_g p_g\bm{A}_g^{\ast}\bm{A}_g^{\ast T}\bm{A}_{g} + \tau_g^2 p_g \bm{A}_g \bm{A}_g^T\bm{A}_g^{\ast}\bm{A}_g^{\ast T}\bm{A}_{g} + \tau_g^2 p_g \bm{A}_g^{\ast}\bm{A}_g^{\ast T}\bm{A}_g\bm{A}_g^T\bm{A}_{g}\\
	&\qquad + \tau_g\mathbb{E}[\sum_{h\neq g\in S} \tau_h \bm{A}_h \bm{A}_h^T\bm{A}_g^{\ast}\bm{A}_g^{\ast T}\bm{A}_{g} + \bm{A}_g^{\ast}\bm{A}_g^{\ast T}\sum_{h\neq g\in S} \tau_h \bm{A}_h \bm{A}_h^T\bm{A}_{g}] + \bm{\beta}\\
	&= -2\tau_g p_g\bm{A}_g^{\ast}\bm{A}_g^{\ast T}\bm{A}_{g} + \tau_g^2 p_g \bm{A}_g \bm{A}_g^T\bm{A}_g^{\ast}\bm{A}_g^{\ast T}\bm{A}_{g} + \tau_g^2 p_g \bm{A}_g^{\ast}\bm{A}_g^{\ast T}\bm{A}_g\\
	&\qquad + \tau_g\sum_{h\neq g\in [\Gamma]} p_{gh}\tau_h (\bm{A}_h \bm{A}_h^T\bm{A}_g^{\ast}\bm{A}_g^{\ast T}\bm{A}_{g} + \bm{A}_g^{\ast}\bm{A}_g^{\ast T} \bm{A}_h \bm{A}_h^T\bm{A}_{g}) + \bm{\beta}\\
	&= -\tau_g(2 -\tau_g) p_g\bm{A}_g^{\ast}\bm{A}_g^{\ast T}\bm{A}_{g} + \tau_g^2 p_g \bm{A}_g \bm{A}_g^T\bm{A}_g^{\ast}\bm{A}_g^{\ast T}\bm{A}_{g} + \tilde{\bm{\beta}},
	\end{split}
\end{align}
where $\tilde{\bm{\beta}} = \bm{\beta} +  \tau_g\sum_{h\neq g\in [\Gamma]} p_{gh}\tau_h (\bm{A}_h \bm{A}_h^T\bm{A}_g^{\ast}\bm{A}_g^{\ast T}\bm{A}_{g} + \bm{A}_g^{\ast}\bm{A}_g^{\ast T} \bm{A}_h \bm{A}_h^T\bm{A}_{g})$.
Finally, to introduce the ``direction'' $\bm{A}_g^{\ast} - \bm{A}_g$ we can write
\begin{align}
	\begin{split}
		\bm{G}_g =& -\tau_g(2 -\tau_g) p_g\bm{A}_g^{\ast}\bm{A}_g^{\ast T}\bm{A}_{g} + \tau_g^2 p_g \bm{A}_g \bm{A}_g^T\bm{A}_g^{\ast}\bm{A}_g^{\ast T}\bm{A}_{g} + \tilde{\bm{\beta}}\\
		& -\tau_g (2-\tau_g) p_g\bm{A}_g\bm{A}_g^{\ast T}\bm{A}_{g} +\tau_g (2-\tau_g) p_g\bm{A}_g\bm{A}_g^{\ast T}\bm{A}_{g}\\
		=& \tau_g(2-\tau_g)p_g (\bm{A}_g - \bm{A}_g^{\ast})\bm{A}_g^{\ast T}\bm{A}_{g} + \tau_g p_g\bm{A}_g[\tau_g\bm{A}_g^T\bm{A}_g^{\ast} - (2-\tau_g)\bm{I}]\bm{A}_g^{\ast T}\bm{A}_{g} + \tilde{\bm{\beta}}.
	\end{split}
\end{align}
\subsection*{Proofs of Theorems 1 and 2}
\begin{theorem*}[Gradient direction]
The \emph{inner product} between the $i$-th columns of $\bm{G}_g$ and $\bm{A}_g - \bm{A}^{\ast}_g$ is lower-bounded by
\begin{align}
	\begin{split}
	2\langle \bm{g}_{gi}, \bm{a}_i - \bm{a}^{\ast}_i \rangle \geq& \tau_g(2-\tau_g)p_g\alpha_{i}\lVert\bm{a}_i - \bm{a}_i^{\ast}\rVert_2^2 + \frac{1}{\tau_g (2 - \tau_g) p_g \alpha_i}\lVert \bm{g}_{gi}\rVert_2^2 -\frac{1}{\tau_g (2 - \tau_g) p_g \alpha_i}\lVert \bm{v}_i\rVert_2^2\\ &-O((\mu_B + \delta)^2\tfrac{\gamma^5}{\Gamma^3}),
	\end{split}
\end{align}
where $\alpha_i = \bm{a}^{\ast T}_i\bm{a}_i$ and $\lVert\bm{v}_i\rVert_2$ satisfies
\begin{equation}
	\lVert\bm{v}_i\rVert_2 \leq \tau_g(2-\tau_g)p_g(\omega_i\sqrt{d^2 + 1} + \delta)\lVert\bm{A}_g - \bm{A}^{\ast}_g\rVert_F
\end{equation}
with $\omega_i = \max_{j\neq i\in g} \lvert\bm{a}^{\ast T}_j\bm{a}_i\rvert$.
\end{theorem*}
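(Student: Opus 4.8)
The plan is to reduce everything to a single column $\bm{g}_{gi}$, peel off its component along $\bm{a}_i - \bm{a}_i^{\ast}$, and then invoke a completing-the-square identity that automatically produces the $\lVert\bm{g}_{gi}\rVert_2^2$ and $\lVert\bm{v}_i\rVert_2^2$ terms of the claim. I would start from the closed form for the expected gradient derived just above the statement,
\[
\bm{G}_g = \tau_g(2-\tau_g)p_g(\bm{A}_g - \bm{A}_g^{\ast})\bm{A}_g^{\ast T}\bm{A}_g + \tau_g p_g\bm{A}_g\bigl[\tau_g\bm{A}_g^T\bm{A}_g^{\ast} - (2-\tau_g)\bm{I}\bigr]\bm{A}_g^{\ast T}\bm{A}_g + \tilde{\bm{\beta}},
\]
take its $i$-th column, and use $\bm{A}_g^T\bm{A}_g = \bm{I}$ to write $\bm{A}_g^{\ast T}\bm{a}_i = \bm{e}_i - (\bm{A}_g - \bm{A}_g^{\ast})^T\bm{a}_i$ and $\bm{A}_g^T\bm{A}_g^{\ast} = \bm{I} - \bm{A}_g^T(\bm{A}_g - \bm{A}_g^{\ast})$. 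The first identity yields the key expansion
\[
(\bm{A}_g - \bm{A}_g^{\ast})\bm{A}_g^{\ast T}\bm{a}_i = \alpha_i(\bm{a}_i - \bm{a}_i^{\ast}) + \sum_{j\ne i\in g}(\bm{a}_j^{\ast T}\bm{a}_i)(\bm{a}_j - \bm{a}_j^{\ast}).
\]
Substituting, I would write $\bm{g}_{gi} = \kappa_i(\bm{a}_i - \bm{a}_i^{\ast}) + \bm{v}_i + \bm{r}_i$ with $\kappa_i = \tau_g(2-\tau_g)p_g\alpha_i$, putting into $\bm{v}_i$ precisely the pieces carrying a factor $\lVert\bm{A}_g - \bm{A}_g^{\ast}\rVert_F$ (the within-group sum above and the $(\bm{A}_g-\bm{A}_g^{\ast})$-linear part of the middle term) and into $\bm{r}_i$ the remainder: the term along $\bm{a}_i^{\ast}$ left over from the middle term, the cross-group contributions in $\tilde{\bm{\beta}}$ (i.e.\ $\mathbb{E}[\bm{P}_1+\bm{P}_2]$ and the $p_{gh}$-weighted terms), and the indicator-swap error $\bm{\epsilon}_i$ of norm $n^{-\omega(1)}$.

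The algebraic heart is then the exact identity, valid for any split $\bm{g}_{gi} = \kappa_i(\bm{a}_i - \bm{a}_i^{\ast}) + \bm{w}_i$,
\[
2\langle\bm{g}_{gi}, \bm{a}_i - \bm{a}_i^{\ast}\rangle = \kappa_i\lVert\bm{a}_i - \bm{a}_i^{\ast}\rVert_2^2 + \tfrac{1}{\kappa_i}\lVert\bm{g}_{gi}\rVert_2^2 - \tfrac{1}{\kappa_i}\lVert\bm{w}_i\rVert_2^2,
\]
which already gives the first three terms of the claim. Here I would use that $\langle\bm{a}_i^{\ast}, \bm{a}_i - \bm{a}_i^{\ast}\rangle = -\tfrac12\lVert\bm{a}_i - \bm{a}_i^{\ast}\rVert_2^2$ for unit columns, so the $\bm{a}_i^{\ast}$-part of $\bm{r}_i$ only adds a nonnegative multiple of $\lVert\bm{a}_i - \bm{a}_i^{\ast}\rVert_2^2$ (which I drop from the lower bound, after consolidating with the $\kappa_i\lVert\bm{a}_i - \bm{a}_i^{\ast}\rVert_2^2$ term); hence it should be kept out of $\bm{w}_i$, which is taken to be $\bm{v}_i$ plus the cross-group bias $\bm{b}_i$. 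What remains is to show $\tfrac{1}{\kappa_i}\lVert\bm{w}_i\rVert_2^2 \le \tfrac{1}{\kappa_i}\lVert\bm{v}_i\rVert_2^2 + O\bigl((\mu_B+\delta)^2\tfrac{\gamma^5}{\Gamma^3}\bigr)$, which via $\lVert\bm{v}_i+\bm{b}_i\rVert_2^2 \le \lVert\bm{v}_i\rVert_2^2 + 2\lVert\bm{v}_i\rVert_2\lVert\bm{b}_i\rVert_2 + \lVert\bm{b}_i\rVert_2^2$ reduces to bounding $\bm{b}_i$.

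For $\lVert\bm{v}_i\rVert_2$ I would use the triangle inequality and Cauchy--Schwarz, $\bigl\lVert\sum_{j\ne i}(\bm{a}_j^{\ast T}\bm{a}_i)(\bm{a}_j-\bm{a}_j^{\ast})\bigr\rVert_2 \le \omega_i\sqrt{d-1}\,\lVert\bm{A}_g - \bm{A}_g^{\ast}\rVert_F$, together with the operator-norm facts $\lVert\bm{A}_g\rVert_2 = \lVert\bm{A}_g^{\ast}\rVert_2 = 1$ and $\lVert\bm{A}_g^T(\bm{A}_g - \bm{A}_g^{\ast})\rVert_2 \le \delta$ (exactly as in the proof of \cref{prop:lower}); combining the $\sqrt{d-1}$ with the $\sqrt{d}$-type factors the matrix terms contribute gives the stated $\tau_g(2-\tau_g)p_g(\omega_i\sqrt{d^2+1}+\delta)\lVert\bm{A}_g - \bm{A}_g^{\ast}\rVert_F$. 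For $\lVert\bm{b}_i\rVert_2$ I would expand $\mathbb{E}[\bm{P}_1+\bm{P}_2]$ and the $p_{gh}$-weighted terms, apply a block-coherence estimate $\lVert\bm{A}_h^{\ast T}\bm{A}_g^{\ast}\rVert_2 \le d\mu_B$ (the analogue of \cref{prop:upper}), and use that the residual operator $\bm{I} - \bm{A}_S\operatorname{diag}(\bm{\tau})\bm{A}_S^T = \bm{I} - \sum_{h\in S}\tau_h\bm{A}_h\bm{A}_h^T$ strips off a second cross-group inner product, so each surviving summand carries two factors of $\mu_B+\delta$; summing over the $\Theta(\Gamma^2)$ index pairs with weights $\mathbb{P}[g,h,h'\in S]=\Theta(\gamma^3/\Gamma^3)$, together with the accumulation inside the sum over $S$, then lands the $O((\mu_B+\delta)^2\tfrac{\gamma^5}{\Gamma^3})$ bound, and $\bm{\epsilon}_i$ is absorbed there.

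The main obstacle is this last estimate on $\bm{b}_i$: one has to track which cross-group inner products actually survive the cancellations in $\bm{I} - \sum_{h\in S}\tau_h\bm{A}_h\bm{A}_h^T$ (so that the bias scales quadratically, like $(\mu_B+\delta)^2$, rather than linearly), and then carry out the combinatorial bookkeeping over the random support so that the probability weights and the number of summands multiply out to exactly $\gamma^5/\Gamma^3$. The within-group bound on $\lVert\bm{v}_i\rVert_2$ and the completing-the-square identity itself are routine.
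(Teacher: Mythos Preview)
Your completing-the-square identity
\[
2\langle\bm{g}_{gi},\bm{a}_i-\bm{a}_i^{\ast}\rangle
=\kappa_i\lVert\bm{a}_i-\bm{a}_i^{\ast}\rVert_2^2
+\tfrac{1}{\kappa_i}\lVert\bm{g}_{gi}\rVert_2^2
-\tfrac{1}{\kappa_i}\lVert\bm{w}_i\rVert_2^2,
\qquad \bm{w}_i=\bm{g}_{gi}-\kappa_i(\bm{a}_i-\bm{a}_i^{\ast}),
\]
is exactly the mechanism the paper uses, so the skeleton is right. The paper, however, is more blunt than you are: it takes $\bm{v}_i$ to be \emph{all} of the remainder, namely the within-group sum $\sum_{j\ne i}(\bm{a}_j^{\ast T}\bm{a}_i)(\bm{a}_j-\bm{a}_j^{\ast})$, the full middle term $\tau_g p_g\bm{A}_g[\tau_g\bm{A}_g^T\bm{A}_g^{\ast}-(2-\tau_g)\bm{I}]\bm{A}_g^{\ast T}\bm{a}_i$, and the bias $\tilde{\bm{\beta}}_i$. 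The identity is then an \emph{equality}; the inequality and the $O$-term only appear when $\lVert\bm{v}_i\rVert_2$ is bounded and squared. Your three-way split with a separately handled ``$\bm{a}_i^{\ast}$-part'' is not needed, and in fact it runs into trouble, in two places.

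\textbf{The middle term is not $O(p_g\delta)$ without the $\zeta$-assumption.} Writing $\bm{A}_g^T\bm{A}_g^{\ast}=\bm{I}-\bm{A}_g^T(\bm{A}_g-\bm{A}_g^{\ast})$ gives
\[
\tau_g\bm{A}_g^T\bm{A}_g^{\ast}-(2-\tau_g)\bm{I}
=-2(1-\tau_g)\bm{I}-\tau_g\bm{A}_g^T(\bm{A}_g-\bm{A}_g^{\ast}),
\]
so the middle term contains a piece $-2\tau_g(1-\tau_g)p_g\,\bm{A}_g\bm{A}_g^{\ast T}\bm{a}_i$, whose leading component is $-2\tau_g(1-\tau_g)p_g\alpha_i\,\bm{a}_i$ (along $\bm{a}_i$, not $\bm{a}_i^{\ast}$). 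Its inner product with $\bm{a}_i-\bm{a}_i^{\ast}$ is $-2\tau_g(1-\tau_g)p_g\alpha_i(1-\alpha_i)\le 0$, so it cannot be ``dropped from the lower bound'' as you propose; absorbing it would change the coefficient in front of $\lVert\bm{a}_i-\bm{a}_i^{\ast}\rVert_2^2$ away from $\tau_g(2-\tau_g)p_g\alpha_i$. The paper sidesteps this entirely by invoking the model assumption $\lVert\bm{A}_g^T\bm{A}_g^{\ast}\rVert_F^2\le\zeta$ (with the specific $\zeta$ that later reappears in \cref{thrm:main_res}) to force $\lVert\tau_g\bm{A}_g^T\bm{A}_g^{\ast}-(2-\tau_g)\bm{I}\rVert_F\le(2-\tau_g)^{1/2}\delta$, so that the \emph{whole} middle term is $O(p_g\delta)$ and folds into the $\delta$-part of the $\lVert\bm{v}_i\rVert_2$ bound. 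Your outline never calls on this assumption, and without it the stated bound on $\lVert\bm{v}_i\rVert_2$ cannot hold.

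\textbf{The bias does not carry two cross-group factors.} The residual operator does not ``strip off a second cross-group inner product'': in $\bm{P}_1=-\tau_g(\bm{I}-\sum_{v\in S}\tau_v\bm{A}_v\bm{A}_v^T)\sum_{h\ne g}\bm{A}_h^{\ast}\bm{A}_h^{\ast T}\bm{A}_g$ the identity piece survives and gives $-\tau_g\sum_{h\ne g}\bm{A}_h^{\ast}\bm{A}_h^{\ast T}\bm{A}_g$, which has only the single factor $\lVert\bm{A}_h^{\ast T}\bm{A}_g\rVert_2\le\mu_B+\delta$. The paper obtains $\lVert\tilde{\bm{\beta}}_i\rVert_2=O\bigl((\mu_B+\delta)\tfrac{\gamma^3}{\Gamma^2}\bigr)$ with a \emph{linear} dependence; the quadratic $(\mu_B+\delta)^2$ in the final $O$-term comes only from squaring this bound (via $(a+b)^2\le 2a^2+2b^2$) and then dividing by $\kappa_i=\Theta(\gamma/\Gamma)$, which produces $O\bigl((\mu_B+\delta)^2\tfrac{\gamma^5}{\Gamma^3}\bigr)$. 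Your combinatorial accounting for $\bm{b}_i$ is therefore off by one order in $(\mu_B+\delta)$, and the intended cross-term estimate $\tfrac{1}{\kappa_i}\cdot 2\lVert\bm{v}_i\rVert_2\lVert\bm{b}_i\rVert_2$ does not land in the stated $O$-class in general.
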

\begin{proof}
	We can write a single column of the gradient $\bm{G}_g$ as
	\begin{align}
		\begin{split}
			\bm{g}_{gi} =& \tau_g(2-\tau_g)p_g (\bm{A}_g - \bm{A}_g^{\ast})\bm{A}_g^{\ast T}\bm{a}_{i} + \tau_g p_g\bm{A}_g[\tau_g\bm{A}_g^T\bm{A}_g^{\ast} - (2-\tau_g)\bm{I}]\bm{A}_g^{\ast T}\bm{a}_{i} + \tilde{\bm{\beta}}_i\\
			=& \tau_g(2-\tau_g)p_g (\bm{a}_i - \bm{a}_i^{\ast})\bm{a}_i^{\ast T}\bm{a}_{i} + \tau_g(2-\tau_g)p_g \sum_{i\neq j \in g}(\bm{a}_j - \bm{a}_j^{\ast})\bm{a}_j^{\ast T}\bm{a}_{i}\\
			&+ \tau_g p_g\bm{A}_g[\tau_g\bm{A}_g^T\bm{A}_g^{\ast} - (2-\tau_g)\bm{I}]\bm{A}_g^{\ast T}\bm{a}_{i} + \tilde{\bm{\beta}}_i.
		\end{split}
	\end{align}
	Denote $\bm{v}_i = \tau_g(2-\tau_g)p_g \sum_{i\neq j \in g}(\bm{a}_j - \bm{a}_j^{\ast})\bm{a}_j^{\ast T}\bm{a}_{i} + \tau_g p_g\bm{A}_g[\tau_g\bm{A}_g^T\bm{A}_g^{\ast} - (2-\tau_g)\bm{I}]\bm{A}_g^{\ast T}\bm{a}_{i} + \tilde{\bm{\beta}}_i$ and let $\alpha_i = \bm{a}_i^{\ast T}\bm{a}_{i}$. Then we can write $\lVert\bm{g}_{gi}\rVert_2^2$ as
	\begin{align}
		\begin{split}
			\lVert\bm{g}_{gi}\rVert_2^2 = \tau_g^2(2-\tau_g)^2p_g^2 \alpha_i^2 \lVert\bm{a}_i - \bm{a}_i^{\ast}\rVert_2^2 + 2\tau_g(2-\tau_g)p_g \alpha_i (\bm{a}_i - \bm{a}_i^{\ast})^T\bm{v}_i + \lVert\bm{v}_{i}\rVert_2^2.
		\end{split}
	\end{align}
	However $\bm{v}_i = \bm{g}_{gi} -\tau_g(2-\tau_g)p_g \alpha_i (\bm{a}_i - \bm{a}_i^{\ast})$ and therefore
	\begin{align}
		\label{eq:inner_prod}
		\begin{split}
			&2\tau_g(2-\tau_g)p_g \alpha_i (\bm{a}_i - \bm{a}_i^{\ast})^T(\bm{g}_{gi} -\tau_g(2-\tau_g)p_g \alpha_i (\bm{a}_i - \bm{a}_i^{\ast})) = \lVert\bm{g}_{gi}\rVert_2^2 - \lVert\bm{v}_{i}\rVert_2^2 -\tau_g^2(2-\tau_g)^2p_g^2 \alpha_i^2 \lVert\bm{a}_i - \bm{a}_i^{\ast}\rVert_2^2\\
			&\qquad\qquad\Rightarrow 2(\bm{a}_i - \bm{a}_i^{\ast})^T\bm{g}_{gi} = \tau_g(2-\tau_g)p_g\alpha_i\lVert\bm{a}_i-\bm{a}_i^{\ast}\rVert_2^2 + \tfrac{1}{\tau_g(2-\tau_g)p_g\alpha_i}\lVert\bm{g}_{gi}\rVert_2^2 - \tfrac{1}{\tau_g(2-\tau_g)p_g\alpha_i}\lVert\bm{v}_{i}\rVert_2^2.
		\end{split}
	\end{align}
	We will bound the norm of $\bm{v}_i$. We have
	\begin{align}
		\begin{split}
			\lVert\bm{v}_{i}\rVert_2 \leq& \tau_g(2-\tau_g)p_g \lVert\sum_{i\neq j \in g}(\bm{a}_j - \bm{a}_j^{\ast})\bm{a}_j^{\ast T}\bm{a}_{i}\rVert_2 + \tau_g p_g\lVert\bm{A}_g[\tau_g\bm{A}_g^T\bm{A}_g^{\ast} - (2-\tau_g)\bm{I}]\bm{A}_g^{\ast T}\bm{a}_{i}\rVert_2 + \lVert\tilde{\bm{\beta}}_i\rVert_2\\
			\leq&  \tau_g(2-\tau_g)p_g \sum_{i\neq j \in g}\lVert\bm{a}_j - \bm{a}_j^{\ast}\rVert_2\lvert\bm{a}_j^{\ast T}\bm{a}_{i}\rvert + \tau_g p_g\lVert\bm{A}_g\rVert_2\lVert\tau_g\bm{A}_g^T\bm{A}_g^{\ast} - (2-\tau_g)\bm{I}\rVert_2\lVert\bm{A}_g^{\ast T}\rVert_2\lVert\bm{a}_{i}\rVert_2 + \lVert\tilde{\bm{\beta}}_i\rVert_2\\
			\leq&  \tau_g(2-\tau_g)p_g \sum_{i\neq j \in g}\lVert\bm{a}_j - \bm{a}_j^{\ast}\rVert_2\max_{j\neq i \in g}\lvert\bm{a}_j^{\ast T}\bm{a}_{i}\rvert + \tau_g p_g\lVert\tau_g\bm{A}_g^T\bm{A}_g^{\ast} - (2-\tau_g)\bm{I}\rVert_2 + \lVert\tilde{\bm{\beta}}_i\rVert_2,
		\end{split}
	\end{align}
	where we simplified the expression since $\lVert\bm{A}_g\rVert_2 = 1$ and $\lVert\bm{a}_i\rVert_2 = 1$ (both follow from the assumption that $\bm{A}_g^T\bm{A}_g = \bm{I}$). We also assume that $\lVert\bm{A}_g^{\ast}\rVert_2 = 1$. For the first term, we can see that
	\begin{align}
		\begin{split}
			(\sum_{j \neq i\in g}\lVert\bm{a}_j - \bm{a}_j^{\ast}\rVert_2)^2 \leq (\sum_{j\in g}\lVert\bm{a}_j - \bm{a}_j^{\ast}\rVert_2)^2 &= \sum_{j\in g}\lVert\bm{a}_j - \bm{a}_j^{\ast}\rVert_2^2 + \sum_{i, j\neq i \in g}\lVert\bm{a}_i - \bm{a}_i^{\ast}\rVert_2\lVert\bm{a}_j - \bm{a}_j^{\ast}\rVert_2\\
			&\leq \lVert\bm{A}_g - \bm{A}_g^{\ast}\rVert_F^2 + d^2 \max_{i\in g}\lVert\bm{a}_j - \bm{a}_j^{\ast}\rVert_2^2.
		\end{split}
	\end{align}
	However, $\max_{i\in g}\lVert\bm{a}_j - \bm{a}_j^{\ast}\rVert_2^2 \leq \sum_{i\in g}\lVert\bm{a}_j - \bm{a}_j^{\ast}\rVert_2^2 = \lVert\bm{A}_g - \bm{A}_g^{\ast}\rVert_F^2$, and therefore
	\begin{equation}
			(\sum_{j \neq i\in g}\lVert\bm{a}_j - \bm{a}_j^{\ast}\rVert_2)^2 \leq (d^2 + 1)\lVert\bm{A}_g - \bm{A}_g^{\ast}\rVert_F^2 \Rightarrow \sum_{j \neq i\in g}\lVert\bm{a}_j - \bm{a}_j^{\ast}\rVert_2 \leq \sqrt{d^2 + 1}\lVert\bm{A}_g - \bm{A}_g^{\ast}\rVert_F.
	\end{equation}
	Denote $\omega_i = \max_{j\neq i \in g} \lvert\bm{a}_j^{\ast T}\bm{a}_{i}\rvert$. We can temporarily rewrite the norm of $\bm{v}_i$ as
	\begin{align}
		\begin{split}
		\lVert\bm{v}_{i}\rVert_2 &\leq \tau_g(2-\tau_g)p_g \omega_i \sqrt{d^2 + 1}\lVert\bm{A}_g - \bm{A}_g^{\ast}\rVert_F + \tau_g p_g\lVert\tau_g\bm{A}_g^T\bm{A}_g^{\ast} - (2-\tau_g)\bm{I}\rVert_2 + \lVert\tilde{\bm{\beta}}_i\rVert_2\\
		&\leq \tau_g(2-\tau_g)p_g \omega_i \sqrt{d^2 + 1}\lVert\bm{A}_g - \bm{A}_g^{\ast}\rVert_F + \tau_g p_g\lVert\tau_g\bm{A}_g^T\bm{A}_g^{\ast} - (2-\tau_g)\bm{I}\rVert_F + \lVert\tilde{\bm{\beta}}_i\rVert_2,
		\end{split}
	\end{align}
	since $\lVert\cdot\rVert_2 \leq \lVert\cdot\rVert_F$. At this point, we want the second norm to be small (specifically on the order of $(2-\tau_g)\delta^2 = (2-\tau_g)\delta\lVert\bm{A}_g - \bm{A}_g^{\ast}\rVert_F$), so we can incorporate it into the first term of \eqref{eq:inner_prod}. We have
	\begin{align}
		\begin{split}
		\lVert\tau_g\bm{A}_g^T\bm{A}_g^{\ast} - (2-\tau_g)\bm{I}\rVert_F^2 &= (2-\tau_g)^2 \lVert\bm{I}\rVert_F^2 + \tau_g^2\lVert\bm{A}_g^T\bm{A}_g^{\ast}\rVert_F^2 - 2(2-
		\tau_g)\tau_g\langle\bm{I}, \bm{A}_g^T\bm{A}_g^{\ast}\rangle_F\\
		&= (2-\tau_g)^2 n + \tau_g^2\lVert\bm{A}_g^T\bm{A}_g^{\ast}\rVert_F^2 - 2(2-
		\tau_g)\tau_g\langle\bm{A}_g^T,\bm{A}_g^{\ast}\rangle_F,
		\end{split}
	\end{align}
	where $\langle\cdot,\cdot\rangle_F$ denotes the \emph{Frobenious inner product}. However, it holds that
	\begin{align}
		\begin{split}
		\lVert\bm{A}_g-\bm{A}_g^{\ast}\rVert_F \leq \delta \Rightarrow& \lVert\bm{A}_g-\bm{A}_g^{\ast}\rVert_F^2 \leq \delta^2\\
		\Rightarrow& \lVert\bm{A}_g\rVert_F^2 + \lVert\bm{A}_g^{\ast}\rVert_F^2 - 2\langle\bm{A}_g^T,\bm{A}_g^{\ast}\rangle_F \leq \delta^2\\
		\Rightarrow&\langle\bm{A}_g^T,\bm{A}_g^{\ast}\rangle_F \geq n -\tfrac{\delta^2}{2}.
		\end{split}
	\end{align}
	Therefore we have
	\begin{equation}
		\lVert\tau_g\bm{A}_g^T\bm{A}_g^{\ast} - (2-\tau_g)\bm{I}\rVert_F^2 \leq (2-\tau_g)^2 n + \tau_g^2\lVert\bm{A}_g^T\bm{A}_g^{\ast}\rVert_F^2 - 2(2-
		\tau_g)\tau_g(n -\tfrac{\delta^2}{2}).
	\end{equation}
	We can now find values of $\lVert\bm{A}_g^T\bm{A}_g^{\ast}\rVert_F^2$ that make the above quantity be below $(2-\tau_g)\delta^2$. We have
	\begin{align}
		\begin{split}
			&(2-\tau_g)^2 n + \tau_g^2\lVert\bm{A}_g^T\bm{A}_g^{\ast}\rVert_F^2 - 2(2-
			\tau_g)\tau_g(n -\tfrac{\delta^2}{2}) \leq (2-\tau_g)\delta^2\\
			\Rightarrow& \lVert\bm{A}_g^T\bm{A}_g^{\ast}\rVert_F^2 \leq\tfrac{1}{\tau_g^2}[3(2-\tau_g)(\tau_g-\tfrac{2}{3})n + (1-\tau_g)(2-\tau_g) \delta^2].
		\end{split}
	\end{align}
	Therefore, assuming that $\lVert\bm{A}_g^T\bm{A}_g^{\ast}\rVert_F^2 \leq\tfrac{1}{\tau_g^2}[3(2-\tau_g)(\tau_g-\tfrac{2}{3})n + (1-\tau_g)(2-\tau_g) \delta^2]$ we can rewrite $\lVert\bm{v}_i\rVert_2$ as
	\begin{align}
		\begin{split}
		\lVert\bm{v}_{i}\rVert_2 &\leq \tau_g(2-\tau_g)p_g \omega_i \sqrt{d^2 + 1}\lVert\bm{A}_g - \bm{A}_g^{\ast}\rVert_F + \tau_g p_g(2-\tau_g)\delta\lVert\bm{A}_g-\bm{A}_g^{\ast}\rVert_F + \lVert\tilde{\bm{\beta}}_i\rVert_2\\
		&\leq\tau_g(2-\tau_g)p_g(\omega_i \sqrt{d^2 +1} + \delta)\lVert\bm{A}_g-\bm{A}_g^{\ast}\rVert_F + \lVert\tilde{\bm{\beta}}_i\rVert_2.
		\end{split}
	\end{align}
	We will now deal with the term $\lVert\tilde{\bm{\beta}}_i\rVert_2$. Remember that
	\begin{equation}
		\tilde{\bm{\beta}}_i =  \tau_g\sum_{h\neq g\in [\Gamma]} p_{gh}\tau_h (\bm{A}_h \bm{A}_h^T\bm{A}_g^{\ast}\bm{A}_g^{\ast T}\bm{A}_{g} + \bm{A}_g^{\ast}\bm{A}_g^{\ast T} \bm{A}_h \bm{A}_h^T\bm{A}_{g}) + \mathbb{E}[\bm{P}_1 + \bm{P}_2] + \bm{\epsilon}.
	\end{equation}
	Working first with the term $\mathbb{E}[\bm{P}_1 + \bm{P}_2]$ we have that
	\begin{align}
		\begin{split}
			\lVert\mathbb{E}[\bm{P}_1]\rVert_2 &= -\tau_g \mathbb{E}[(\bm{I} - \bm{A}_{S}\operatorname{diag}(\bm{\tau})\bm{A}_{S}^T)\sum_{h \neq g\in S}\bm{A}_h^{\ast}\bm{A}_h^{\ast T}\bm{A}_{g}]\\
			&\leq \tau_g p_{gh}\lVert\sum_{h\neq g \in [\Gamma]}\bm{A}_h^{\ast}\bm{A}_h^{\ast T}\bm{A}_g\rVert_2 + \tau_g\lVert\sum_{h\neq g,v \in [\Gamma]}\tau_v p_{ghv}\bm{A}_v^{\ast}\bm{A}_v^{\ast T}\bm{A}_h\bm{A}_h^T\bm{A}_g\rVert_2.
		\end{split}
	\end{align}
	However, note the second term is at best $O(p_{gh}) = O(\tfrac{\gamma^3}{\Gamma^2})$ (when $v \in \{g, h\}$), and of smaller order otherwise. For the first term we have
	\begin{align}
		\begin{split}
			\tau_g p_{gh}\lVert\sum_{h\neq g \in [\Gamma]}\bm{A}_h^{\ast}\bm{A}_h^{\ast T}\bm{A}_g\rVert_2 &= \tau_g p_{gh}\lVert\sum_{h\neq g \in [\Gamma]}\bm{A}_h^{\ast}\bm{A}_h^{\ast T}(\bm{A}_g^{\ast} - \bm{A}_g^{\ast} + \bm{A}_g)\rVert_2\\
			&\leq \tau_g p_{gh}\sum_{h\neq g \in [\Gamma]}(\mu_B + \delta)\\
			&\leq \tau_g p_{gh}\gamma (\mu_B + \delta)\\
			&= O(p_{gh}\gamma (\mu_B + \delta))\\
			&= O(\tfrac{\gamma^3}{\Gamma^2}(\mu_B + \delta)).
		\end{split}
	\end{align}
	Therefore, $\mathbb{E}[\bm{P}_1] = O(\tfrac{\gamma^3}{\Gamma^2}(\mu_B + \delta))$. Similarly, we can show that $\mathbb{E}[\bm{P}_2] = O(\tfrac{\gamma^3}{\Gamma^2}(\mu_B + \delta))$. Moreover, we know that $\lVert\bm{\epsilon}\rVert_2 = O(n^{-w(1)})$ \citep{NWH19}. We can achieve the exact bound for $\lVert\bm{\beta}\rVert_2$, and then, assuming $O(n^{-w(1)})$ is dominated by $O(\tfrac{\gamma^3}{\Gamma^2}(\mu_B + \delta))$, we finally have
	\begin{equation}
		\lVert\bm{v}_i\rVert_2 \leq \tau_g(2-\tau_g)p_g(\omega_i \sqrt{d^2 +1} + \delta)\lVert\bm{A}_g-\bm{A}_g^{\ast}\rVert_F + O(\tfrac{\gamma^3}{\Gamma^2}(\mu_B + \delta)).
	\end{equation}
	Then the squared norm of $\bm{v}_i$ satisfies
	\begin{equation}
		\label{eq:v_norm}
		\lVert\bm{v}_i\rVert_2^2 \leq 2\tau_g^2(2-\tau_g)^2p_g^2(\omega_i \sqrt{d^2 +1} + \delta)^2\lVert\bm{A}_g-\bm{A}_g^{\ast}\rVert_F^2 + O(\tfrac{\gamma^6}{\Gamma^4}(\mu_B + \delta)^2),
	\end{equation}
	since $(a + b)^2 <= 2a^2 + 2b^2$.
	By substituting \eqref{eq:v_norm} in \eqref{eq:inner_prod}, the proof is complete.
\end{proof}
\begin{theorem*}[Convergence to a neighborhood]
	Suppose that the learning rate $\eta$ is upper bounded by $\frac{1}{\tau_g(2-\tau_g)p_g\alpha_{\max}}$, the norm $\lVert\bm{A}_g^T\bm{A}_g^{\ast}\rVert_F^2$ is upper bounded by $\frac{1}{\tau_g^2}[3(2-\tau_g)(\tau_g -\frac{2}{3})n + (1-\tau_g)(2 -\tau_g)\delta^2]$, and that $\bm{G}_g$ is ``aligned'' with $\bm{A}_g -\bm{A}_g^{\ast}$, i.e.
	\begin{equation*}
		2\langle \bm{g}_{gi}, \bm{a}_i - \bm{a}^{\ast}_i \rangle \geq \kappa \lVert\bm{a}_i - \bm{a}^{\ast}_i\rVert_2^2 + \nu \lVert \bm{g}_{gi}\rVert_2^2 - \xi \lVert \bm{v}_i\rVert_2^2 - \varepsilon,
	\end{equation*}
	where $\kappa, \nu, \xi,$ and $\varepsilon$ are given by \cref{thrm:aligned}. Then it follows that
	\begin{equation*}
		\lVert\bm{A}^{k+1}_g - \bm{A}^{\ast}_g\rVert_F^2 \leq (1-\rho)\lVert\bm{A}^{k}_g - \bm{A}^{\ast}_g\rVert_F^2 + \eta d\varepsilon,
	\end{equation*}
	where $\rho = \eta\tau_g(2-\tau_g)p_g(\alpha_{\min}-\tfrac{2d(\omega_{\max}\sqrt{d^2 + 1} +\delta)^2}{\alpha_{\min}})$. If we further consider the assumptions for the recovery of the support, then
	\begin{equation*}
		\lVert\bm{A}^{k+1}_g - \bm{A}^{\ast}_g\rVert_F^2 \leq (1-\rho)\lVert\bm{A}^{k}_g - \bm{A}^{\ast}_g\rVert_F^2 + O(d\tfrac{\log^2n}{\Gamma^2}),
	\end{equation*}
	where $\omega_{\max} = \max_{i} \omega_i$, $\alpha_{\min} = \min_i \alpha_i$, and $\alpha_{\max} = \max_i \alpha_i$.
\end{theorem*}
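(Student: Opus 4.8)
The plan is to analyze the gradient step $\bm{A}_g^{k+1} = \bm{A}_g^{k} - \eta\bm{G}_g$ one column at a time and then reassemble the group. For each $i \in g$ I would expand
\[
\lVert\bm{a}_i^{k+1} - \bm{a}_i^{\ast}\rVert_2^2 = \lVert\bm{a}_i^{k} - \bm{a}_i^{\ast}\rVert_2^2 - 2\eta\langle \bm{g}_{gi}, \bm{a}_i^{k} - \bm{a}_i^{\ast}\rangle + \eta^2\lVert\bm{g}_{gi}\rVert_2^2,
\]
and invoke the alignment inequality, which is exactly \cref{thrm:aligned} (applicable because of the assumed bound on $\lVert\bm{A}_g^T\bm{A}_g^{\ast}\rVert_F^2$), with $\kappa_i = \tau_g(2-\tau_g)p_g\alpha_i$ and $\nu_i = \xi_i = 1/(\tau_g(2-\tau_g)p_g\alpha_i)$. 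Substituting the lower bound on $2\langle \bm{g}_{gi}, \bm{a}_i^{k} - \bm{a}_i^{\ast}\rangle$ gives
\[
\lVert\bm{a}_i^{k+1} - \bm{a}_i^{\ast}\rVert_2^2 \le (1-\eta\kappa_i)\lVert\bm{a}_i^{k} - \bm{a}_i^{\ast}\rVert_2^2 - \eta(\nu_i - \eta)\lVert\bm{g}_{gi}\rVert_2^2 + \eta\xi_i\lVert\bm{v}_i\rVert_2^2 + \eta\varepsilon .
\]
The learning-rate hypothesis $\eta \le 1/(\tau_g(2-\tau_g)p_g\alpha_{\max}) \le \nu_i$ makes the gradient-norm term nonpositive, so it is discarded.

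Next I would substitute the bound on $\lVert\bm{v}_i\rVert_2^2$ supplied by \cref{thrm:aligned}, namely
\[
\lVert\bm{v}_i\rVert_2^2 \le 2\tau_g^2(2-\tau_g)^2 p_g^2(\omega_i\sqrt{d^2+1}+\delta)^2\lVert\bm{A}_g - \bm{A}_g^{\ast}\rVert_F^2 + O\big((\mu_B+\delta)^2\tfrac{\gamma^6}{\Gamma^4}\big),
\]
so that $\eta\xi_i$ times the leading term becomes $2\eta\tau_g(2-\tau_g)p_g(\omega_i\sqrt{d^2+1}+\delta)^2\alpha_i^{-1}\lVert\bm{A}_g - \bm{A}_g^{\ast}\rVert_F^2$. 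Summing the per-column inequality over the $d$ columns of $g$ and bounding $\alpha_i \ge \alpha_{\min}$, $\omega_i \le \omega_{\max}$ wherever they occur with the right sign collapses the contracting coefficients: $\sum_{i\in g}(1-\eta\kappa_i)\lVert\bm{a}_i^{k} - \bm{a}_i^{\ast}\rVert_2^2 \le (1-\eta\tau_g(2-\tau_g)p_g\alpha_{\min})\lVert\bm{A}_g^{k} - \bm{A}_g^{\ast}\rVert_F^2$, while the $\lVert\bm{v}_i\rVert_2^2$ contributions total at most $2\eta d\,\tau_g(2-\tau_g)p_g(\omega_{\max}\sqrt{d^2+1}+\delta)^2\alpha_{\min}^{-1}\lVert\bm{A}_g^{k} - \bm{A}_g^{\ast}\rVert_F^2$. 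Regrouping the coefficient of $\lVert\bm{A}_g^{k} - \bm{A}_g^{\ast}\rVert_F^2$ as $1-\rho$ with $\rho = \eta\tau_g(2-\tau_g)p_g(\alpha_{\min} - 2d(\omega_{\max}\sqrt{d^2+1}+\delta)^2/\alpha_{\min})$, and lumping $d\eta\varepsilon$ together with the $d$ residual $O((\mu_B+\delta)^2\gamma^6/\Gamma^4)$ pieces (each scaled by an $\eta\xi_i$ factor) into the additive term, yields the first claimed recursion. Note $\rho < 1$ is automatic from the learning-rate cap, and $\rho > 0$ is forced by the model assumptions, since closeness of $\bm{A}_g$ to $\bm{A}_g^{\ast}$ together with group orthonormality keeps $\alpha_{\min}$ near $1$ while $\omega_{\max}$ and $\delta$ are small.

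For the explicit $O(d\log^2 n/\Gamma^2)$ bound I would feed the support-recovery regime of \cref{lem:recov} ($\gamma \le \log n$, $\max(\mu_B,\delta) = O(1/\log n)$) and $p_g = \Theta(\gamma/\Gamma)$ into the additive error. Using $\varepsilon = O((\mu_B+\delta)^2\gamma^5/\Gamma^3)$ from \cref{thrm:aligned} and the cap $\eta = O(1/(\tau_g(2-\tau_g)p_g))$, the $1/p_g$ turns $\eta d\varepsilon$ into $O(d(\mu_B+\delta)^2\gamma^4/\Gamma^2) = O(d\log^2 n/\Gamma^2)$; the leftover $\eta\xi_i$-scaled $O((\mu_B+\delta)^2\gamma^6/\Gamma^4)$ terms carry a $1/p_g^2 = \Theta(\Gamma^2/\gamma^2)$ factor and land on the same order, so the overall additive error is $O(d\log^2 n/\Gamma^2)$. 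The step I expect to be the main obstacle is the middle one: controlling $\lVert\bm{v}_i\rVert_2$—which is itself proportional to $\lVert\bm{A}_g - \bm{A}_g^{\ast}\rVert_F$—so that it does not overwhelm the contraction and $\rho$ stays strictly positive; this is precisely where group orthonormality, the smallness of $\omega_{\max}$ and $\delta$, and careful bookkeeping of the $\tau_g$, $p_g$, $\alpha_i$ constants must all combine to reproduce exactly the stated $\rho$.
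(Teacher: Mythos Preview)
Your proposal is correct and follows essentially the same route as the paper: expand $\lVert\bm{a}_i^{k+1}-\bm{a}_i^{\ast}\rVert_2^2$ column-wise, plug in the alignment inequality from \cref{thrm:aligned}, use the learning-rate cap to drop the $\lVert\bm{g}_{gi}\rVert_2^2$ term, insert the $\lVert\bm{v}_i\rVert_2^2$ bound, sum over $i\in g$ with $\alpha_{\min}/\alpha_{\max}$ and $\omega_{\max}$ replacements, and finally feed in the \cref{lem:recov} regime to simplify the additive error. If anything you are slightly more careful than the paper in explicitly tracking the residual $O((\mu_B+\delta)^2\gamma^6/\Gamma^4)$ piece of $\lVert\bm{v}_i\rVert_2^2$ and verifying that the $\eta\xi_i$ scaling brings it to the same $O(d\log^2 n/\Gamma^2)$ order.
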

\begin{proof}
	The gradient updates are done as follows
	\begin{equation}
		\bm{A}_g^{k+1} = \bm{A}_g^{k} - \eta \nabla_{\bm{A}_g^k}\mathcal{L}(\bm{A}^{k}).
	\end{equation}
	The, considering the norm between the true weights and the weights at iteration $k+1$, and denoting $\alpha_{\min} = \min_{i\in g} \bm{a}_i^{\ast T}\bm{a}_i, \alpha_{\max} = \max_{i\in g} \bm{a}_i^{\ast T}\bm{a}_i$, we have
	\begin{align}
		\begin{split}
			\lVert\bm{A}_g^{k+1} - \bm{A}_g^{\ast}\rVert_F^2 =& \sum_{i\in g}\lVert\bm{a}_i^{k+1} - \bm{a}_i^{\ast}\rVert_2^2 = \sum_{i\in g}\left[\lVert\bm{a}_i^{k} - \bm{a}_i^{\ast}\rVert_2^2 - 2\eta (\bm{a}_i^k - \bm{a}_i^{\ast})^T\bm{g}_{gi} + \eta^2\lVert\bm{g}_{gi}\rVert_2\right]\\
			\leq& \lVert\bm{A}_g^{k} - \bm{A}_g^{\ast}\rVert_F^2 + \eta^2\lVert\bm{g}_g\rVert_F\\
			&-\eta\sum_{i\in g}\left[\tau_g(2-\tau_g)p_g \alpha_i \lVert\bm{a}_i^{k} -\bm{a}_i^{\ast}\rVert_2^2 + \frac{1}{\tau_g (2 - \tau_g) p_g \alpha_i}\lVert \bm{g}_{gi}\rVert_2^2 -\frac{1}{\tau_g (2 - \tau_g) p_g \alpha_i}\lVert \bm{v}_i\rVert_2^2\right]\\
			\leq&\lVert\bm{A}_g^{k} - \bm{A}_g^{\ast}\rVert_F^2 + \eta^2\lVert\bm{g}_g\rVert_F\\
			&-\eta\left[\tau_g(2-\tau_g)p_g \alpha_{\min} \lVert\bm{A}_g^{k} -\bm{A}_g^{\ast}\rVert_F^2 + \frac{1}{\tau_g (2 - \tau_g) p_g \alpha_{\max}}\lVert \bm{G}_{g}\rVert_F^2 -\frac{1}{\tau_g (2 - \tau_g) p_g \alpha_{\min}}\lVert \bm{v}\rVert_F^2\right]\\
			\leq& [1-\eta\tau_g(2-\tau_g)p_g(\alpha_{\min} - \tfrac{2d(\omega_{\max}\sqrt{d^2 +1} +\delta)^2}{\alpha_{\min}})]\lVert\bm{A}_g^{k} - \bm{A}_g^{\ast}\rVert_F^2\\
			& -\eta(\frac{1}{\tau_g(2-\tau_g)p_g\alpha_{\max}} -\eta)\lVert\bm{G}_g\rVert_F^2 + O(\eta d(\mu_B + \delta)^2\frac{\gamma^5}{\Gamma^3})\\
			\leq& [1-\eta\tau_g(2-\tau_g)p_g(\alpha_{\min} - \tfrac{2d(\omega_{\max}\sqrt{d^2 +1} +\delta)^2}{\alpha_{\min}})]\lVert\bm{A}_g^{k} - \bm{A}_g^{\ast}\rVert_F^2 + O(\tfrac{1}{p_g}d(\mu_B + \delta)^2\frac{\gamma^5}{\Gamma^3})\\
			\leq& (1-\rho)\lVert\bm{A}_g^{k} - \bm{A}_g^{\ast}\rVert_F^2 + O(d(\mu_B + \delta)^2\frac{\gamma^4}{\Gamma^2}),
		\end{split}
	\end{align}
	where $\eta \leq \frac{1}{\tau_g(2-\tau_g)p_g\alpha_{\max}}$ and $\rho = \eta\tau_g(2-\tau_g)p_g(\alpha_{\min} - \tfrac{2d(\omega_{\max}\sqrt{d^2 +1} +\delta)^2}{\alpha_{\min}})$ Finally, \textbf{Lemma 1} states that the support is correctly recovered if $\gamma \leq \log n$ and $\max(\mu_B, \delta) = O(\frac{1}{\log n})$. Then, the error term becomes
	\begin{align}
		\begin{split}
			O(d(\mu_B + \delta)^2\frac{\gamma^4}{\Gamma^2}) = O(d \frac{\log^2n}{\Gamma^2}),
		\end{split}
	\end{align}
	which concludes the proof.
\end{proof}
\end{document}